\renewcommand{\leq}{\leqslant}
\renewcommand{\geq}{\geqslant}
  \theoremstyle{plain}
  \newtheorem{Theorem}{\protect\theoremname}
  \theoremstyle{plain}
  \newtheorem*{Theorem*}{\protect\theoremname}
  \theoremstyle{plain}
  \newtheorem{proposition}{\protect\propositionname}
  \theoremstyle{plain}
  \newtheorem*{prop*}{\protect\propositionname}
  \theoremstyle{plain}
  \newtheorem{lemma}{\protect\lemmaname}
   \theoremstyle{plain}
  \newtheorem*{lemma*}{\protect\lemmaname}  
  \theoremstyle{plain}
  \newtheorem{definition}{\protect\definitionname}
  \theoremstyle{plain}
  \newtheorem{corollary}{\protect\corollaryname}
  \theoremstyle{plain}
  \newtheorem{example}{\protect\examplename}
 \theoremstyle{plain}
\newtheorem{remark}{Remark}
\theoremstyle{plain}
\newtheorem{fact}{\protect\factname}
\theoremstyle{plain}
\newtheorem{assumption}{\protect\assumptionname}
\newtheorem*{assumption*}{\protect\assumptionname}
\theoremstyle{plain}
 \theoremstyle{plain}
\newtheorem*{model*}{Model}
\newcommand{\ind}{\mathbb{1}}
\newcommand{\E}{\mathbb{E}}
\newcommand{\I}{\mathbb{I}}
\newcommand{\ent}{\mathbb{H}}
\newcommand{\Prob}{\mathbb{P}}
\newcommand{\Xc}{\mathcal{X}}
\newcommand{\Dc}{\mathcal{D}}
\providecommand{\assumptionname}{Assumption}
\providecommand{\definitionname}{Definition}
\providecommand{\lemmaname}{Lemma}
\providecommand{\propositionname}{Proposition}
\providecommand{\corollaryname}{Corollary}
\providecommand{\examplename}{Example}
\providecommand{\factname}{Fact}
\providecommand{\conditionname}{Condition}
\providecommand{\theoremname}{Theorem}
\DeclareMathOperator*{\argmax}{arg\,max}  
\begin{document}

\title{Adaptive Experimentation in the Presence of \\Exogenous Nonstationary Variation}
\author{Chao Qin and Daniel Russo}
\affil{Columbia University}
\maketitle
\begin{abstract}
	We investigate experiments that are designed to select a treatment arm for population deployment. Multi-armed bandit algorithms can enhance efficiency by dynamically allocating measurement effort towards higher performing arms based on observed feedback. However, such dynamics can result in brittle behavior in the face of nonstationary exogenous factors influencing arms’ performance during the experiment. To counter this, we propose deconfounded Thompson sampling (DTS), a more robust variant of the prominent Thompson sampling algorithm. As observations accumulate, DTS projects the population-level performance of an arm while controlling for the context within which observed treatment decisions were made. Contexts here might capture a comprehensible source of variation, such as the country of a treated individual, or simply record the time of treatment. We provide bounds on both within-experiment and post-experiment regret of DTS, illustrating its resilience to exogenous variation and the delicate balance it strikes between exploration and exploitation. Our proofs leverage inverse propensity weights to analyze the evolution of the posterior distribution, a departure from established methods in the literature.
	Hinting that new understanding is indeed necessary, we show that a deconfounded variant of the popular upper confidence bound algorithm can fail completely. 
\end{abstract}

\section{Introduction}\label{sec:intro}
Multi-armed bandit (MAB) algorithms are crafted to enhance efficiency beyond what classical randomized controlled trials (RCTs) offer. In contrast to RCTs which maintain a fixed probability for assigning treatment arms throughout an experiment, MAB algorithms dynamically redistribute measurement effort towards higher performing arms based on observed feedback. Such strategies not only reduce experimental cost --– since inferior arms are played less frequently – but variants of MAB algorithms can increase statistical power in identifying the most effective arm \citep{bubeck2012best, kaufmann2016complexity,russo2020simple}. These efficiency advantages have motivated widespread adoption of MAB algorithms for selecting and personalizing digital content.  

Traditional application of MAB algorithms, as commonly advised in academic texts \citep{lattimore2020bandit} and industry-centric blogs \citep{amadio2020bandits}, presupposes that rewards linked with a particular arm selection are independently and identically distributed (i.i.d.) over time. However, this assumption often fails in real-world scenarios. 

To elucidate, consider a hypothetical\footnote{Shortcuts are a real product feature that enables users to conveniently access their favorite or recently played content. The discussion, however, does not necessarily mirror the specifics of the product or the available data. This example is presented solely for illustrative purposes.} scenario. In this scenario, the audio streaming platform Spotify aims to optimize the shortcuts displayed in Figure \ref{fig:spotify}. Imagine an experiment lasting one week. Each time period in the MAB model might correspond to a particular user who just opened the app, treatment arms might be slight alterations in the user interface of the shortcuts, and a positive 'reward' could signify a user locating an item to listen to without navigating away from the home page. The i.i.d.~assumption implies that a random sample of users who open the app on Monday morning will exhibit behavior similar to another random sample of users who open the app later in the week, such as on Friday evening. Yet, substantial variation in user behavior can occur over time.

\begin{wrapfigure}[16]{r}{0.3\textwidth}
	\centering
	\includegraphics[width=.75\linewidth]{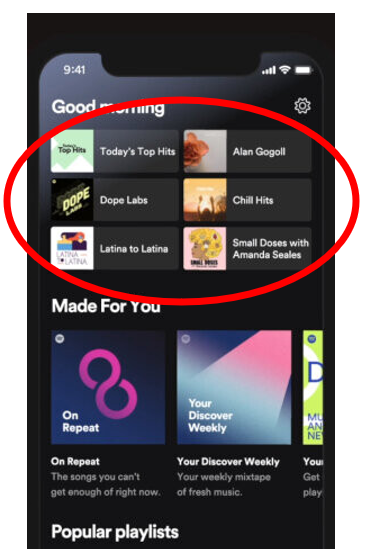}
	\caption{Shortcuts}
	\label{fig:spotify}
\end{wrapfigure}
RCTs are designed to be robust to exogenous variation like this. Since the probability of an arm being selected remains constant throughout the experiment, averaging the reward produced by an arm provides an unbiased estimate of the performance it would have yielded if it were deployed consistently to all users throughout the time period of the experiment. By varying arm selection probabilities over time, MAB algorithms lose this inherent resilience to nonstationary patterns. 
Of course, by abandoning adaptive arm selection, RCTs lose the efficiency advantages offered by MAB algorithms.

\subsection{An overview of the paper} 
We propose a new model in which nonstationary exogenous factors influence treatment arms’ performance during an experiment. Adapting the prominent Thompson sampling  algorithm \citep{thompson1933likelihood} to this model yields a new, more robust, variant which we call deconfounded Thompson sampling. 
We illustrate the algorithm's performance through simulations and conduct substantive theoretical analysis. 
To help the reader digest the full paper, this section provides an abbreviated overview of our model, proposed algorithm, and results.

\subsubsection{A new model of bandit experiments}\label{subsec:model_intro}

Among a set of $k$ predefined treatment arms, indexed as $[k]\triangleq \{1,\ldots,k\}$, a decision-maker (DM) aims to select an arm $I_{\rm post} \in [k]$ to deploy to the population at the end of the experiment. The experiment proceeds sequentially across $T$ rounds, 
thought of as representing interactions with distinct individuals or `users.' 
In each round $t\in [T]$, the DM selects a treatment arm $I_t \in [k]$ and observes a noisy reward   $R_{t,I_t}\in \mathbb{R}$ signaling the quality of the outcome. 
The DM also observes a vector of exogenous factors $X_t \in \mathbb{R}^d$ which influence rewards; these might encode things like features of the user, the weather, or timing of the interaction. 
Temporal patterns in factors $X_1, \ldots, X_T$ drive temporal patterns in rewards. 

In keeping with the tradition of the literature, we call these exogenous factors ``contexts''. 
However, unlike contextual bandit models \citep{li2010contextual} in which contexts are used to segment or personalize decision-rules, here they are used to control for exogenous sources of variation in experiments that seek to deploy a single treatment arm to the population. 
This reflects common experimental practice. Consider the representative example displayed in Figure \ref{fig:spotify},  where the goal is to establish a consistent user interface rather than one that undergoes erratic changes as a user's context (e.g. the time of day, their recent interactions) changes. Refer to Appendix \ref{sec:policy_learning} for a more substantive discussion and a generalization of our formulation that accommodates personalization. 

A Bayesian linear model allows the DM to draw inferences about the population-level reward an arm generates as observations are gathered. 
The mean reward signal of arm $i\in[k]$ in context $x\in \mathbb{R}^d$ is  $r_{\theta}(i, x) = \langle \theta^{(i)}, x \rangle$, where 
the parameter vector $\theta=(\theta^{(1)},\ldots, \theta^{(k)}) \in \mathbb{R}^{k\cdot d}$ is drawn from a multi-variate Gaussian prior, denoted  $\theta \sim N(\mu_1, \Sigma_1)$ where $\mu_1\in\mathbb{R}^{k\cdot d}$ and $\Sigma_1\in\mathbb{R}^{kd\times kd}$. The reward realized at time $t$ is
\begin{equation}
	R_{t,I_t} = r_{\theta}(I_t, X_t) + W_{t,I_t},
\end{equation}
where $W_{t,i} \sim N(0,\sigma^2)$ is independent Gaussian noise. In modeling reward noise as independent, we are implicitly assuming that any exogenous nonstationarity is ``explained'' by the contexts. The quality of the deployment arm $I_{\rm post}$ is assessed through its population-level reward $r_{\theta}(I_{\rm post})$. We model the population-level reward of an arm,
\[
r_{\theta}(i) = \E_{x\sim \Dc_{\rm pop}} \left[ r_{\theta}(i, x) \right] = \langle \theta^{(i)} \, , \, x_{\rm pop}\rangle \quad \text{where} \quad x_{\rm pop} = \E_{x\sim \Dc_{\rm pop}}[x], 
\]
as the average reward over contexts drawn from a pre-defined population distribution $\Dc_{\rm pop}$.
Because the decision-maker knows $x_{\rm pop}$ and the contexts are observable, standard calculations allow one to compute the (multi-variate Gaussian) posterior distribution of the population-level rewards $(r_{\theta}(1),\ldots, r_{\theta}(k))$ as observations accumulate. 

Why model $x_{\rm pop}$ as known to the DM? Continuing the example in Figure \ref{fig:spotify}, we imagine the company might form an empirical population distribution by subsampling from the features of users who visit the home page over e.g. the month prior to the experiment. Controlled experiments are usually conducted to evaluate differences in how treatment arms perform; using them to estimate passively observable quantities is wasteful. 
 
\subsubsection{Models of contextual variation subsume other models of nonstationarity}
This turns out to be a surprisingly rich modeling framework. 
The term `context' evokes a comprehensible source of exogenous variation. However, as illustrated in the next example, one can also model bandit experiments with nonstationary rewards whose pattern, seemingly, cannot be explained by any observable factor. 
We treat this as a special case of our formulation by taking the time period at which an arm was selected to be an observable context.
\begin{example}[Modeling latent exogenous variation with contexts]\label{ex:latent_confounders}
	Take $d=T$ and assume $X_{1:T}$ is deterministic with the $t^{\rm th}$ context equal to the $t^{\rm th}$ standard basis vector: $X_t = e_t \in \mathbb{R}^T$. Let $\Dc_{\rm pop}$ be the uniform distribution over $\{e_1, \ldots, e_T\}$. In this setting, the reward at time $t$,  $R_{t,I_t} = \theta_{t}^{(I_t)}+W_{t,I_t}$  is a noisy sample of $\theta^{(I_t)}_{t}$ and the experimenter's goal is to select the arm
	\begin{equation}\label{eq:best_arm_in_hindsight} 
		I^* \in \argmax_{i \in [k]} r_{\theta}(i)  \qquad  \text{where} \qquad r_{\theta}(i) = \frac{1}{T} \sum_{t=1}^{T} \theta^{(i)}_{t}, 
	\end{equation}
	which has highest average reward throughout the experiment.\footnote{This objective is implicit in the way that average treatment effects are estimated in A/B tests, and we choose to mimic this standard practice in Example \ref{ex:latent_confounders}. 
		The rationale behind this practice is subtle, however. What does it mean to optimize a backward-looking objective when nonstationarity is a concern?  Our partial answer is that the objective in~\eqref{eq:best_arm_in_hindsight} reflects a belief that an arm that outperformed others over a substantial time-span, like a couple of weeks, is likely to continue its strong performance. This belief is consistent with concerns about nonstationarity in other forms, like exogenous time trends that shift all arm's mean rewards (see Figure~\ref{fig:nonstationary}) or more cyclic patterns, where the performance of an arm depends on the time of day.} 
	
	The prior  $\theta \sim N(\mu_1, \Sigma_1)$ allows the decision-maker to draw inferences based on observations so far. 
	Note that a vanilla bandit experiment is an extreme, degenerate, special case, where the rank of $\Sigma_1$ is $k$ and $\theta_{1}^{(i)}=\cdots = \theta_{T}^{(i)}$ almost surely.  
	Figure \ref{fig:nonstationary} represents a structured prior on $\theta$ that allows the decision-maker to guard against certain nonstationarity patterns while still allowing
	them to use past observations to forecast arms' relative performance. 
\end{example}

\begin{figure}
	\centering
	\begin{subfigure}{.45\textwidth}
		\centering
		\includegraphics[width=1\linewidth]{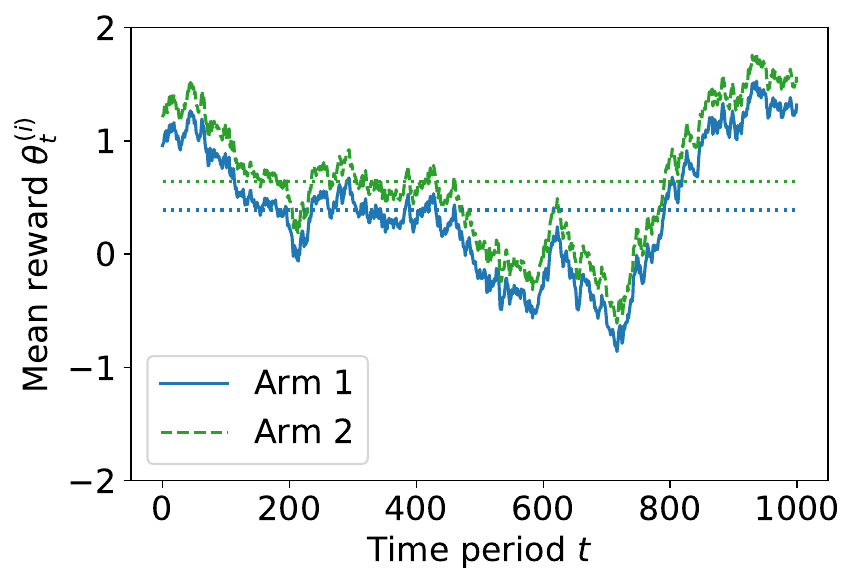}
		\caption{bandwidth $\kappa=1,000$}
		\label{fig:nonstationary1}
	\end{subfigure}
	\begin{subfigure}{.45\textwidth}
		\centering
		\includegraphics[width=1\linewidth]{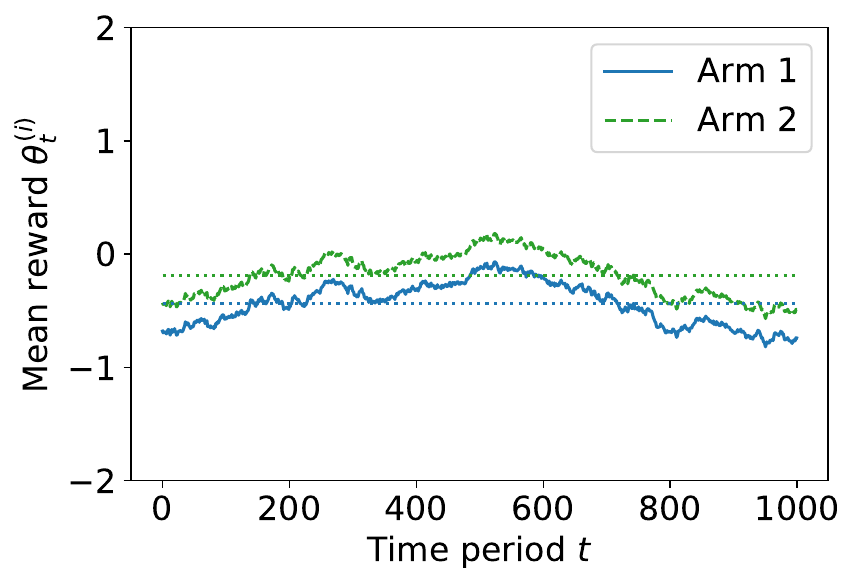}
		\caption{bandwidth $\kappa=10,000$}
		\label{fig:nonstationary2}
	\end{subfigure}
	\caption{Two draws of $\{\theta_{t}^{(i)}\}_{t\in [1000], i\in[2]}$in a special case of Example \ref{ex:latent_confounders} in which parameters follow the latent variable model $\theta_{t}^{(i)} = \theta^{(i)}_{0} + \epsilon_t$. The exogenous process $\{\epsilon_t\}$ has correlation ${\rm corr}(\epsilon_t, \epsilon_{\tilde{t}}) = \exp\{ - |t-\tilde{t}| / \kappa \}$. The horizontal lines denote time averages and the vertical distance between them is $|r_{\theta}(1) - r_{\theta}(2)|$.}
	\label{fig:nonstationary}
\end{figure}

Example \ref{ex:time_zones}, presented in the appendix, illustrates a setting in which contexts represent more comprehensible sources of variation.  
There, a context  indicates a user's country of some app.
We assume this is an observable user feature, 
so the platform can calculate population average weights $x_{\rm pop}$ by looking up the mix of countries among users who opened the app over a long period prior to the start of the experiment. 
Notice that, due to timezone differences, the mix of countries among users arriving during a particular hour within the experiment may not reflect the population proportions.
In our model, the DM can `control for' this source of exogenous variation, which might otherwise confound their inferences.

\subsubsection{A new algorithm: deconfounded Thompson sampling}
 We propose deconfounded Thompson sampling (DTS). It is a more robust variant of the Thompson sampling (TS) algorithm, which is popular  in both academic and industrial contexts \citep{chapelle2011empirical,scott2010modern,russo2018tutorial}.
As observations are gathered, it projects the population-level performance of an arm while controlling for the contexts in which past decisions are made. 
The probability it selects an arm in a given period during the experiment corresponds to the posterior probability of that arm being optimal for population deployment.

To define DTS precisely,  observe that the optimal deployment arm $I^* = \argmax_{i \in [k]} r_{\theta}(i)$ is a random variable, due to its dependence on the uncertain parameter $\theta$.
At any time period $t$ within the experiment, DTS randomly samples an arm $I_t$ to measure with sampling probabilities 
\[ 
\Prob(I_t = i \mid H_t) = \Prob(I^* = i \mid H_t),
\]
where $H_t$ is the full history of  rewards and contexts observable so far. 
Sampling probabilities do not depend on the current context. 
As with standard TS, there is a very simple way to implement this sampling step;  
Algorithm \ref{algo:DTS}, presented in Section \ref{sec:algo}, calculates the posterior mean $\mu_t$ and covariance~$\Sigma_t$ of $\theta$, samples $\tilde{\theta} \sim N(\mu_t, \Sigma_t)$ and picks $I_t \in \argmax_{i \in [k]} r_{\tilde{\theta}}(i)$.  At the end of the experiment, DTS selects the arm  
$ I_{\rm post} \in \argmax_{i \in [k]} \E[ r_{\theta}(i) \mid H_{\rm post}], $  
where $H_{\rm post}$ consists of all observations available at the end of the experiment.

\subsubsection{Numerical illustration: a teaser}

Figure \ref{fig:teaser} is a teaser of a numerical illustration of DTS we provide in Section \ref{sec:numerical}. 
It simulates bandit algorithms applied to a hypothetical week-long experiment, conducted to select an arm to deploy across future weeks. 
Day-of-week effects influence reward observations during the experiment.  
The experiment involves 700 time periods (representing distinct users); the first 100 time periods occur during the context `Monday', the next 100 occur during `Tuesday' and so on. 
Focusing solely on DTS,  Figure \ref{fig:teaser} captures a delicate balance it strikes between exploration and exploitation. By the end of the week, it has explored enough to deploy a near-optimal arm, reflected in its low post-experiment regret $\E[r_{\theta}(I^*) - r_{\theta}(I_{\rm post})]$. 
But it reduces the cost of experimentation by redistributing measurement effort to higher performing arms during the experiment, reflected in its low cumulative within-experiment regret $\E\left[\sum_{\ell=1}^{t} (r_{\theta}(I^*) - r_{\theta}(I_\ell))\right]$.   Section \ref{sec:numerical} also plots two related performance metrics.
\begin{figure}
		\centering
		\includegraphics[width=1\linewidth]{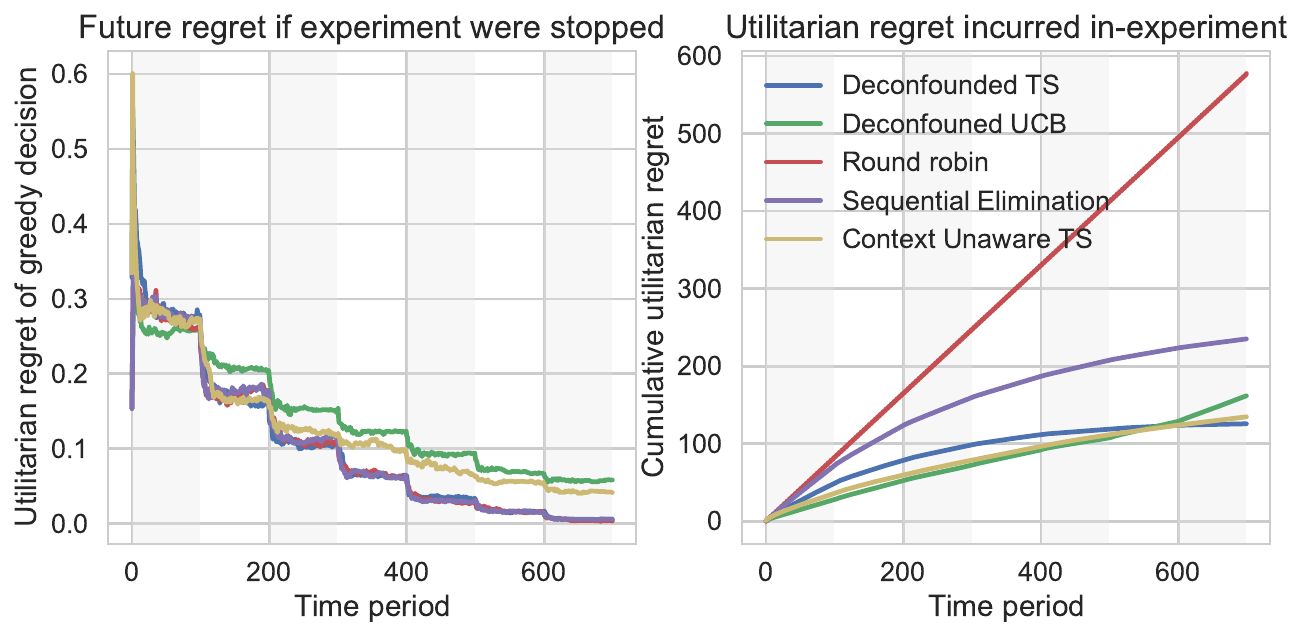} 
	\caption{A teaser illustrating DTS, labeled here as `TS', in a simulated experiment in which day-of-week effects impact arm-rewards. }
	\label{fig:teaser}
\end{figure}

The algorithm labeled `round-robin', is an algorithm that operates like an RCT, sampling arms uniformly throughout the experiment. The algorithm labeled `sequential elimination' brings round-robin closer to DTS. It removes arms from consideration if their posterior probability of being optimal drops below a small threshold.  Round robin, sequential elimination, and DTS all appear to be effective at deploying a (nearly) optimal treatment arm at the end of the experiment. The primary advantage of DTS is its ability to reduce regret incurred during the experiment. 

Other natural bandit algorithms fare poorly in the experiment, failing to gather the information required to select a good arm by the end of the experiment.
One of these is `context unaware TS' --- a standard implementation of TS which (incorrectly) assumes rewards are i.i.d.
Another is deconfounded UCB,  a variant of the upper confidence bound algorithms which dominate much of the literature on exploration in multi-armed bandit problems and reinforcement learning \citep{auer2002finite, auer2008near, rusmevichientong2010linearly}. 
Whereas DTS chooses an arm by maximizing a posterior sample from arms's population-level rewards, deconfounded UCB maximizes and upper confidence bound on the same quantity. 
Appendix \ref{sec:failure} provides formal counterexamples for these two algorithms and also for a third --- a variant of Thompson sampling used for contextual linear bandit problems.

\subsubsection{Theoretical analysis}
DTS is a relatively straightforward adaptation of Thompson sampling to our model. 
Perhaps surprisingly, rigorously understanding its performance required us to develop a completely original approach to analyzing bandit algorithms.
Our proofs, distinct from others in the literature, use inverse propensity weights to analyze the evolution of the posterior distribution.
The theorem statement below is also distinctive --- depending on a what we call ``attainable precision'' rather the length of time horizon. 
At a high-level, the challenge is that learning dynamics in our model markedly deviate from those in i.i.d.~bandit models. Unlike i.i.d.~environments, where the DM can choose to rapidly resolve uncertainty through exploration, our model introduces an unavoidable delay in this process as the DM awaits the  occurrence of relevant contexts.

When specialized to models with i.i.d.~rewards (i.e. no contextual variation), DTS is just standard TS and the theorem below provides per-period regret bounds on the order of $\sigma \sqrt{k/t}$, recovering standard results in the literature. 
More generally, the bound depends on what we call attainable precision --- the inverse posterior variance of an arms population level variance \emph{assuming the DM chose to exclusively measure that arm in all contexts that have occurred so far}. Precision  measures how much uncertainty is resolvable if the DM explored as aggressively as possible. 
A subtle element of this result is that DTS does not explore as aggressively as possible: instead the regret bound in \eqref{eq:intro_within_exp_regret} and its performance in Figure \ref{fig:teaser} suggest it aggressively `exploits' past observations to select good arms.  
\begin{Theorem*}[Informal version of our main result when there is no observation delay]
	Define 
	\[
		{\rm Precision}(X_{1 : t}) \triangleq  \min_{i\in [k]} \,  \frac{1}{{\rm Var}\left(  r_{\theta}( i) \mid   (X_1, R_{1,i}, \ldots, X_t, R_{t, i}) \right)}. 
	\]
	Fix any context sequence $x_{1:T}=(x_1, \ldots, x_T)$ with $\|x_t\|_2 \leq 1$.  Under DTS, within-experiment regret at any time $t\in [T]$ is bounded as 
	\begin{equation}\label{eq:intro_within_exp_regret}
	\E[r_{\theta}(I^*) - r_{\theta}(I_t) \mid X_{1:t} = x_{1:T}]  \leq  \tilde{O}\left( \sqrt{  \frac{k}{  {\rm Precision}(x_{1 : (t-1)})} }\right)
	\end{equation}
	and post-experiment regret is bounded as 
	\[
	\E[r_{\theta}(I^*) - r_{\theta}(I_{\rm post}) \mid X_{1:T}=x_{1:T}] \leq  \tilde{O}\left( \sqrt{  \frac{k}{  {\rm Precision}(x_{1 : T})} }\right).
	\]
\end{Theorem*} 
Our full theoretical results extend the above theorem in substantial ways. First, they accommodate settings in which the DM only observes rewards within the experiment after some delay. 
Second, the appendix provides additional results that are more similar to past literature:  Proposition \ref{prop:contextual} bounds what we term the total ``within-experiment contextual regret'' of DTS and Proposition~\ref{prop:contextual_generalized} extends the bound to a extension of DTS that aims to learn personalized policies.

\subsection{Connections to the literature}

\paragraph{Learning with resilience to exogenous nonstationarity.} 
 Two approaches, Thompson sampling and upper confidence bound algorithms, dominate much of the literature on multi-armed bandit algorithms.
 However, we are not aware of any previous papers examining their ability to identify an effective treatment arm despite exogenous nonstationary variation. 
 
 A large literature on nonstochastic bandit problems considers a related goal: they aim to design procedures that  earn rewards within the experiment which are competitive with that of the best stable decision, even when reward sequences are not i.i.d.  
 This literature was launched by \cite{auer2002nonstochastic} and is reviewed in \cite{lattimore2020bandit}. Our work is a substantial departure, making precise comparisons difficult. Our Bayesian model emphasizes the role of contextual variation as a driver of nonstationarity and prioritizes the quality of post-experiment decision-making.
 The nonstochastic MAB literature instead assumes rewards are picked by an intelligent adaptive adversary and aims to attain low within-experiment regret despite this fact. 
 A few papers \citep{abbasi2018best, jamieson2016non} which study the problem of nonstochastic best-arm selection are more similar in (implicitly) considering post-experiment performance, but still differ in how nonstationarity is modeled.  
  Algorithm design in the adversarial bandit literature is usually tightly coupled to worst-case theoretical bounds, typically resulting in algorithms which are much more conservative than Thompson sampling.  
  Appendix \ref{sec:adversarial} provides a more precise discussion of nonstochastic bandit models.

 A related paper by \cite{farias2022synthetically} was posted online concurrently with our paper. Their model is most similar to Example \ref{ex:latent_confounders} and the reward-model in Figure \ref{fig:nonstationary}, in that an exogenous time trend additively shifts all arm's rewards. They assume access to observations of non-experimental units and use synthetic control techniques to estimate and control for the exogenous trend. 
 One technical difference is that our results
 \ref{thm:main_result} has (essentially) no dependence on the dimension of the context space, suggesting that our algorithms use contexts to deconfound with minimal cost. It is an open question whether such guarantees are possible in the setting of \cite{farias2022synthetically}.

 \paragraph{Adapting decisions to respond to exogenous variation.} 
 Our focus on reaching a stable decision despite non i.i.d.~exogenous variation distinguishes this work from most of the literature on decision-making in nonstationary environments. Works like \cite{mellor13, besbes15, cheung19, trovo20, yadkori22} and \cite{suk22} focus on adapting decision-making rules as the environment evolves. These papers may provide a natural model for a recommendation system where items, which represent the arms, may lose relevance over time, requiring an adaptable system. 
 Our model is particularly well-suited to scenarios such as the A/B testing problem described previously.
 

Similarly, the focus on reaching a stable decision distinguishes our work from a large literature which emphasizes how decision-making can respond to evolving context. For instance, in standard linear contextual bandit models \citep{li2010contextual}, the DM aims to converge on a decision-rule mapping contexts to actions that maximizes expected reward accrued in each specific context. In our model, we use contextual observations to draw reliable inferences from past reward observations, rather than as input for a context-reactive decision rule.
Although our model reflects a common experimental practice, it is a departure from much of the bandit literature. As a result, we provide a thorough discussion in Appendix \ref{sec:policy_learning}. That section includes a generalization of DTS for learning personalized decision-rules. Appendix \ref{sec:failure} establishes that, without modification, contextual bandit algorithms can fail for our objective. 

\paragraph{Within-experiment and post-experiment decision quality.}
Our paper is somewhat atypical in considering both within-experiment and post-experiment decision quality. 
In one of the the most classical formulations of a multi-armed bandit, due to \cite{lai1985asymptotically} one aims to minimize exploration costs while, in the long-run, almost always choosing an optimal action. 
There is no notion of post-experiment decisions, and the sole performance measure is what we call within-experiment regret. 
Another segment of the literature, focuses solely on post-experiment performance. Papers in this literature go by a variety of names, including pure-exploration in MABs \citep{bubeck2009pure}, best-arm identification \citep{kaufmann2016complexity}, or ranking and selection \citep{kim2006selecting}.

In models with i.i.d.~reward observations, what we call post-experiment regret is widely studied. It is often called ``simple regret'' \citep{bubeck2009pure} or ``expected opportunity cost''  \citep{frazier2008knowledge}.  These differ from another common performance metric, which considers only the probability a suboptimal arm is selected, because it more severely penalizes selection of very low quality arms. 
Studying combined objectives is quite natural. See the rich decision-theoretic model of clinical trials in \cite{chick2021bayesian}, for example. 
Rather than combine within-experiment and post-experiment regret into a single coherent objective function, we treat DTS as a heuristic that does not perfectly optimize any goal. We study its performance according to both regret measures.
Other papers that study both within- and post- experiment decision quality include \cite{degenne19a, caria2020adaptive, athey2022contextual, krishnamurthy2023proportional} and \cite{zhong2023achieving}.

\paragraph{Learning with resilience to delayed reward observations.}
In cases with no contextual variation, DTS corresponds to standard Thompson sampling. Even then, Theorem \ref{thm:main_result} is notable in providing guarantees when reward observations are subject to delay. Our bound on post-experiment regret in the second part of Theorem \ref{thm:main_result} permits delay in observing rewards as long as the experiment itself, paralleling a situation where all arm pulls must be pre-determined at the experiment's outset. 
Previous work by \cite{kandasamy2018parallelised} provided guarantees for a Thompson sampling which allocates a batch of arm selections at once; however, their performance guarantees degrade with increasing batch size.
A related preprint by \cite{wu2022thompson} was posted online concurrently with our paper, showing that vanilla Thompson sampling outperforms many algorithms designed specifically to address problems with delayed rewards. The first part of Theorem \ref{thm:main_result}, which bounds within-experiment regret, is different from and complementary to their theoretical bounds.
Beyond results on Thompson sampling, a number of MAB papers establish theoretical bounds on regret when observations are subject to delay. See for instance \cite{dudik2011efficient, joulani2013online, zhou2019learning} and references therein.

\section{Formal problem formulation allowing for observation delay}\label{sec:formulation}

We provide a complete problem formulation that is more formal than the one contained in Subsection \ref{subsec:model_intro}.
One substantive generalization is that we allow for a reward observation delay of $L\geq 1$ periods. 
This means that the arm selection at time $t$ must be based on rewards associated with arms played more than $L$ periods earlier, which constrains adaptivity within an experiment. Nevertheless, the post-experiment arm deployment decision can still incorporate the full experiment results $(I_1, R_{1,I_1},\ldots, I_T, R_{T, I_T})$. That is, we imagine that the DM waits for reward realizations before population deployment. 

\paragraph{Discussion of modeling choices.}
 The presentation here is mathematically precise but omits discussion of subtle modeling choices. Some of these modeling choices were already discussed briefly in Subsection \ref{subsec:model_intro}. The first sections of the appendix
 provide more detailed comparisons to the literature; see Appendix \ref{sec:policy_learning} for a discussion of connections to contextual bandit models and Appendix \ref{sec:adversarial} for a discussion of adversarial nonstationary bandit models . The reader may choose to skip to those section after reading the formulation, or may proceed directly to the main results.  To understand the flexibility of this abstract modeling framework, one might look to various examples we present; see Example \ref{ex:latent_confounders} in Section \ref{sec:intro}, Example \ref{ex:day_of_week_intro} in Section \ref{sec:numerical} and Examples \ref{ex:time_zones} and \ref{ex:combined} in Appendix~\ref{sec:more_examples}. 

\paragraph{Mathematical notation.} For an integer $k$, we write $[k] = \{1,\ldots, k\}$. For a sequence $x_1, x_2, \ldots$, we use the ``Matlab style'' indexing notation $x_{m:n} = (x_m, \ldots, x_n)$ to refer to sub-sequences. All vectors in this paper are viewed as column vectors.  We use $\langle x, y\rangle =x^\top y$ to denote the standard inner product between two vectors. For three random variables $X,Y$ and $Z$, the notation $X\perp Y$ means that $X$ and $Y$ are independent and $X\perp Y \mid Z$ means they are independent conditioned on $Z$.

\paragraph{Our model.}
The DM would like to deploy the utilitarian optimal arm $I^* \in \argmax_{i \in [k]} r_{\theta}(i)$ in the population, where $r_{\theta}(i)$ denotes the population average reward of arm $i$. 
 We model the population average reward as the average over heterogeneous conditional average rewards among contexts drawn from some population distribution:
\begin{equation}\label{eq:pop_avg_reward}
	r_{\theta}(i) = \E_{x\sim \Dc_{\rm pop}} \left[  r_{\theta}(i,x) \right] = \langle \theta^{(i)} \, ,\, x_{\rm pop}\rangle,
\end{equation}
where $\Dc_{\rm pop}$ is a distribution over $d$ dimensional context vectors, $x_{\rm pop} = \E_{x\sim \Dc_{\rm pop}}\left[ x\right]\in \mathbb{R}^d$ is the mean context vector, $r_{\theta}(i, x) = \langle \theta^{(i)}, x \rangle$ is a linear model governing how mean-rewards vary across contexts.
As discussed in the introduction, we assume that the DM knows $x_{\rm pop}$. 
But the DM is uncertain about the parameter $\theta = (\theta^{(1)}, \ldots, \theta^{(k)}) \in \mathbb{R}^{k\cdot d}$, and knows only that it is drawn from a multivariate Gaussian prior, denoted $\theta \sim N(\mu_1, \Sigma_1)$ where $\mu_1\in\mathbb{R}^{k\cdot d}$ and $\Sigma_1\in\mathbb{R}^{kd\times kd}$.

To resolve uncertainty, the DM conducts a $T$ period experiment.  In any period $t \in [T]$ during the experiment, the DM observes a context $X_t \in \mathbb{R}^d$ and chooses an arm $I_t \in [k]$. 
After a delay of $L\geq 1$ periods, they observe a reward $R_{t,I_t}$
associated with the selected arm. 
Formally, the \emph{potential reward} of arm $i$ at time $t$ is 
\begin{equation}
	R_{t,i} = r_{\theta}(i,X_t) + W_{t,i},
\end{equation}
where  $W_{t,i} \sim N(0, \sigma^2)$ is i.i.d.~Gaussian noise that is assumed to be jointly independent of $\theta$, the contexts $X_{1:t}$ and the decisions $I_{1:t}$.  

The sequence of contexts $X_{1:T}$ within the experiment is drawn from a distribution $\Dc_{\rm exp}$ over $\Xc^T$, where $\Xc  \subset \{x  \in \mathbb{R}^d : \|x\|_2 \leq 1     \}$ is a subset of context vectors with bounded norm;
an important special case is where $\Dc_{\rm exp}$ is a point mass on a particular sequence $x_{1:T}$. 
The algorithms that we study do not require prior knowledge of $\Dc_{\rm exp}$. 
We assume the draw of $X_{1:T}$ is independent of $\theta$, so that the DM cannot resolve their uncertainty by passively observing contexts, and assume that $X_{(t+1):T}\perp I_{1:t} \mid X_{1:t}$, so that the DM cannot purposefully influence future contexts through their arm selection.

The DM employs a \emph{policy} $\pi= (\pi_1, \ldots, \pi_T, \pi_{\rm post})$. 
To treat randomized policies, we will allow the policy to take as input random seeds $(\xi_1, \ldots, \xi_T, \xi_{\rm post})$, which are drawn i.i.d., and are independent from the context sequence, potential rewards, and $\theta$.
For a period $t\in [T]$ within the experiment, $\pi_t$ determines an arm to sample as
\[ 
I_t = \pi_t(\underbrace{H_t}_{\text{history}} \,,\, \underbrace{X_t}_{\text{context}}\,,\, \underbrace{\xi_t}_{\text{seed}}) \quad \text{where} \quad H_t \triangleq \left(X_{1:(t-1)}, I_{1:(t-1)},  R_{1:(t-L)} \right);
\] 
for notational convenience, we write $R_\ell\triangleq R_{\ell,I_\ell}$ for $\ell\in [T]$ and define $R_{1:(t-L)}=\emptyset$ for any $t\in [L]$.
At the end of the experiment, the post experiment decision-rule $\pi_{\rm post}$ selects an to deploy in the population as
\[ 
I_{\rm post} = \pi_{\rm post}(\underbrace{H_{\rm post}}_{\text{final history}} \,,\,  \underbrace{\xi_{\rm post}}_{ \text{seed}}) \quad \text{where} \quad H_{\rm post} \triangleq \left(X_{1:T}, I_{1:T},  R_{1:T} \right).
\]

We consider two measures of the performance of an algorithm $\pi$:
\begin{description}
	\item[Expected post-experiment (utilitarian) regret:] $\E[\Delta_{\rm post}]$ where  $\Delta_{\rm post} \triangleq r_{\theta}(I^*) - r_{\theta}(I_{\rm post})$.
	\item[Expected within-experiment (utilitarian) regret:] $\E[\Delta_t]$ where $\Delta_t \triangleq r_{\theta}(I^*) - r_{\theta}(I_t)$.
\end{description}

The term `utilitarian' is taken from \cite{athey2021policy} and reflects that an arm's performance is measured in terms of its average reward or `utility' it generates within a population. Appendix \ref{sec:contextual_regret}  studies a measure of regret on the contexts encountered within the experiment.

Post-experiment regret  measures whether the policy is able to choose an arm with near-optimal population average reward at the end of the experiment. 
Within-experiment (utilitarian) regret captures whether a decision made within the experiment has near-optimal population average reward. 
Attaining low within-experiment regret indicates that the DM was able to select arms of similar quality within the experiment to the arm they hoped to employ post-experiment (with `similarity' assessed through $r_{\theta}(\cdot)$); This can be thought of as an indicating a reduced cost of experimentation.

We make a couple of extra assumptions to simplify the presentation. First, we assume $x_{\rm pop} \neq 0$. Otherwise, it is known at the beginning of the experiment that each arm's population average reward is zero.  
Next, we assume that $\Sigma_1$ has full rank (although some eigenvalues could be arbitrarily small). This allows us to write some expressions in terms of matrix inverses.
Together, the two assumptions imply that, with probability 1, there is a unique solution to the maximization problem defining $I^*$ and we do not need to discuss tie-breaking rules.  
For similar reasons, assume $\sigma>0$. 

\subsection{Remarks on interpretation}

\begin{remark}[A unified objective function]\label{rem:overall-objective}
	Roughly speaking, we interpret attaining low post-experiment regret as a constraint on the policies a decision-maker could employ. In practice, an experimenter is unlikely to knowingly choose a policy that is incapable of deploying an effective arm to the population. Subject to this (loosely defined) constraint, we seek an policy that reduces experimentation costs by minimizing within-experiment regret. This kind of evaluation is clearest in the the interpretation of the experiment results in Section \ref{sec:numerical}. There, some algorithms are effectively disqualified due to suffering high post-experiment regret. Several algorithms attain comparable post-experiment regret, but nevertheless differ substantially in the regret they incur within the experiment. 
\end{remark}

\begin{remark}[Connection to i.i.d.~bandits]
	Classical bandit models with i.i.d.~reward observations are a special case of the model in which there is no variation in contextual observations. Specifically, take contexts to be one dimensional ($d=1$), and assume that $X_{1} = X_{2} = \cdots = X_{T} = x_{\rm pop}$. Then potential reward observations $(r_{1,i}, \ldots, r_{T,i})$ are i.i.d.~samples with mean $r_{\theta}(i)$. In this special case, post-experiment regret is often called ``simple regret'' \citep{bubeck2009pure}.  We specialize our results to this case in Corollary \ref{cor:main}.   
\end{remark}

\begin{remark}[Interpretation of post-experiment regret]
	Our discussion implicitly imagines that treatment decisions continue after the end of the experiment. 
	Here, we make that explicit. Extend the time horizon by $N$ periods; The DM chooses arm $I_{t} = I_{\rm post}$ for all $t \in \{T+1, \ldots, T+N\}$. Then, the reward earned post experiment is:
	\begin{align*}
	  \sum_{t=T+1}^{T+N}  r_{\theta}( I_t, X_t) = 	 \sum_{t=T+1}^{T+N}  r_{\theta}(I_{\rm post}, X_t)  = \sum_{t=T+1}^{T+N}   \langle \theta^{(I_{\rm post})} \, ,\, X_t  \rangle  = N \cdot  r_{\theta}( I_{\rm post} , \hat{X}_{\rm pop} )  \approx N \cdot r_{\theta}(I_{\rm post})
	\end{align*}
	where $\hat{X}_{\rm pop} = \frac{1}{N}\sum_{t=T+1}^{T+N} X_t$ is the average post-experiment context and the final approximate equality holds when $\hat{X}_{\rm pop}  \approx x_{\rm pop}$. The approximate equality is exact if $\hat{X}_{\rm pop}  = x_{\rm pop}$.  What we call post-experiment utilitarian regret is the per-period expected regret of the post-experiment decision under a post-experiment context distribution whose mean matches the DM's target context weights $x_{\rm pop}$. 
\end{remark}

\begin{remark}[Comparison to the probability of incorrect selection]
		One might also be interested in comparing post-experiment regret to the probability of incorrect selection, $\Prob(I^*\neq I_{\rm post})$, a metric that is widely studied in the literature. We can write 
	\[
	\Prob(I_{\rm post} \neq I^*) = \E\left[ \sum_{i\neq I^*} \ind(I_{\rm post} = i) \right]  \quad \& \quad \E[\Delta_{\rm post}] = \E\left[\sum_{i\neq I^*} \ind(I_{\rm post} = i) \left( r_{\theta}(I^*) - r_{\theta}(i)  \right) \right],
	\]
	revealing that post-experiment regret is similar to the probability of incorrect selection, except it is more forgiving of instances where ``incorrect'' but very nearly optimal arms are deployed post-experiment. 
\end{remark}

\section{Deconfounded Thompson sampling}\label{sec:algo}
We propose deconfounded Thompson sampling (DTS). It is the natural way of applying Thompson sampling to our problem. At each time period $t\in [T]$, it selects an arm to measure randomly by sampling from the posterior distribution of the optimal arm:
\begin{equation}\label{eq:probability_matching_def}
\Prob(I_t = i \mid H_t, X_t) = \Prob(I^* = i \mid H_t, X_t), \quad \forall i\in[k]. 
\end{equation}
At the end of the experiment, DTS chooses the arm with highest expected reward in the population under posterior beliefs:
\begin{equation}\label{eq:bayes_selection}
	I_{\rm post}  \in \argmax_{i \in [k]} \E\left[  r_{\theta}(i)  \mid   H_{\rm post} \right].
\end{equation}
These definitions make no explicit reference to contextual observations. 
But implicitly, through proper Bayesian inference, DTS is using contextual observations to 'deconfound' its reward observations.  
Full pseudocode is given below.

\begin{algorithm}[H]
	\SetKwComment{Comment}{/* }{ */}
	\SetAlgoLined
	Input prior parameters $\left(\mu_{1}, \Sigma_{1}\right)$, population weights $x_{\rm pop}$ and noise variance $\sigma^2$ \; 
	Define the feature map $\phi: [k]\times \mathbb{R}^d \to \mathbb{R}^{kd}$ by 
	$
	\phi(x,i) = (0, \ldots, 0,\underbrace{ x_1, \ldots, x_d}_{ i\text{-th subvector} }, 0,\ldots, 0)$ \;
	Define linear reward $r_{\theta}(i) = \langle \theta \, ,\, \phi(x_{\rm pop}, i) \rangle$ \;
	Start with empty history $\tilde{H}_1=\{\}$ \;	
	\For{$t=1,2,\ldots, T$}{
		\If{$t\geq L+1$}{
				Gather (potentially delayed) observation $O_{t-L} \gets (X_{t-L}, I_{t-L}, R_{t-L})$ \;
				Update history: $\tilde{H}_t \gets \tilde{H}_{t-1} \cup \{O_{t-L} \}$ \;
			    \Comment*[l]{Update posterior mean and covariance}  
                $\phi_{t-L} \gets \phi(X_{t-L}, I_{t-L}) $\;
				 $\Sigma_t \gets \left( \Sigma_{t-1}^{-1} + \sigma^{-2} \phi_{t-L} \phi_{t-L}^\top \right)^{-1}$,  \quad i.e. $\Sigma_t = \left( \Sigma_1^{-1} + \sigma^{-2} \sum_{\ell=1}^{t-L} \phi_\ell \phi_\ell^\top \right)^{-1}$\; 
				 $\mu_t \gets \Sigma_t\left(\Sigma_{t-1}^{-1} \mu_{t-1} + \sigma^{-2}\phi_{t-L}  R_{t-L}\right)$, \quad i.e. $\mu_t = \Sigma_t\left( \Sigma_{1}^{-1} \mu_1 + \sigma^{-2}\sum_{\ell=1}^{t-L} \phi_\ell  R_{\ell}\right)$\;
		}
	\Else{
        $\tilde{H}_t \gets \tilde{H}_{1} $ \;
        $(\mu_t, \Sigma_t) \gets (\mu_1, \Sigma_1)$\;
		}
		\Comment*[l]{Sample from the posterior distribution of the optimal arm}  
		Sample $\tilde{\theta} \sim N(\mu_t, \Sigma_t)$ and choose treatment arm $I_t \in \argmax_{i \in [k]} r_{\tilde{\theta}}(i)$ \;
	}
	Wait to observe $O_{T-L+1},\ldots, O_{T}$\; 
	Form post-experiment history: $H_{\rm post} \gets \{O_{1}, \ldots, O_{T}\}$\; 
	Form posterior mean: $\mu_{\rm post} \gets \left( \Sigma_1^{-1} + \sigma^{-2} \sum_{\ell=1}^{T} \phi_\ell \phi_\ell^\top \right)^{-1} \left( \Sigma_{1}^{-1} \mu_1 + \sigma^{-2}\sum_{\ell=1}^{T} \phi_\ell  R_{\ell}\right) $\;
	Choose arm to deploy in population: $I_{\rm post}\in\argmax_{i \in [k]}  r_{\mu_{\rm post}}(i)$\;
 \caption{DTS in Gaussian contextually confounded experiments}
	\label{algo:DTS}
\end{algorithm}

A striking feature of DTS is that the decision at time $t$ does not depend on the context at time $t$ --- or even contexts in the past $L-1$ periods. That is, in \eqref{eq:probability_matching_def}, 
\begin{equation}\label{eq:context_indep_probs}
\Prob(I^* = i \mid H_t, X_t)= \Prob\left(I^* = i \mid X_{1:(t-L)},  I_{1:(t-L)}, R_{1:(t-L)} \right).
\end{equation}
This equation uses that, conditioned on the observations $X_{1:(t-L)}, I_{1:(t-L)}, R_{1:(t-L)}$,  the latent variable $\theta$ is independent of the additional arm selections and observed contexts. 
That decisions are \emph{context independent} in this way could offer substantial practical benefits. 
Even if contexts are logged, enormous engineering resources might be required to develop a system that observes contexts and responds in real time. 
For instance, assessing $X_t$ could easily require querying several different datasets containing the current user's interaction history and then applying a trained machine learning algorithm that generates a compact feature vector from this history. 
With a context independent algorithm, this could be done without substantial latency requirements.

\section{Main result }\label{sec:main_result}

\subsection{Warmup: bound in vanilla bandit environments}
To build intuition, we first consider a special case of the our result that applies to vanilla bandit problems  In this case, DTS is just standard TS and the results we provide here are (essentially) known. By presenting them in a style that mirrors our main theorem, we hope to make it easier to digest the main theorem itself. 

Under the next assumption, potential arm rewards $(R_{t,i})_{t\in [T]}$ are i.i.d.~with mean $r_{\theta}(i)$ and rewards are observed immediately after an arm is played. 
\begin{assumption}[Vanilla bandit problem]\label{assumption:iid}
	Suppose that $L=1$ (no delay),  the context dimension is $d=1$, and with probability 1,  $X_{1} = X_{2} = \cdots = X_{T} = x_{\rm pop}$.
\end{assumption}

Under this assumption, DTS is just standard TS followed by selecting the arm \eqref{eq:bayes_selection} at the end of the experiment. 
Summing the bound in \eqref{eq:within_exp_regret_bound_iid} over\footnote{Technically \eqref{eq:within_exp_regret_bound_iid} can only be summed over $t\geq 2$. It is easy to provide separate bounds when $t=1$.} $t\in [T]$, yields familiar $\tilde{O}(\sqrt{kT})$ cumulative regret bounds for Thompson sampling \citep[See e.g.][]{russo2016information}. 
The form in \eqref{eq:within_exp_regret_bound_iid} is stronger, since it bounds performance loss in every period, rather than on average. 
The bound in \eqref{eq:post_exp_regret_bound_iid} ensures that TS gathers the information required to select an effective arm at the end of the experiment. 
This result is not commonly stated in the literature, but it is implied by the algorithm's cumulative regret bounds; See \cite[Proposition 8]{russo2018learning}. 

Recall that $\Delta_t= r_{\theta}(I^*) - r_{\theta}(I_t)$ is the regret of the exploratory actions picked by DTS within the experiment. The post-experiment regret $\Delta_{\rm post }= r_{\theta}(I^*) - r_{\theta}(I_{\rm post})$ is the regret of the arm $I_{\rm post} \in \argmax_{i \in [k]} \E[r_{\theta}(i) \mid H_{\rm post}]$ which maximizes posterior expected reward given all the information acquired throughout the experiment. It is possible to show that, in general, $\E[\Delta_{\rm post }] \leq \E[\Delta_t]$ for any $t$, since $I_{\rm post}$ is selected based on more information and does not involve exploration.  In this sense, \eqref{eq:within_exp_regret_bound_iid} is the stronger and more surprising property. 
\begin{corollary}\label{cor:main}
	 If Assumption \ref{assumption:iid} holds, then under DTS, within-experiment regret is bounded as
	\begin{equation}\label{eq:within_exp_regret_bound_iid}
		\E\left[   \Delta_{t} \right] \leq   \sigma \sqrt{ \frac{ 2\cdot\iota \cdot k \cdot \log(k) } {   t-1  } }, \quad \forall t \in \{2, \ldots, T\},
	\end{equation}
		where  $\iota$ is defined in \eqref{eq:iota}. Post-experiment regret is bounded as
	\begin{equation}\label{eq:post_exp_regret_bound_iid}
		\E\left[   \Delta_{\rm post} \right] \leq   \sigma\sqrt{ \frac{ 2\cdot\iota \cdot k \cdot \log(k) }{ T}} \,.
	\end{equation}
\end{corollary}
Our use the term $\iota$ to capture a messy factor which comes from the application of a concentration inequality.  In our main regime of interest, $\iota$ is a numerical constant, so we defer discussion until after our main theorem.

\subsection{General result}

We seek a generalization of Corollary \ref{cor:main} that holds throughout the scope of our problem formulation, removing the need for Assumption \ref{assumption:iid} and establishing the robustness of DTS to exogenous nonstationary variation. 
The main intellectual challenge is that learning dynamics in our model markedly deviate from those in i.i.d.~bandit models. In i.i.d.~environments, the DM can choose to quickly resolve uncertainty about an arm's population-level performance through exploration. 
By contrast, our model introduces an unavoidable delay in this process as the DM awaits the occurrence of relevant contexts. 
A bound like \eqref{eq:within_exp_regret_bound_iid}, which says that DTS makes near optimal decisions as soon $t$ is large, may not be possible under some context sequences. 

Instead of depending explicitly on the number of arm pulls $t$, our bound depends on a what we call \emph{attainable precision}, defined as 
\begin{align} \label{eq:precision_def}
	{\rm Precision}(X_{1 : t}) &\triangleq  \min_{i\in [k]} \,  \frac{1}{{\rm Var}\left(  r_{\theta}( i) \mid   (X_1, R_{1,i}, \ldots, X_t, R_{t, i}) \right)}  \\
	&=  \min_{i\in [k]} \,  \left( x_{\rm pop}^\top \left[ {\rm Cov}\left(\theta^{(i)}\right)^{-1} + \sigma^{-2} \sum_{\ell=1}^{t} X_\ell X_\ell^\top      \right]^{-1} x_{\rm pop} \right)^{-1} .  \label{eq:precision_formula}
\end{align}
To treat cases with no reward observations, define ${\rm Precision}(X_{1:(t-L)}) \triangleq  \min_{i\in [k]}   \frac{1}{{\rm Var}\left(  r_{\theta}( i)\right)}$ when $t\leq L$. 
Precision is the inverse posterior variance of the arm's population average reward if the potential reward outcomes $(R_{\ell, i})_{\ell =1,\ldots, t}$ from measuring the arm in contexts $X_{1}, \ldots, X_{t}$ were observable. 
The formula in \eqref{eq:precision_formula} uses standard rules for computing Gaussian posterior distributions. Under Assumption \ref{assumption:iid}, ${\rm Precision}(X_{1 : t}) \geq \sigma^{-2} \cdot t$ and Theorem \ref{thm:main_result} implies the corollary stated above.
More generally, if the contexts so far are reflective of the population distribution (e.g. they are drawn i.i.d.), then precision scales as $\sigma^{-2} \cdot t$, with no or minimal dependence on context dimension; See Lemma \ref{lem:bounds_on_precision}. 
But precision can behave quite differently if contexts have a strong non-stationary pattern; see Figure \ref{fig:precision} in Section \ref{sec:numerical}.

Attainable precision measures whether the decision-maker \emph{could have} precisely estimated an arm's population average reward by playing it in each context observed so far in the experiment. The next theorem formalizes a striking result about DTS: once high precision is attainable, the expected regret of each subsequent decision made by DTS is low. The result generalizes Corollary~\ref{cor:main} to problems with exogenous nonstationary variation and delayed reward observations. Full discussion is deferred until Subsection \ref{subsec:main_discuss}.

\begin{restatable}[Bound on within- and post-experiment utilitarian regret]{Theorem}{utilitarianRegret}\label{thm:main_result}	
	Fix any sequence $x_{1: T} \in \Xc^T$. Under DTS, within-experiment regret is bounded as 
	\begin{equation}\label{eq:within_exp_regret_bound}
		\E\left[   \Delta_{t}  \mid X_{1:T} = x_{1:T}\right] \leq   \sqrt{ \frac{ 2\cdot\iota \cdot k \cdot \log(k) } { {\rm Precision}\left(x_{1 : (t-L)}\right) } }\,, \quad \forall t \in [T],
	\end{equation}
where $\iota$ is defined in Equation \eqref{eq:iota}. Post-experiment regret is bounded as
	\begin{equation}\label{eq:post_exp_regret_bound}
		\E\left[   \Delta_{\rm post} \mid X_{1:T} = x_{1:T}\right] \leq   \sqrt{ \frac{ 2\cdot\iota \cdot k \cdot \log(k)   }{ {\rm Precision}(x_{1 : T}) }}\,.
	\end{equation}
\end{restatable}
We define
\begin{align}\label{eq:iota}
	\iota &\triangleq \max\left\{8 \left(\sigma^{-2} /\lambda_{\min}\left(\Sigma_1^{-1}\right) \right)\cdot \log\left( d k \lambda_{\max}\left( \Sigma_1\right) \left[\lambda_{ \max }\left( \Sigma_{1}^{-1} \right) + \sigma^{-2}T\right]\right)  + 1 \, , \, 9  \right\} \\ \nonumber 
	&= \tilde{O}\left(  \max\{ \underbrace{\lambda_{ \max }\left( \Sigma_1 \right) / \sigma^2}_{ \text{signal-to-noise ratio} } \, , 1 \}  \right),
\end{align}
where $\tilde{O}$ hides logarithmic factors.
This term comes from applying a concentration inequality to control for the impact of randomness in action selection; see inequality $(a)$ in the proof sketch in Section \ref{subsec:analysis_overview}. We are interested in problems with a low signal-to-noise ratio --- where a single user interaction does not resolve much uncertainty --- in which case $\iota$ is a constant.  

\begin{remark}[Treating $\iota$ as a constant]\label{rem:iota}
	In choosing to downplay the importance of $\iota$, we are implicitly assuming that the signal-to-noise ratio is $\tilde{O}(1)$, i.e. we are in a regime where observing a single reward realization does not resolve most prior uncertainty. Indeed, many A/B tests  involve just a few treatment arms, but still require (many) millions of users to attain statistical power.  A line of the literature formally studies such a regime by taking a diffusion limit of bandit problems \citep{kuang2023weak,fan2021diffusion,araman2022diffusion,adusumilli2023risk} which is similar to letting $\lambda_{ \max }\left( \Sigma_1 \right) / \sigma^2 \to 0$  but taking the time horizon $T\to \infty$ at a comparable rate.  In such a limit, $\iota = 9$, a numerical constant that comes from crude application of concentration inequalities. 
	
	Notice that our overall regret bounds do not degrade as the noise variance $\sigma^2$ tends to zero. In that case, two factors of $\sigma$, one in $\iota$ and the other in the right-hand-side of \eqref{eq:within_exp_regret_bound_iid} or \eqref{eq:post_exp_regret_bound_iid}, will cancel.  However, our analysis is not well suited to tightly bounding  the regret incurred when $\sigma \approx 0$. 
\end{remark}

\subsection{Growth rate of attainable precision}\label{subsec:precision_growth}
In benign settings, where observed contexts  are generally reflective of the population distribution, precision in period $t$ scales with $\sigma^{-2} \cdot t$ and does not depend on the context dimension. In such cases, the bounds in Theorem~\ref{thm:main_result} are roughly on the order of $\sigma \sqrt{k/(t-L)}$ or $\sigma \sqrt{k/T}$. 

The next lemma provides four results in such settings. The first result is a generic bound from which other bounds follow.
The second considers standard $k$-armed bandit problem, viewed as a special case of our formulation. 
The third generalizes the second, allowing for arbitrary context order while requiring that the empirical mean of the contexts matches the population mean. 
The fourth result integrates the first result with concentration inequalities applied to sample covariance matrices.

\begin{restatable}[Bound on attainable precision]{lemma}{precisionBounds}\label{lem:bounds_on_precision} 
	Fix any sequence $x_{1: T} \in \Xc^T$ and $t\in[T]$.
 \begin{enumerate}
		\item(Generic bound) Let $S_x \triangleq \frac{1}{t}\sum_{\ell=1}^{t} x_{\ell}x_\ell^\top$ denote the empirical second moment matrix and $\tilde{S}_x \triangleq S_x+\frac{\sigma^2\cdot\lambda_{\min}\left(\Sigma_1^{-1}\right)}{t} I\,\,$ (where $I\in\mathbb{R}^{d\times d}$ is an identify matrix). 
  Then 
		\[
		{\rm Precision}(x_{1 : t}) \geq  \sigma^{-2} t  \cdot  \left(x_{\rm pop}^\top  \tilde{S}_x^{-1}   x_{\rm pop}\right)^{-1}.
		\]
		\item (Vanilla bandit) Suppose $d=1$ and $x_\ell=1=x_{\rm pop}$ for each $\ell\in [t]$.  Then
		\[
		{\rm Precision}(x_{1 : t}) = \min_{i\in [k]}\, \Sigma_{1,ii}^{-1}+  \sigma^{-2} t \geq \lambda_{\min}\left(\Sigma_1^{-1}\right) + \sigma^{-2}t,
		\]
  where $\Sigma_{1,ii}$ is the $(i,i)$-th element of the prior covariance matrix $\Sigma_1$.
		\item (No empirical distribution shift) Suppose $\frac{1}{t} \sum_{\ell=1}^{t} x_\ell = x_{\rm pop}$. Then 
		\[
		{\rm Precision}(x_{1 : t}) \geq \lambda_{\min}\left( \Sigma_1^{-1} \right)\|x_{\rm pop}\|_2^{-2} + \sigma^{-2} t.
		\]
		\item (I.i.d.~contexts) 
  Suppose $X_1, \ldots, X_t$ are drawn i.i.d.~from a distribution satisfying that $\mathbb{E}[ X_1 X_1^\top] \succeq  c\cdot x_{\rm pop} x_{\rm pop}^{\top}$ for some $c\geq 0$. Then for any $\delta>0$,
  with probability greater than $1-\delta$, 
		\[
		{\rm Precision}(X_{1 : t}) \geq \lambda_{\min}\left( \Sigma_1^{-1} \right)\|x_{\rm pop}\|_2^{-2} + c\cdot \sigma^{-2}t - 4\sigma^{-2} \|x_{\rm pop}\|_2^{-2} \sqrt{2t\log\left(\frac{d}{\delta}\right)}.
		\]
  and
  \[
  t\geq \frac{128\|x_{\rm pop}\|_2^{-4}\log\left(\frac{d}{\delta}\right)}{c^2}
  \quad\implies\quad
  {\rm Precision}(X_{1 : t}) \geq \lambda_{\min}\left( \Sigma_1^{-1} \right)\|x_{\rm pop}\|_2^{-2} + \frac{c}{2}\cdot \sigma^{-2}t.
  \]
  
	\end{enumerate}
\end{restatable} 
Beyond the settings considered in this lemma, attainable precision can depend in an interesting way on the context sequence and the prior distribution. Figure \ref{fig:precision} in Section \ref{subsec:numerical_attainable_precision} provides an illustration.

\subsection{Discussion of the main result}\label{subsec:main_discuss}

Theorem \ref{thm:main_result} has several striking implications about the performance of DTS. 
\begin{description}
	\item[A delicate balance between exploration and exploitation.]  The attainable precision in estimating an arm's performance, defined in \eqref{eq:precision_def}, imagines that the potential reward outcomes of that arm were observed \emph{in every period}. An adaptive algorithm can try to emulate this by selecting arms uniformly at random, roughly leading to the same bound on post-experiment performance as in \eqref{eq:post_exp_regret_bound}. But doing so would forego the possibility of having low regret within the experiment as shown for DTS in \eqref{eq:within_exp_regret_bound}. Attaining these two guarantees requires striking a delicate balance between exploring arms to gather all attainable information that is useful, and also aggressively exploiting this information by shifting measurement effort away from bad arms. 
	\item[Robustness to context order.] Theorem \ref{thm:main_result} highlights DTS's robustness when faced with a challenging context order.  For instance, Example \ref{ex:day_of_week_intro}, presented in the next section, studies a weeklong experiment in which the first $T/7$ contexts are Monday, then next $T/7$ are Tuesday, and so on, until the last $T/7$ contexts are Sunday. The bound in \eqref{eq:post_exp_regret_bound} implies that the DTS still gathers adequate information by the end of the experiment. Sections \ref{sec:numerical} and \ref{sec:failure_preview} explain why this context order can create challenges in the design and analysis of bandit algorithms. 
	\item[Robustness to delayed observations.]  Recall that rewards are observed only after some delay of $L\geq 1$ periods.  When $L$ is very large, DTS is not able to get feedback on the decisions during the experimentation phase.  For that reason, the bound in \eqref{eq:within_exp_regret_bound} measures precision offered by the context sequence upto $L$ periods ago. This mild dependence on $L$ provides assurances of robustness. According to our formulation, the decision-maker can wait for all rewards observations to realize before implementing a decision in the population, which is why it is possible for \eqref{eq:post_exp_regret_bound} to have no dependence on $L$. 
	That bound suggests that, even in the face of extreme delay, DTS's arm selections will provide adequate information if one waits for the rewards to realize.  
	\item[Low price of using contexts to deconfound.]  The result highlights the low price of using rich contextual information to deconfound. 
	Unlike contextual bandit results, under which regret generally scales polynomially in the context dimension $d$ \citep{agrawal2013thompson}, our bound has at most a logarithmic dependence on $d$ when the contexts satisfy the conditions of Lemma \ref{lem:bounds_on_precision}.
	This also mimics the bound in Proposition \ref{prop:contextual} in Appendix \ref{sec:contextual_regret}, which is completely independent of the dimension of context vectors. 
	Of course, bounds that are nearly independent of the context dimension offer a stronger guarantee. 
	More importantly, they offer a different conceptual guidance to a practitioner: when using contexts to 'deconfound' inferences, but not to personalize decisions, it is better to use very rich features.
\end{description}

\subsection{Analysis}\label{subsec:analysis_overview}

The analysis leading to Theorem \ref{thm:main_result} may be of independent interest.
We outline ideas underlying the proof of \eqref{eq:within_exp_regret_bound}, which is the more delicate part, 
with \eqref{eq:post_exp_regret_bound} following as a corollary of the analysis. 
A key quantity in the analysis is the posterior standard deviation of population average reward: for any $t\in[T]$ and $i\in[k]$,
\[
s_{t,i} \triangleq \sqrt{{\rm Var}\left( r_{\theta}(i)  \mid H_t\right)}.
\] 
We also define the propensity (also called ``propensity score'') assigned to arm $i$ at time $t$ by
\[
p_{t,i} \triangleq \Prob(I_t=i \mid H_t, X_t).
\]

Learning about the population reward of an arm has limited value if that arm is believed to be very unlikely to be optimal. The term
\begin{equation}\label{eq:regret_to_estimation} 
	\E\left[ s_{t,I^*}^2 \mid H_t, X_{1:T}\right] = \sum_{i=1}^{k} \Prob( I^* = i \mid H_t) \, s_{t,i}^2 = \sum_{i=1}^{k} p_{t,i} \, s_{t,i}^2,
\end{equation}
assesses remaining uncertainty about the performance of arms while giving low weight  to arms that are unlikely to be optimal under the posterior. 
(That $X_{1:T}$  does not appear on the right-hand-side of \eqref{eq:regret_to_estimation} follows from logic similar to Equation \eqref{eq:context_indep_probs}).

The proof highlights two key properties of DTS.

\paragraph{DTS exploits what is known.}
The next result shows DTS has small expected regret in any period if the posterior uncertainty in \eqref{eq:regret_to_estimation} is small. A relatively short proof is given in Appendix \ref{subsec:proof_of_regret_to_estimation}. 
\begin{restatable}[Reduction to estimation]{proposition}{regretFromVariance}\label{prop:regret_to_estimation} 
Under DTS, for any $t\in[T]$, 
	\[
	\E \left[\Delta_{t} \mid H_t, X_{1:T}\right]  \leq  \sqrt{ 2  \log(k)  \mathbb{E}\left[s_{t,I^*}^2  \mid H_t, X_{1:T} \right]}    \quad \text{and}\quad \E \left[\Delta_{t} 
	\mid X_{1:T} \right]  \leq  \sqrt{ 2  \log(k)  \E\left[ s_{t,I^*}^2  \mid X_{1:T}\right]}.
	\]
\end{restatable}

\paragraph{DTS explores the optimal arm.}
The next proposition formalizes that, regardless of the context sequence and delay $L$, DTS is expected to assign high propensity \emph{to the optimal arm} --- in the sense that the expected inverse propensity is uniformly bounded.  Although the proof is very short, we call this a proposition to reflect the critical role it plays in our analysis.

\begin{proposition}\label{prop:prob_score_of_DTS}
	Under DTS, for any $t\in[T]$,
	\[
	\E\left[  \frac{\ind(I^*=i)}{p_{t,i}}  \mid X_{1: T }  \right] =  1, \quad \forall i\in[k] \quad \text{and} \quad \E\left[  \frac{1}{p_{t,I^*}}  \mid X_{1: T }  \right] =  k.
	\]
\end{proposition}
\begin{proof}
	By the tower property, 
	\begin{align*}
	 \E\left[ \frac{ \ind(I^*=i)}{p_{t,i}} \mid X_{1: T } \right] = \E\left[ \E \left[ \frac{ \ind(I^*=i)}{p_{t,i}} \mid H_t,  X_{1: T }  \right] \mid X_{1: T } \right] &= \E\left[   \frac{ \Prob(I^*=i \mid H_t,  X_{1: T })}{p_{t,i}} \mid X_{1: T } \right] \\
	 &=  \E\left[\frac{ \Prob(I^*=i \mid H_t,  X_t )}{p_{t,i}} \mid X_{1: T } \right] =1.
	\end{align*}
	The penultimate equality uses that $X_{1:(t-1)}$ is already contained in the history $H_t$ and that $X_{(t+1):T}$ is independent of $\theta$ conditioned on $H_t$. 
	The last equality uses the definition of DTS in \eqref{eq:probability_matching_def}. We conclude, 
	\[
		\E\left[ \frac{1}{p_{t,I^*}}  \mid X_{1:T} \right]= \E\left[\sum_{i=1}^{k}   \frac{ \ind(I^*=i)}{p_{t,i}} \mid X_{1: T }  \right]=k,
 	\] 
 	where the first equality simply observes that  $\frac{1}{p_{t,I^*}} =   \frac{ \sum_{i=1}^{k}\ind(I^*=i)}{p_{t,I^*}} =\sum_{i=1}^{k}   \frac{ \ind(I^*=i)}{p_{t,i}}$. 
\end{proof}

\paragraph{A delicate balance between exploration and exploitation.}
To get some intuition for these results, let's compare them to what could be attained under alternative algorithms. First, consider an RCT which sets $p_{t,i}=1/k$ for each period $t$ and arm $i$. This algorithm explores aggressively, if naively. Assuming all arms equally likely to be optimal (i.e. $\Prob(I^*=i)=1/k$), then this method would attain the same bounds in Proposition \ref{prop:prob_score_of_DTS}, but it would not attain the low-regret property in Proposition \ref{prop:regret_to_estimation}.  Next, consider a greedy algorithm, which selects the arm $\argmax_{i \in [k]} \E[r_{\theta}(i) \mid H_t]$ in each time period. That algorithm ``exploits what is known'' and attains the bound in Proposition \ref{prop:regret_to_estimation}, but it may neglect to explore the optimal arm and does not satisfy a bound like Proposition \ref{prop:prob_score_of_DTS}. 

That DTS attains both properties reflects a delicate balance it strikes between exploration and exploitation. It is aggressive in shifting measurement effort away from poor arms, leading to Proposition \ref{prop:regret_to_estimation}, but it is still assured to explore all arms which might be optimal, leading to Proposition \ref{prop:prob_score_of_DTS}. 

\paragraph{Completing the proof.}

To complete the proof, we show that sufficient exploration of the optimal arm, in the sense of Proposition \ref{prop:prob_score_of_DTS}, controls the expected posterior variance of the optimal arm (i.e. $\E[s_{t,I^*}^2]$) which appears in Proposition \ref{prop:regret_to_estimation}. The full analysis is quite subtle, but it is possible to give a thorough proof sketch in a special case.
\begin{proof}[Proof sketch in the orthogonal case]
	Consider a special case of our formulation. To avoid writing conditional expectations, assume  $X_{1:T} = x_{1:T} \in \mathcal{X}^T$ with probability 1 for some arbitrary sequence $x_{1:T}$. Now, Assume $\Sigma_1^{-1}= \lambda I$ (representing independent beliefs) and that $\|x_t\|_0 =1$ for each $t$ (so pairs of context vectors are either orthogonal or aligned).  In this special case, the posterior covariance matrix $\Sigma_t\in \mathbb{R}^{dk\times dk}$ is diagonal with entries
	\[ 
	\sigma_{t,i,j}^2 \equiv {\rm Var}\left( \theta^{(i)}_j \mid H_t\right) = \left(\lambda + \sigma^{-2}\sum_{\ell=1}^{t-L} \ind(I_\ell=i)  x_{\ell,j}^2  \right)^{-1}, \quad \forall i\in[k], j\in [d],
	\]
	along the diagonal. Moreover, since $r_{\theta}(i) = x_{\rm pop}^\top \theta^{(i)}$, the posterior variance of population average reward can be written as
	$s_{t,i}^2  =\sum_{j=1}^{d} x_{{\rm pop}, j}^2 \cdot \sigma_{t,i,j}^2$. We then bound this as
 \begin{align*}
		\E\left[s_{t,I^*}^{2}\right] &= \sum_{j=1}^{d} x_{{\rm pop}, j}^2 \cdot \E\left[  \left(\lambda + \sigma^{-2}\sum_{\ell=1}^{t-L} \ind(I_\ell=I^*)  x_{\ell,j}^2  \right)^{-1}   \right] \\
		&\overset{(a)}{\leq} \iota  \sum_{j=1}^{d} x_{{\rm pop}, j}^2 \cdot \E\left[  \left(\lambda + \sigma^{-2}\sum_{\ell=1}^{t-L} p_{\ell, I^*}  x_{\ell,j}^2  \right)^{-1}   \right] \\
		&\overset{(b)}{\leq} \iota \sum_{j=1}^{d} x_{{\rm pop}, j}^2 \cdot \left( \lambda + \sigma^{-2} \sum_{\ell=1}^{t-L} x_{\ell,j}^2 \right)^{-2} \cdot \E\left[ \lambda +  
  \sigma^{-2}\sum_{\ell=1}^{t-L}\frac{x^2_{\ell,j}}{p_{\ell,I^*}}
  \right] \\  
		&\overset{(c)}{=} \iota \sum_{j=1}^{d} x_{{\rm pop}, j}^2 \cdot \left( \lambda + \sigma^{-2} \sum_{\ell=1}^{t-L} x_{\ell,j}^2 \right)^{-2} \cdot \left( \lambda + k\cdot \sigma^{-2} \sum_{\ell=1}^{t-L} x_{\ell,j}^2  \right) \\ 
		&\leq  \iota \sum_{j=1}^{d} x_{{\rm pop}, j}^2 \cdot \left( \lambda + \sigma^{-2} \sum_{\ell=1}^{t-L} x_{\ell,j}^2 \right)^{-1} \cdot k\\
		&= \frac{\iota \cdot k}{{\rm Precision}\left(x_{1:(t-L)}\right)}. 
	\end{align*}
  Inequality $(c)$ applies Proposition \ref{prop:prob_score_of_DTS}. Inequality $(b)$ uses Jensen's inequality. 
	 
		Inequality $(a)$ requires a detailed proof, but we can provide semi-rigorous intuition. To study both sides of the inequality $(a)$, fix any arm $i$ and define $D_{\ell, j} = [\ind(I_\ell =i) - p_{\ell,j}] x_{\ell, j}^2$. This has zero conditional mean (i.e. $\E[D_{\ell,j} | D_{1,j},\ldots, D_{\ell-1,j}] =0$) and conditional variance $v_{\ell,j} =   \E[D_{\ell,j}^2 | D_{1,j},\ldots, D_{\ell-1,j}] = p_{\ell,j}(1-p_{\ell,j}) x_{\ell, j}^4 \leq p_{\ell,j} x_{\ell, j}^2$ (by the assumption that $\|x_\ell\|_2\leq 1$.)
  Then, 
	\begin{align*}
		\sum_{\ell=1}^{t-L} \ind(I_\ell=I^*) x_{\ell,j}^2 = \sum_{\ell=1}^{t-L} p_{\ell,j}  x_{\ell,j}^2 + \sum_{\ell=1}^{t-L} D_{\ell,j} 
  &\approx \sum_{\ell=1}^{t-L} p_{\ell,i}x_{\ell,j}^2  + O\left( \sqrt{\sum_{\ell=1}^{t-L} v_{\ell,j}} \right) \\
  &=\sum_{\ell=1}^{t-L} p_{\ell,j} x_{\ell,j}^2 + O\left( \sqrt{\sum_{\ell=1}^{t-L}  p_{\ell, j} x_{\ell,j}^2 }\right).
	\end{align*}
	The approximate equality (marked $\approx$) can be loosely justified through the martingale central limit theorem. The rigorous proof, given in Appendix \ref{subsec:smoothed_model}, instead relies on a non-asymptotic martingale concentration inequalities. 
\end{proof}

This proof technique generalizes to problems with non-orthogonal context vectors, but it requires careful matrix-valued generalizations of all key inequalities. A generalization of inequality $(a)$ is given in Lemma \ref{lem:matrix_skipping_corollary}, in Appendix \ref{subsec:smoothed_model}. In proving this, we developed a new concentration inequality for matrix-valued martingales (i.e. Proposition \ref{prop:matrix_concentration}), which may be of independent interest. 
In Appendix \ref{subsec:bounding_precision}, Lemma \ref{lem:IPW-posterior-bound} presents a matrix-valued generalization of inequality $(b)$. Its proof relies on a remarkable generalization of Jensen's inequality to operator convex functions, which we restate as Lemma \ref{lem:operator_jensen}.

\section{Numerical illustration}\label{sec:numerical}
We provide numerical experiments that motivate our theory and help the reader build intuition. 
Specifically, these illustrations provide a glimpse of the challenges outlined in Section \ref{sec:failure_preview} and of DTS's intricate balance of exploration and exploitation, which we formalized in Theorem \ref{thm:main_result}. While we compare DTS with alternative algorithms, our intent is not to conduct extensive competitive benchmarking.
 
\subsection{An example with day of week effects}
Our simulations center around Example \ref{ex:day_of_week_intro}, which demonstrates the challenges faced when the context sequence exhibits nonstationary pattern. The example models a week-long experiment where observations are influenced by day-of-week effects, a routine concern in A/B testing \citep{kohavi2020trustworthy}.
\begin{example}[Day-of-week effects]\label{ex:day_of_week_intro}
	Consider an online retailer conducting a weeklong experiment to find the price that maximizes profit from selling a product in subsequent weeks.
    Demand is assumed to follow a normal distribution, implying that profit also follows a normal distribution.
    Demand varies according to the day of the week.  This scenario can be mapped to a special case of the model in Section \ref{sec:formulation}, where each context $X_t\in\{e_1, \ldots, e_7\}\subset \mathbb{R}^7$ is one of the standard basis vectors. Suppose $T=7m$ and the context at time $t$ is $X_t = e_{\lceil  t/m \rceil}$, signifying that first $m$ periods are Sunday, the next $m$ are Monday, and so on, with the final $m$ being Saturday. 
	The price $I_t$ is adjusted in each period and offered to the next customer (a time period could also represent a small batch of customers), generating reward $R_{t,I_t} =  \langle \theta^{(I_t)},X_t\rangle + W_{t,I_t}$ representing the profit earned. There is no delay in observing rewards (i.e. $L=1$).
	Let the population distribution $\Dc_{\rm pop}$ be uniform over $\{e_1, \ldots, e_7\}$. The performance of arm $i$ on day $x$ is the $x$-th component of the vector $\theta^{(i)}$, i.e., $\theta^{(i)}_x = \langle \theta^{(i)}, e_x\rangle$.
	At the end of the experiment, the decision-maker picks a single price $I_{\rm post}$ to employ across future weeks. The loss incurred due to the decision made under incomplete resolution of uncertainty about average demand is measured by
	\begin{equation}\label{eq:regret_day_of_week}
		\Delta_{\rm post}=\max_{i\in[k]}\left(\frac{\theta^{(i)}_{1}+\cdots+ \theta^{(i)}_{7}}{7}\right) -\left(\frac{\theta^{(I_{\rm post})}_{1}+\cdots+ \theta^{(I_{\rm post})}_{7}}{7}\right). 
	\end{equation}
	The reasons for learning a single price, pertaining to fairness and incentive-compatibility, are discussed in Appendix \ref{sec:standardization}.
		
	The decision-maker begins with prior belief  that  $\theta \sim N(\mu, \Sigma)$. We consider a structured prior induced from a latent variable model where $\theta^{(i)}_{x} = \theta^{\rm idio}_{i,x}+\theta^{\rm arm}_{i}+\theta^{\rm day}_{x}$ is determined by an  effect $\theta^{\rm idio}_{i,x}$ that is idiosyncratic to a specific arm and day, an effect $\theta^{\rm arm}_{i}$ associated with the chosen arm, and a shared day-of week effect $\theta^{\rm day}_{x}$. Placing an independent normal prior on the idiosyncratic, arm-specific, and day-specific effects induces a structured covariance matrix $\Sigma$. When the idiosyncratic terms have large variance, the decision-maker must be cautious of almost arbitrary nonstationary patterns. If these are believed to have smaller magnitude, the decision-maker may be able to rule out some very poor arms early in the experiment. 
\end{example}

\subsection{Attainable precision and delayed learning due to context order}\label{subsec:numerical_attainable_precision}
 
Figure \ref{fig:precision} plots attainable precision in a special case of Example \ref{ex:day_of_week_intro}. Recall this is defined as 
\begin{align*}
	{\rm Precision}(X_{1 : t}) &\triangleq \min_{i\in [k]} \,  \frac{1}{{\rm Var}\left(  r_{\theta}( i) \mid   (X_1, R_{1,i}, \ldots, X_t, R_{t, i}) \right)}  \\
	 \frac{1}{{\rm Precision}(X_{1 : t})} &= \max_{i\in [k]}  {\rm Var}\left(  r_{\theta}( i) \mid   (X_1, R_{1,i}, \ldots, X_t, R_{t, i}) \right)
\end{align*} 
and assesses the remaining uncertainty a decision-maker would have about an arm's population-level performance \emph{assuming they chose to measure that arm exclusively}.
Arms are a priori symmetric in our example, so the minimum and maximum above are redundant. We plot this in two cases. 
\begin{description}
	\item[Sequential context order:] Context 1 occurs for the first $100$ periods ('Monday'), context 2 occurs for the next $100$ periods ('Tuesday'), and so on.
	\item[Shuffled context order:] Contexts are drawn i.i.d.~across periods with uniform probabilities.
\end{description}

\begin{figure}
	\centering
	\includegraphics[width=1\linewidth]{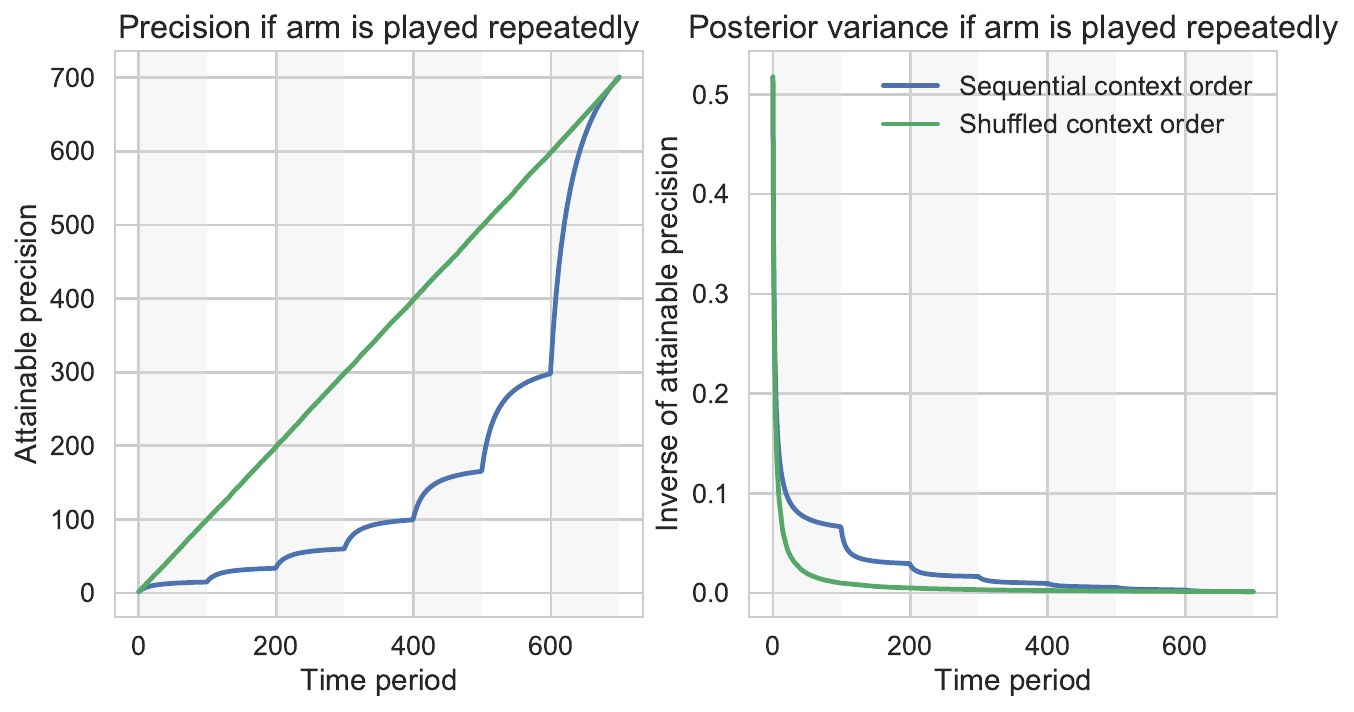}
	\caption{Attainable precision over time in Example \ref{ex:day_of_week_intro} with $m=100$ periods per context and noise variance $\sigma^2=1$. The prior variances are such that ${\rm Var}(\theta^{\rm arm}_i)=1$, ${\rm Var}(\theta^{\rm day}_x)=0$ and ${\rm Var}(\theta^{\rm idio}_{i,x})= \frac{1}{14}$ for all $i\in [k]$ and $x\in [7]$.}
	\label{fig:precision}
\end{figure}

In both cases, attainable precision at the end of the experiment is ${\rm Precision}(X_{1:T}) = \sigma^{-2} \cdot T = 700$. 
If we interpret this as a `large' value, then the bound in equation \eqref{eq:post_exp_regret_bound} of Theorem \ref{thm:main_result} suggests that 
DTS will attain low post-experiment regret in either case. 
However, the evolution of attainable precision within the experiment looks very different depending on the context order.

When contexts are shuffled, precision displays linear growth with ${\rm Precision}(X_{1:t})\approx \sigma^{-2} \cdot t$, mirroring the bounds in Subsection \ref{subsec:precision_growth}. 
The posterior variance, which is the inverse of precision, undergoes a rapid decrease following the onset of the experiment.
This indicates that if the DM chose to explore an arm  $i$ aggressively at the beginning of the experiment, they could resolve uncertainty about its population-level performance $r_{\theta}(i)$. 

The behavior of attainable precision changes substantially under a sequential context order.  
It grows slowly at the beginning of the experiment, reflecting that
resolving uncertainty about an arm's population level performance requires waiting for certain contexts to become observable. 
In fact, the figure displays fairly sharp jumps in attainable precision when new contexts become observable --- marked in Figure \ref{fig:precision} by alternating grey and white shaded columns.

\subsection{Algorithms compared}

\subsubsection{Methods for selecting arms within an experiment}
Our numerical experiments compare the following procedures for selecting arms $I_1, \ldots ,I_T$ within-the-experiment.

\begin{description}
	\item[Deconfounded Thompson sampling:] Implements Algorithm \ref{algo:DTS}. 
	\item[Deconfounded UCB:] The UCB analogue of deconfounded TS. This algorithm defines an upper confidence bound $U_{t,i} = \E[r_{\theta}(i) \mid H_{t}] + z \sqrt{{\rm Var}(r_{\theta}(i) \mid H_{t} )}$ on the population-average reward $r_{\theta}(i)$ of arm $i$, then it selects the arm $I_t = \argmax_{i \in [k]} U_{t,i}$. In this study, we use $z =3$, but alternative choices produce  qualitatively similar outcomes.
	\item[Context-unaware Thompson sampling:]  A version of TS that acts as if rewards were i.i.d.~and there were no contextual observations. It imagines each sample of arm $i$ is a draw with mean $\E[R_{t,i} \mid \theta]=r_{\theta}(i)$ and noise variance ${\rm Var}( R_{t,i} \mid \theta)= \sigma^{2} + \max_{x} {\rm Var}(\theta_x^{\rm day})$; the noise variance is inflated since the algorithm is not accounting for variance driven by context.  
	\item[Round-robin sampling:] The algorithm samples arm 1 when $t=1$, arm 2 when $t=2$, $\ldots$, arm $k$ when $t=k$, and then starts the cycle again, sampling arm $1$ when $t=k+1$ and so on.   	
	\item[Sequential elimination:] The algorithm maintains a set of contending arms, which contains all arms at initialization. At the start of any period, an arm whose posterior probability of being optimal, $\Prob(I^* =i  \mid H_t)$, falls below some threshold $\delta/k$ is removed from the set of contending arms. We set $\delta=0.05$, reflecting a goal of having less than a $5\%$ chance of eliminating the best arm. A suitable variant of round-robin sampling is used to select an arm to sample in each period from the arms still in contention.
\end{description}

\subsubsection{Methods for selecting an arm to deploy post-experiment}
Every procedure we evaluate selects an arm post-experiment in a Bayes optimal manner. 
\begin{description}
	\item[Minimizing regret:] Set $I_{\rm post} \in \argmax_{i \in [k]} \E\left[ r_{\theta}(i) \mid H_{\rm post}\right]$ to be the Bayes optimal arm for a decision-maker who wishes to maximize population-level reward. To visualize decision-quality if the experiment we stopped early, we set $\hat{I}_t \in  \argmax_{i \in [k]} \E[ r_{\theta}(i) \mid H_{t} ]$ and evaluate the regret $\E[r_{\theta}(I^*)-r_{\theta}(\hat{I}_t) ]$. Figure~\ref{fig:main_experiment_results} presents this as ``future regret if experiment were stopped.''
	\item[Minimizing the probability of incorrect selection:] Set $I_{\rm post} \in \argmax_{i \in [k]} \Prob\left( I^*=i  \mid H_{\rm post}\right)$ the Bayes optimal arm for a decision-maker who wishes to maximize the probability of correct selection. To visualize decision-quality if the experiment we stopped early, we set $\hat{I}_t \in \argmax_{i \in [k]} \Prob( I^*=i \mid H_{t} )$ and evaluate the probability of correct selection $\Prob(\hat{I}_t=I^*)$. Figure~\ref{fig:main_experiment_results} presents this as ``confidence in identity of the best arm.''
\end{description}
Because these rules are Bayes optimal, an algorithm which suffers high post-experiment regret, or attains low probability of correct selection, does so because of  inadequate information gathering within the experiment; it is not possible to improve performance by changing how decisions are made post-experiment\footnote{Consider any other rule $\tilde{\pi}_{\rm post}$ that selects an arm $\tilde{I}_{\rm post}=f(H_{\rm post})$.  Then
	\[
	\E\left[ r_{\theta}(I_{\rm post})  \right] = \E\left[ \E\left[ r_{\theta}(I_{\rm post})  \mid H_{\rm post} \right] \right]  = \E\left[ \E\left[ \max_{i \in [k]}  r_{\theta}(i)  \mid H_{\rm post} \right] \right] \geq  \E\left[ \E\left[ \max_{i \in [k]}  r_{\theta}(  \tilde{I}_{\rm post})  \mid H_{\rm post} \right] \right] = \E\left[ r_{\theta}(\tilde{I}_{\rm post})  \right].
	\] 
	A procedure that selects the arm with highest posterior mean at the end of the experiment yields greater expected reward post-experiment than any alternative, regardless of which procedure (e.g. DTS or deconfounded UCB) is used to sample arms during the experiment.  
}.

\subsection{Discussion of experiment results} 
 We simulate algorithms applied to Example \ref{ex:day_of_week_intro}. Our simulations use noise variance $\sigma^2=1$.  The latent variables $\theta^{\rm arm}_{i}$ and $\theta^{\rm day}_{x}$, $\theta^{\rm idio}_{i,x}$ have mean zero and prior standard deviation 0.5, 1.0, and 0.8 respectively. 
 We make a number of observations: 
 \begin{description}
 	\item[Results with shuffled context order.] With shuffled context order, all algorithms succeed in confidently identifying the best arm and have low post-experiment regret. Bandit algorithms like TS and UCB shift sampling effort away from clearly bad actions within the experiment and this reduces the regret they incur. 	Context unaware TS succeeds when contexts are shuffled by treating (unmodeled) contexts as if they were i.i.d.~observation noise.  Even when contexts are i.i.d., this is not statistically efficient since `controlling for' observed contexts would reduce variance. This is reflected in the fact that the regret of context unaware TS is larger than that of DTS in Figure \ref{fig:shuffled} (though, this is not a huge issue for our particular experiment parameters).
 	
 	\item[Delayed learning due to context order.] For concreteness, let's focus on round-robin sampling. In the experiment with shuffled context order, round-robin sampling quickly found a near optimal arm to deploy in the population. Due to low reward noise (i.e. small $\sigma^2$), uncertainty resolves rapidly. 
 	 With sequential context order, despite low reward noise, uncertainty about an arms' performance on Sunday only resolves at the end of the experiment.
 	 Hence, uncertainty about an arm's average performance throughout the week only resolves at the end of the experiment.
 	 The top-left of Figure \ref{fig:sequential} shows that uncertainty about the identity of the optimal arm resolves in sharp jumps at the start of each day --- a behavior that is quite different from what is depicted in Figure \ref{fig:shuffled}. At least qualitatively, this finding parallels the behavior of attainable precision in Figure \ref{fig:precision}. 
 	
 	\item[Robustness to sequential context order.] 
 	 DTS, round-robin sampling, and sequential elimination demonstrate robustness to sequential context order, while deconfounded UCB and context unaware TS appear brittle. Notably, DTS, round-robin sampling, and sequential elimination suffer tiny post-experiment regret once all contexts have been observed. In contrast, even after all days of the week have been observed, context-unaware TS and deconfounded UCB cannot identify an optimal arm to deploy post-experiment. Since all algorithms were evaluated assuming that correct posterior inferences were used for post-experiment arm selection, the failure of these algorithms indicates an inadequacy in the information they gather.
 	  	 
 	 The performance differences between deconfounded TS and a deconfounded (Bayesian) UCB in Figure \ref{fig:sequential} are quite striking, given that the literature has often emphasized the similarities between these algorithms. 
 	 A closer look at the experiment results reveals that deconfounded UCB often plays only a single arm on certain days of the week, completely failing to gather information about some arms on some days of the week. 
 	 See the next section for further discussion. 
 	 
 	\item[Aggresive exploitation.]  DTS incurs lower regret within the experiment than both round-robin sampling and sequential elimination. This is attributable to its aggressive approach in shifting effort away from arms that have a low posterior probability of being optimal given current evidence. 
 	 By comparison, sequential elimination incurs greater regret within the experiment as it cannot respond to weak initial evidence of an arm's poor performance; sequential elimination treats all arms equally unless it is highly confident that a particular arm can be ruled out.

 	 \item[Contextual regret.]  In addition to our main regret measure, we compare algorithms in terms of what we term their cumulative ``within-experiment contextual regret'': $\E\left[ \sum_{\ell=1}^{t} (R_{\ell,I^*} - R_{\ell,I_\ell} )\right]$. DTS seems to perform well according to this metric as well. Appendix \ref{sec:contextual_regret} confirms that this is always true by bounding the cumulative contextual regret of DTS. 	 
 	  One should not focus on the fact that deconfounded UCB  attains negative contextual regret in this particular experiment. This is not a general phenomenon, and it is possible to construct examples, along the lines of Example \ref{example:day-of-week-no-obs-noise}, in which it incurs large contextual regret.  	 
 \end{description}

\begin{figure}
	\centering
	\begin{subfigure}{.45\textwidth}
		\centering
		\includegraphics[width=1\linewidth]{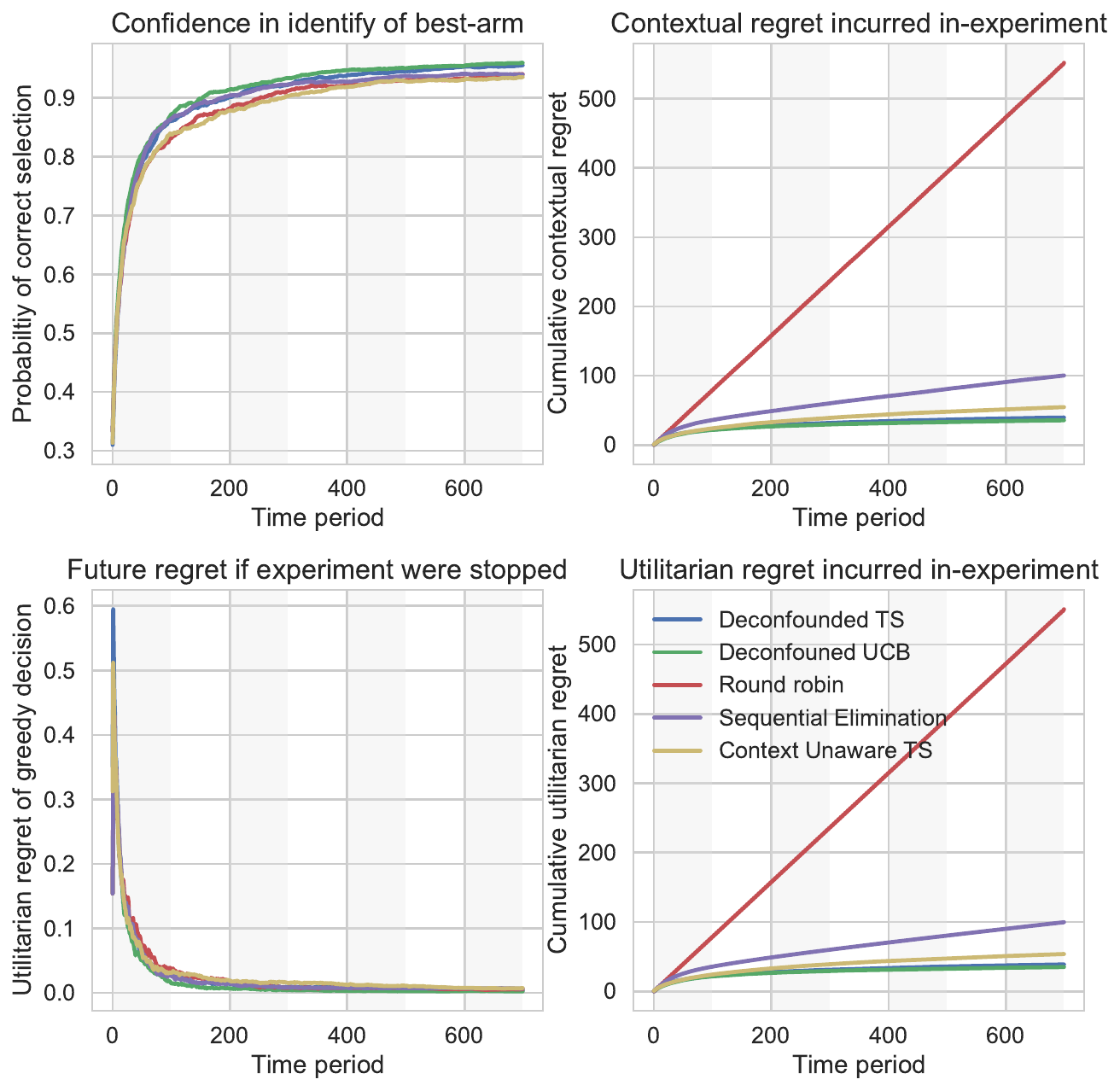}
		\caption{ Shuffled context order}
		\label{fig:shuffled}
	\end{subfigure}
	\begin{subfigure}{.45\textwidth}
		\centering
		\includegraphics[width=1\linewidth]{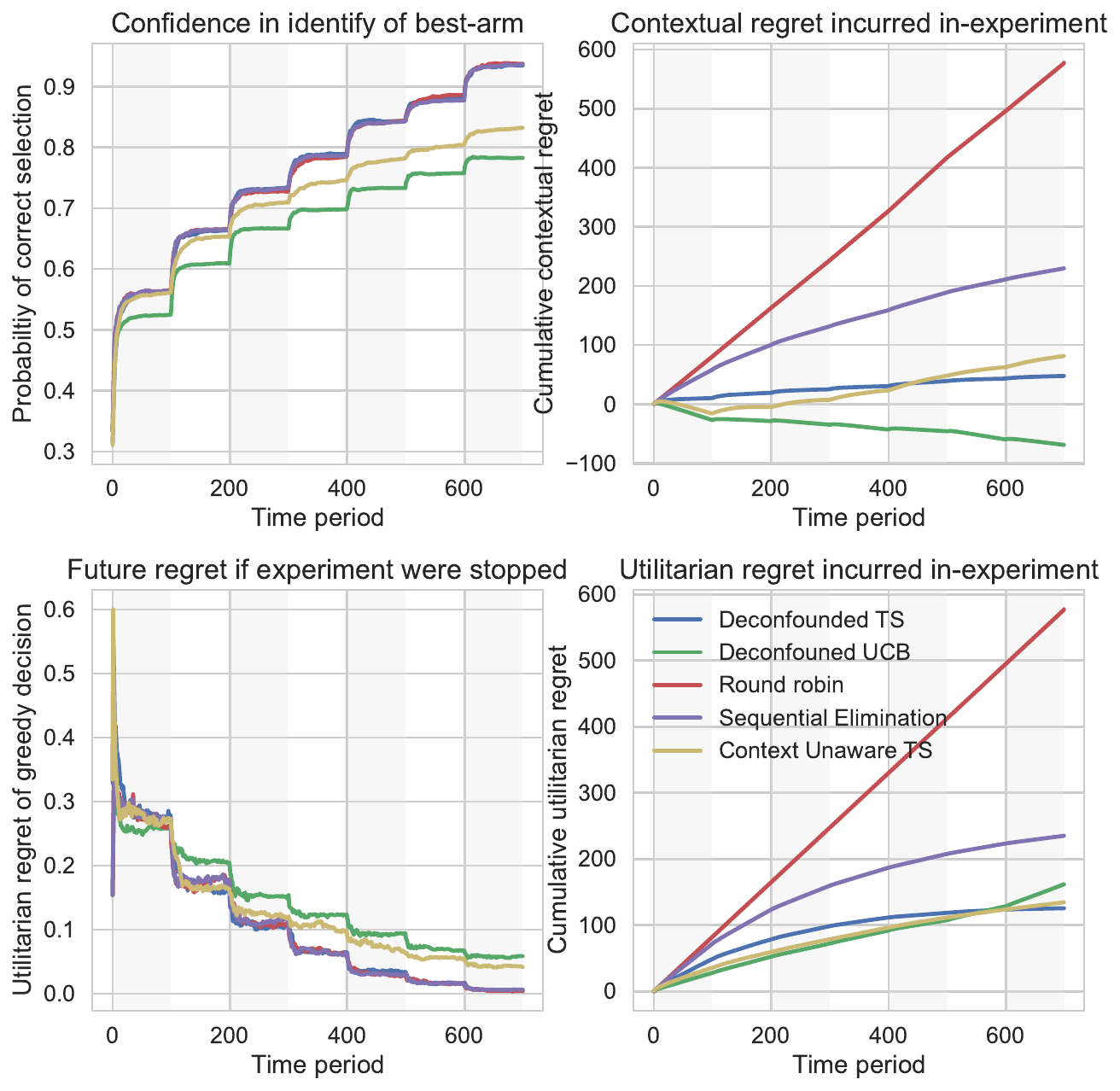}
		\caption{ Sequential context order}
		\label{fig:sequential}
	\end{subfigure}
	\caption{Algorithm performance in Example \ref{ex:day_of_week_intro} with $m=100$ periods per context and noise variance $\sigma^2=1$.  The latent variables $\theta^{\rm arm}_{i}$ and $\theta^{\rm day}_{x}$, $\theta^{\rm idio}_{i,x}$ have mean zero and prior standard deviation 0.5, 1.0, and 0.8 respectively}
	\label{fig:main_experiment_results}
\end{figure}

\section{Challenges of our model: the unexpected failure of deconfounded UCB}\label{sec:failure_preview}

As expected, our numerical experiments show that context-unaware algorithms can falter. Controlling for exogenous variation is critical to drawing accurate inferences about arms' performance.  

The numerical experiments, however, indicate that our model hosts additional surprises. 
While controlling for sources of exogenous variation is crucial, it can introduce unavoidable delays in the resolution of uncertainty as the DM anticipates relevant contexts that have yet to occur.
To illustrate this point, we present a simplified variant of Example \ref{ex:day_of_week_intro}.
\begin{example}[Simplified day-of-week effects]\label{example:day-of-week-no-obs-noise} 
	Consider a two-day experiment with $k=2$ arms and context set $\Xc = \{ e_1, e_2\}\subset\mathbb{R}^2$. The context sequence is deterministic, with $X_{t} = e_1$ for $t\leq \lfloor \frac{T}{2} \rfloor$, $X_t = e_2$ for $t> \lfloor \frac{T}{2} \rfloor$. The goal is to identify the best arm under equal context weights $x_{\rm pop}=(0.5, 0.5)$.  The components of vector $\theta=(\theta^{(i)}_{x})_{i\in [2], x\in [2]}$ are independent with $\theta^{(i)}_x = \langle \theta^{(i)}, e_x\rangle$ being the performance of arm $i$ on day $x$.  The reward at time $t$ is $R_{t,I_t} =  \langle \theta^{(I_t)},X_t\rangle$  (i.e. $\sigma=0$ so there is no reward noise\footnote{Technically, we assumed $\sigma>0$ at the end of the problem fsormulation,  writing that this allowed us to write expressions like $1/\sigma^2$, which appear often in the analysis. One could take $\sigma$ to be extremely close to 0 in this example, but the presentation is much cleaner if it equals zero exactly.}).
	Reward observations are not subject to delay (i.e. $L=1$). 
\end{example}
It is straightforward to design a learning procedure for this example. With no observation noise, the DM merely needs to play both arms once in each of the two contexts.
However, unlike in an i.i.d.~bandit model, the DM cannot opt for aggressive exploration to rapidly resolve uncertainty.
Understanding an arm's population-level performance requires waiting until the second half of the experiment when the second context becomes observable.
Before that, the DM remains uncertain.

Algorithms are differentiated by how they explore when faced with this uncertainty about population-level performance that they cannot rapidly resolve. 
The following lemma shows that deconfounded UCB continues to sample one arm repeatedly during the first half of the experiment. Because of this failure of information gathering, it can't evaluate one arm's population-level performance \emph{even at the end of the experiment}.
\begin{restatable}[Failure of deconfounded UCB]{lemma}{contextAwareUCB}\label{lem:deconfounded_ucb}
	Consider Example \ref{example:day-of-week-no-obs-noise}. Suppose that
	 $\theta^{(1)}_{ x}\sim N(0,1)$ and $\theta^{(2)}_{x}\sim N(0,3)$ for $x\in [2]$.
	If, for any fixed $z>0$,  $I_t \in \argmax_{i \in [k]} \E[r_{\theta}(i) \mid H_t]  +  z\sqrt{ {\rm Var}(r_{\theta}(i) \mid H_t)}$ holds for every $t$, there is an absolute numerical constant $c>0$ such that for all $T\in \mathbb{N}$,  $\E \left[\Delta_{\rm post }\right] \geq c$. 
\end{restatable}
\begin{proof}[Proof sketch]
	During the first half the experiment, when the context is $e_1$, deconfounded UCB plays only action 1. The UCB for action 1 exceeds that of action 2, and this UCB stays very large until after the second context is observed. Since the reward of arm 2 in context $e_1$ is  never observed, the DM may fail to deploy an optimal arm in the population.  A complete proof is provided in Appendix \ref{sec:failure}.
\end{proof}
Intuitively, it seems that DTS might avoid this information-gathering failure. During the first half of the experiment, DTS would continue (randomly) sampling both arms, only shifting measurement effort away from an under-performing arm once its posterior probability of being optimal is low and further information gathering is not useful. Our theory confirms this intuition, demonstrating that, despite its aggressive exploration, DTS gathers enough information to ensure low post-experiment regret across a broad class of problems.

It is likely possible to modify deconfounded UCB so that it performs well in this  straightforward example.\footnote{One can define an algorithm that plays arms randomly with probabilities that depend on upper confidence bounds. One can also force the algorithm to continue sampling all arms with high UCBs, eliminating arms once it is clear that they underperform. These make the decision-making logic similar to Thompson sampling or sequential elimination, respectively.} We leave this to future work, and instead focus on showing that DTS explores efficiently without any such modifications.

\section{Conclusion}\label{sec:conclusion}
\subsection{Closing thoughts}
This paper proposes a new way to model adaptive experiments conducted in the presence of nonstationary variation. Out of this model comes a more robust variant of the prominent Thompson sampling algorithm. We provide several theoretical results that provide assurances of its robustness.  At a casual glance, one might expect developing this theory to require a routine -- if intricate -- exercise in adapting widely used arguments in the literature. Perhaps surprisingly, this problem class raises many new subtleties, as is reflected in the failure of deconfounded UCB, the departure of learning dynamics in Section \ref{sec:numerical} from those in i.i.d.~bandit problems, and the original theorem statement and proof in Section \ref{sec:main_result}.

Our model is quite flexible. Special cases of it, like Example \ref{ex:latent_confounders} in the introduction or Example \ref{ex:time_zones} in the appendix, differ significantly. The extensions covered below provide even more flexibility. On the positive side, this flexibility expands the scope of problems to which DTS and our theory can be applied. Unfortunately, it also leaves a practitioner with many subtle modeling choices. A nice complement to this paper would be one focuses on a narrow real-world use case and carefully documents many of the modeling choices involved.

\subsection{Extensions}\label{subsec:policy_learning}
We close by mentioning two extensions that broaden the applicability of DTS. 

\paragraph{Policy learning.} 
Thompson sampling can be readily applied to contextual bandit problems where the goal is to learn an optimal policy that segments or personalized its decisions on the basis of observed contexts. 
In proposing DTS, we have shown how to adapt Thompson sampling so as to control for exogenous sources of variation while learning a stable decision-rule: one which does not react to evolving context. 
Appendix \ref{sec:policy_learning} provides a full discussion of and motivation for this difference.  In that section, we also extend DTS to learn policies that are reactive to some parts of the context but not others. We explain how to provide a more conventional regret bound for that algorithm, but are not certain how to extend the proof of Theorem \ref{thm:main_result} to treat this generalization.

\paragraph{Top-two sampling and a prioritization of within-experiment regret.}

We have evaluated DTS in terms of two broad performance criteria: the regret incurred (or reward accrued) during the experiment and the regret incurred (or reward accrued) post-experiment. For those who wish to prioritize attaining very low post-experiment experiment regret, it may be helpful to consider Top-two sampling \citep{russo2020simple} variants of DTS that explore more aggressively. Top-two DTS can be defined succinctly. At each time period $t\in \mathbb{N}$, it selects an arm to measure through the following procedure:
\begin{center}
	\emph{Continue sampling from the probability mass function $\Prob(I^* = \cdot \mid H_t)$ until two distinct arms are chosen.\\ Flip a (biased) coin to select one among these two.}
\end{center}
This procedure bootstraps standard randomized arm selection by DTS, defining a new way of sampling arms by running it as a subroutine. We denote the first arm sampled by top-two DTS by $\hat{I}_t$ and call this the ``leader''. Denote the second arm sampled by $\hat{J}_t$ and call this the challenger. The overall sampling probabilities obey the formula
\[ 
\Prob(I^*=i \mid H_t) = \beta\underbrace{\Prob(I^* = i \mid H_t)}_{ \text{prob. leader is } i } + (1-\beta)\underbrace{\sum_{j\neq i} \Prob(I^* = j) \Prob(I^*=i \mid I^* \neq j, H_t)}_{\text{prob. challenger is } i }. 
\]
To understand the intuition behind this modification, consider a scenario in which the DM is 95\% confident that in the identify of the optimal arm; For instance, $\Prob(I^*=1 \mid H_t) = 0.95$. In such scenarios, standard DTS plays arm 1 95\% of the time, rarely gathering information about other arms. The top-two modification encourages the algorithm to more aggressively explore the most promising challengers to arm 1. This change can reduce the length of experiment (i.e. $T$ in our formulation) required to reach very high confidence. 

A burgeoning body of theory establishes senses in which this kind of procedure is asymptotically optimal  \citep{russo2020simple, qin2017improving, shang2020fixed, jourdan2022top}.  
Most of that theory involves problems without contexts, but a a companion to this paper studies asymptotic efficiency of top-two DTS in problems with contextual variation.

\paragraph{Beyond Gaussian noise.}

Our results require a Gaussian prior and noise. 
This case is especially tractable analytically, allowing for an especially efficient implementation of DTS that avoids the need for approximate posterior sampling. 
However, we conjecture that an analogue of our theoretical results should hold more generally. 
An analogue of Proposition \ref{prop:contextual}, in the appendix, holds when reward noise is sub-Gaussian and the norm of $\theta$ is bounded almost surely.
But the proof of Theorem \ref{thm:main_result} relies on the analytical form of the Guassian posterior and we do not know how to generalize it.

\paragraph{Choosing a prior.}
  The choice of a bandwidth parameter in the prior displayed in Figure 1, for instance, is a delicate choice. Yet, most choices are likely to offer more robustness than applying vanilla Thompson sampling, an extreme special case of that prior under which is there no nonstationarity in rewards.

One possibility is to set prior parameters using data from past experiments. An online retailer who regularly conducts pricing experiments can use data from these past experiments to calibrate hyper-parameters governing the structure and severity of plausible nonstationarity. For more insights into this 'empirical Bayesian' perspective, refer to \cite{azevedo2019empirical}, \cite{dimmery2019shrinkage}, \cite{bastani2022meta}, and \cite{mcdonald2023impatient}.

{\footnotesize 
	\setlength{\bibsep}{2pt plus 0.4ex}
	\bibliographystyle{abbrvnat}
	\bibliography{references}

\begin{thebibliography}{62}
\providecommand{\natexlab}[1]{#1}
\providecommand{\url}[1]{\texttt{#1}}
\expandafter\ifx\csname urlstyle\endcsname\relax
  \providecommand{\doi}[1]{doi: #1}\else
  \providecommand{\doi}{doi: \begingroup \urlstyle{rm}\Url}\fi

\bibitem[Abbasi-Yadkori et~al.(2018)Abbasi-Yadkori, Bartlett, Gabillon, Malek,
  and Valko]{abbasi2018best}
Y.~Abbasi-Yadkori, P.~Bartlett, V.~Gabillon, A.~Malek, and M.~Valko.
\newblock Best of both worlds: Stochastic \& adversarial best-arm
  identification.
\newblock In \emph{Conference on Learning Theory}, pages 918--949. PMLR, 2018.

\bibitem[Abbasi-Yadkori et~al.(2022)Abbasi-Yadkori, Gyorgy, and
  Lazic]{yadkori22}
Y.~Abbasi-Yadkori, A.~Gyorgy, and N.~Lazic.
\newblock A new look at dynamic regret for non-stationary stochastic bandits.
\newblock \emph{arXiv preprint arXiv:2201.06532}, 2022.

\bibitem[Adusumilli(2023)]{adusumilli2023risk}
K.~Adusumilli.
\newblock Risk and optimal policies in bandit experiments, 2023.

\bibitem[Agrawal and Goyal(2013)]{agrawal2013thompson}
S.~Agrawal and N.~Goyal.
\newblock Thompson sampling for contextual bandits with linear payoffs.
\newblock In \emph{International Conference on Machine Learning}, pages
  127--135, 2013.

\bibitem[Amadio(2020)]{amadio2020bandits}
B.~Amadio.
\newblock Multi-armed bandits and the stitch fix experimentation platform,
  2020.
\newblock URL
  \url{https://multithreaded.stitchfix.com/blog/2020/08/05/bandits/}.
\newblock Accessed: May 31, 2023.

\bibitem[Araman and Caldentey(2022)]{araman2022diffusion}
V.~F. Araman and R.~A. Caldentey.
\newblock Diffusion approximations for a class of sequential experimentation
  problems.
\newblock \emph{Management Science}, 68\penalty0 (8):\penalty0 5958--5979,
  2022.

\bibitem[Athey and Wager(2021)]{athey2021policy}
S.~Athey and S.~Wager.
\newblock Policy learning with observational data.
\newblock \emph{Econometrica}, 89\penalty0 (1):\penalty0 133--161, 2021.

\bibitem[Athey et~al.(2022)Athey, Byambadalai, Hadad, Krishnamurthy, Leung, and
  Williams]{athey2022contextual}
S.~Athey, U.~Byambadalai, V.~Hadad, S.~K. Krishnamurthy, W.~Leung, and J.~J.
  Williams.
\newblock Contextual bandits in a survey experiment on charitable giving:
  Within-experiment outcomes versus policy learning, 2022.

\bibitem[Auer et~al.(2002{\natexlab{a}})Auer, Cesa-Bianchi, and
  Fischer]{auer2002finite}
P.~Auer, N.~Cesa-Bianchi, and P.~Fischer.
\newblock Finite-time analysis of the multiarmed bandit problem.
\newblock \emph{Machine learning}, 47:\penalty0 235--256, 2002{\natexlab{a}}.

\bibitem[Auer et~al.(2002{\natexlab{b}})Auer, Cesa-Bianchi, Freund, and
  Schapire]{auer2002nonstochastic}
P.~Auer, N.~Cesa-Bianchi, Y.~Freund, and R.~E. Schapire.
\newblock The nonstochastic multiarmed bandit problem.
\newblock \emph{SIAM journal on computing}, 32\penalty0 (1):\penalty0 48--77,
  2002{\natexlab{b}}.

\bibitem[Auer et~al.(2008)Auer, Jaksch, and Ortner]{auer2008near}
P.~Auer, T.~Jaksch, and R.~Ortner.
\newblock Near-optimal regret bounds for reinforcement learning.
\newblock \emph{Advances in neural information processing systems}, 21, 2008.

\bibitem[Azevedo et~al.(2019)Azevedo, Deng, Montiel~Olea, and
  Weyl]{azevedo2019empirical}
E.~M. Azevedo, A.~Deng, J.~L. Montiel~Olea, and E.~G. Weyl.
\newblock Empirical {B}ayes estimation of treatment effects with many a/b
  tests: An overview.
\newblock In \emph{AEA Papers and Proceedings}, volume 109, pages 43--47, 2019.

\bibitem[Bastani et~al.(2022)Bastani, Simchi-Levi, and Zhu]{bastani2022meta}
H.~Bastani, D.~Simchi-Levi, and R.~Zhu.
\newblock Meta dynamic pricing: Transfer learning across experiments.
\newblock \emph{Management Science}, 68\penalty0 (3):\penalty0 1865--1881,
  2022.

\bibitem[Besbes et~al.(2015)Besbes, Gur, and Zeevi]{besbes15}
O.~Besbes, Y.~Gur, and A.~Zeevi.
\newblock Non-stationary stochastic optimization.
\newblock \emph{Operations Research}, 63\penalty0 (5):\penalty0 1227--1244,
  2015.

\bibitem[Beygelzimer et~al.(2011)Beygelzimer, Langford, Li, Reyzin, and
  Schapire]{beygelzimer2011contextual}
A.~Beygelzimer, J.~Langford, L.~Li, L.~Reyzin, and R.~Schapire.
\newblock Contextual bandit algorithms with supervised learning guarantees.
\newblock In \emph{Proceedings of the Fourteenth International Conference on
  Artificial Intelligence and Statistics}, pages 19--26. JMLR Workshop and
  Conference Proceedings, 2011.

\bibitem[Bubeck and Slivkins(2012)]{bubeck2012best}
S.~Bubeck and A.~Slivkins.
\newblock The best of both worlds: Stochastic and adversarial bandits.
\newblock In \emph{Conference on Learning Theory}, pages 42--1. JMLR Workshop
  and Conference Proceedings, 2012.

\bibitem[Bubeck et~al.(2009)Bubeck, Munos, and Stoltz]{bubeck2009pure}
S.~Bubeck, R.~Munos, and G.~Stoltz.
\newblock Pure exploration in multi-armed bandits problems.
\newblock In \emph{International conference on Algorithmic learning theory},
  pages 23--37. Springer, 2009.

\bibitem[Caria et~al.(2020)Caria, Kasy, Quinn, Shami, Teytelboym,
  et~al.]{caria2020adaptive}
S.~Caria, M.~Kasy, S.~Quinn, S.~Shami, A.~Teytelboym, et~al.
\newblock An adaptive targeted field experiment: Job search assistance for
  refugees in jordan, 2020.

\bibitem[Chapelle and Li(2011)]{chapelle2011empirical}
O.~Chapelle and L.~Li.
\newblock An empirical evaluation of {T}hompson sampling.
\newblock \emph{Advances in neural information processing systems},
  24:\penalty0 2249--2257, 2011.

\bibitem[Cheung et~al.(2019)Cheung, Simchi-Levi, and Zhu]{cheung19}
W.~C. Cheung, D.~Simchi-Levi, and R.~Zhu.
\newblock Learning to optimize under non-stationarity.
\newblock In \emph{International Conference on Artificial Intelligence and
  Statistics}, pages 1079--1087. PMLR, 2019.

\bibitem[Chick et~al.(2021)Chick, Gans, and Yapar]{chick2021bayesian}
S.~E. Chick, N.~Gans, and {\"O}.~Yapar.
\newblock Bayesian sequential learning for clinical trials of multiple
  correlated medical interventions.
\newblock \emph{Management Science}, 2021.

\bibitem[Degenne et~al.(2019)Degenne, Nedelec, Calauzenes, and
  Perchet]{degenne19a}
R.~Degenne, T.~Nedelec, C.~Calauzenes, and V.~Perchet.
\newblock Bridging the gap between regret minimization and best arm
  identification, with application to a/b tests.
\newblock In \emph{Proceedings of the Twenty-Second International Conference on
  Artificial Intelligence and Statistics}, volume~89, pages 1988--1996, 2019.

\bibitem[Dimmery et~al.(2019)Dimmery, Bakshy, and Sekhon]{dimmery2019shrinkage}
D.~Dimmery, E.~Bakshy, and J.~Sekhon.
\newblock Shrinkage estimators in online experiments.
\newblock In \emph{Proceedings of the 25th ACM SIGKDD International Conference
  on Knowledge Discovery \& Data Mining}, pages 2914--2922, 2019.

\bibitem[Dud{\'\i}k et~al.(2011)Dud{\'\i}k, Hsu, Kale, Karampatziakis,
  Langford, Reyzin, and Zhang]{dudik2011efficient}
M.~Dud{\'\i}k, D.~Hsu, S.~Kale, N.~Karampatziakis, J.~Langford, L.~Reyzin, and
  T.~Zhang.
\newblock Efficient optimal learning for contextual bandits.
\newblock In \emph{Proceedings of the Twenty-Seventh Conference on Uncertainty
  in Artificial Intelligence}, pages 169--178, 2011.

\bibitem[Fan and Glynn(2021)]{fan2021diffusion}
L.~Fan and P.~W. Glynn.
\newblock Diffusion approximations for thompson sampling.
\newblock \emph{arXiv preprint arXiv:2105.09232}, 2021.

\bibitem[Farias et~al.(2022)Farias, Moallemi, Peng, and
  Zheng]{farias2022synthetically}
V.~Farias, C.~Moallemi, T.~Peng, and A.~Zheng.
\newblock Synthetically controlled bandits.
\newblock \emph{arXiv preprint arXiv:2202.07079}, 2022.

\bibitem[Frazier et~al.(2008)Frazier, Powell, and
  Dayanik]{frazier2008knowledge}
P.~I. Frazier, W.~B. Powell, and S.~Dayanik.
\newblock A knowledge-gradient policy for sequential information collection.
\newblock \emph{SIAM Journal on Control and Optimization}, 47\penalty0
  (5):\penalty0 2410--2439, 2008.

\bibitem[Jamieson and Talwalkar(2016)]{jamieson2016non}
K.~Jamieson and A.~Talwalkar.
\newblock Non-stochastic best arm identification and hyperparameter
  optimization.
\newblock In \emph{Artificial Intelligence and Statistics}, pages 240--248.
  PMLR, 2016.

\bibitem[Joulani et~al.(2013)Joulani, Gyorgy, and
  Szepesv{\'a}ri]{joulani2013online}
P.~Joulani, A.~Gyorgy, and C.~Szepesv{\'a}ri.
\newblock Online learning under delayed feedback.
\newblock In \emph{International Conference on Machine Learning}, pages
  1453--1461. PMLR, 2013.

\bibitem[Jourdan et~al.(2022)Jourdan, Degenne, Baudry, de~Heide, and
  Kaufmann]{jourdan2022top}
M.~Jourdan, R.~Degenne, D.~Baudry, R.~de~Heide, and E.~Kaufmann.
\newblock Top two algorithms revisited.
\newblock \emph{Advances in Neural Information Processing Systems},
  35:\penalty0 26791--26803, 2022.

\bibitem[Kandasamy et~al.(2018)Kandasamy, Krishnamurthy, Schneider, and
  P{\'o}czos]{kandasamy2018parallelised}
K.~Kandasamy, A.~Krishnamurthy, J.~Schneider, and B.~P{\'o}czos.
\newblock Parallelised {B}ayesian optimisation via {T}hompson sampling.
\newblock In \emph{International Conference on Artificial Intelligence and
  Statistics}, pages 133--142. PMLR, 2018.

\bibitem[Kaufmann et~al.(2016)Kaufmann, Capp{\'e}, and
  Garivier]{kaufmann2016complexity}
E.~Kaufmann, O.~Capp{\'e}, and A.~Garivier.
\newblock On the complexity of best-arm identification in multi-armed bandit
  models.
\newblock \emph{The Journal of Machine Learning Research}, 17\penalty0
  (1):\penalty0 1--42, 2016.

\bibitem[Kim and Nelson(2006)]{kim2006selecting}
S.-H. Kim and B.~L. Nelson.
\newblock Selecting the best system.
\newblock \emph{Handbooks in operations research and management science},
  13:\penalty0 501--534, 2006.

\bibitem[Kohavi et~al.(2020)Kohavi, Tang, and Xu]{kohavi2020trustworthy}
R.~Kohavi, D.~Tang, and Y.~Xu.
\newblock \emph{Trustworthy online controlled experiments: A practical guide to
  a/b testing}.
\newblock Cambridge University Press, 2020.

\bibitem[Krishnamurthy et~al.(2023)Krishnamurthy, Zhan, Athey, and
  Brunskill]{krishnamurthy2023proportional}
S.~K. Krishnamurthy, R.~Zhan, S.~Athey, and E.~Brunskill.
\newblock Proportional response: Contextual bandits for simple and cumulative
  regret minimization, 2023.

\bibitem[Kuang and Wager(2023)]{kuang2023weak}
X.~Kuang and S.~Wager.
\newblock Weak signal asymptotics for sequentially randomized experiments,
  2023.

\bibitem[Lai and Robbins(1985)]{lai1985asymptotically}
T.~L. Lai and H.~Robbins.
\newblock Asymptotically efficient adaptive allocation rules.
\newblock \emph{Advances in applied mathematics}, 6\penalty0 (1):\penalty0
  4--22, 1985.

\bibitem[Lattimore and Szepesv{\'a}ri(2020)]{lattimore2020bandit}
T.~Lattimore and C.~Szepesv{\'a}ri.
\newblock \emph{Bandit Algorithms}.
\newblock Cambridge University Press, 2020.

\bibitem[Li et~al.(2010)Li, Chu, Langford, and Schapire]{li2010contextual}
L.~Li, W.~Chu, J.~Langford, and R.~E. Schapire.
\newblock A contextual-bandit approach to personalized news article
  recommendation.
\newblock In \emph{Proceedings of the 19th international conference on World
  wide web}, pages 661--670, 2010.

\bibitem[Lieb(1973)]{lieb1973convex}
E.~H. Lieb.
\newblock Convex trace functions and the wigner-yanase-dyson conjecture.
\newblock \emph{Advances in Mathematics}, 11\penalty0 (3):\penalty0 267--288,
  1973.

\bibitem[McDonald et~al.(2023)McDonald, Maystre, Lalmas, Russo, and
  Ciosek]{mcdonald2023impatient}
T.~M. McDonald, L.~Maystre, M.~Lalmas, D.~Russo, and K.~Ciosek.
\newblock Impatient bandits: Optimizing recommendations for the long-term
  without delay.
\newblock In \emph{Proceedings of the 29th ACM SIGKDD International Conference
  on Knowledge Discovery \& Data Mining}, 2023.

\bibitem[Mellor and Shapiro(2013)]{mellor13}
J.~Mellor and J.~Shapiro.
\newblock Thompson sampling in switching environments with {B}ayesian online
  change point detection.
\newblock In \emph{Artificial Intelligence and Statistics}, pages 442--450.
  PMLR, 2013.

\bibitem[Min and Russo(2023)]{min23c}
S.~Min and D.~Russo.
\newblock An information-theoretic analysis of nonstationary bandit learning.
\newblock In \emph{Proceedings of the 40th International Conference on Machine
  Learning}, pages 24831--24849. PMLR, 2023.

\bibitem[Qin et~al.(2017)Qin, Klabjan, and Russo]{qin2017improving}
C.~Qin, D.~Klabjan, and D.~Russo.
\newblock Improving the expected improvement algorithm.
\newblock \emph{Advances in Neural Information Processing Systems},
  2017:\penalty0 5382--5392, 2017.

\bibitem[Rubin(1979)]{rubin1979using}
D.~B. Rubin.
\newblock Using multivariate matched sampling and regression adjustment to
  control bias in observational studies.
\newblock \emph{Journal of the American Statistical Association}, 74\penalty0
  (366a):\penalty0 318--328, 1979.

\bibitem[Rusmevichientong and Tsitsiklis(2010)]{rusmevichientong2010linearly}
P.~Rusmevichientong and J.~N. Tsitsiklis.
\newblock Linearly parameterized bandits.
\newblock \emph{Mathematics of Operations Research}, 35\penalty0 (2):\penalty0
  395--411, 2010.

\bibitem[Russo(2020)]{russo2020simple}
D.~Russo.
\newblock Simple {B}ayesian algorithms for best-arm identification.
\newblock \emph{Operations Research}, 68\penalty0 (6):\penalty0 1625--1647,
  2020.

\bibitem[Russo and Van~Roy(2016)]{russo2016information}
D.~Russo and B.~Van~Roy.
\newblock An information-theoretic analysis of {T}hompson sampling.
\newblock \emph{The Journal of Machine Learning Research}, 17\penalty0
  (1):\penalty0 2442--2471, 2016.

\bibitem[Russo and Van~Roy(2018)]{russo2018learning}
D.~Russo and B.~Van~Roy.
\newblock Learning to optimize via information-directed sampling.
\newblock \emph{Operations Research}, 66\penalty0 (1):\penalty0 230--252, 2018.

\bibitem[Russo and Zou(2019)]{russo2019much}
D.~Russo and J.~Zou.
\newblock How much does your data exploration overfit? controlling bias via
  information usage.
\newblock \emph{IEEE Transactions on Information Theory}, 66\penalty0
  (1):\penalty0 302--323, 2019.

\bibitem[Russo et~al.(2018)Russo, Van~Roy, Kazerouni, Osband, and
  Wen]{russo2018tutorial}
D.~J. Russo, B.~Van~Roy, A.~Kazerouni, I.~Osband, and Z.~Wen.
\newblock A tutorial on {T}hompson sampling.
\newblock \emph{Foundations and Trends{\textregistered} in Machine Learning},
  11\penalty0 (1):\penalty0 1--96, 2018.

\bibitem[Scott(2010)]{scott2010modern}
S.~L. Scott.
\newblock A modern {B}ayesian look at the multi-armed bandit.
\newblock \emph{Applied Stochastic Models in Business and Industry},
  26\penalty0 (6):\penalty0 639--658, 2010.

\bibitem[Shang et~al.(2020)Shang, Heide, Menard, Kaufmann, and
  Valko]{shang2020fixed}
X.~Shang, R.~Heide, P.~Menard, E.~Kaufmann, and M.~Valko.
\newblock Fixed-confidence guarantees for {B}ayesian best-arm identification.
\newblock In \emph{International Conference on Artificial Intelligence and
  Statistics}, pages 1823--1832. PMLR, 2020.

\bibitem[Suk and Kpotufe(2022)]{suk22}
J.~Suk and S.~Kpotufe.
\newblock Tracking most significant arm switches in bandits.
\newblock In \emph{Conference on Learning Theory}, pages 2160--2182. PMLR,
  2022.

\bibitem[Thompson(1933)]{thompson1933likelihood}
W.~R. Thompson.
\newblock On the likelihood that one unknown probability exceeds another in
  view of the evidence of two samples.
\newblock \emph{Biometrika}, 25\penalty0 (3/4):\penalty0 285--294, 1933.

\bibitem[Tropp(2011)]{tropp2011freedman}
J.~A. Tropp.
\newblock Freedman's inequality for matrix martingales.
\newblock \emph{Electronic Communications in Probability}, 16:\penalty0
  262--270, 2011.

\bibitem[Tropp(2012)]{tropp2012user}
J.~A. Tropp.
\newblock User-friendly tail bounds for sums of random matrices.
\newblock \emph{Foundations of computational mathematics}, 12\penalty0
  (4):\penalty0 389--434, 2012.

\bibitem[Tropp(2015)]{tropp2015introduction}
J.~A. Tropp.
\newblock An introduction to matrix concentration inequalities.
\newblock \emph{Foundations and Trends{\textregistered} in Machine Learning},
  8\penalty0 (1-2):\penalty0 1--230, 2015.

\bibitem[Trovo et~al.(2020)Trovo, Paladino, Restelli, and Gatti]{trovo20}
F.~Trovo, S.~Paladino, M.~Restelli, and N.~Gatti.
\newblock Sliding-window {T}hompson sampling for non-stationary settings.
\newblock \emph{Journal of Artificial Intelligence Research}, 68:\penalty0
  311--364, 2020.

\bibitem[Wu and Wager(2022)]{wu2022thompson}
H.~Wu and S.~Wager.
\newblock Thompson sampling with unrestricted delays.
\newblock \emph{arXiv preprint arXiv:2202.12431}, 2022.

\bibitem[Zhong et~al.(2023)Zhong, Cheung, and Tan]{zhong2023achieving}
Z.~Zhong, W.~C. Cheung, and V.~Y.~F. Tan.
\newblock Achieving the pareto frontier of regret minimization and best arm
  identification in multi-armed bandits, 2023.

\bibitem[Zhou et~al.(2019)Zhou, Xu, and Blanchet]{zhou2019learning}
Z.~Zhou, R.~Xu, and J.~Blanchet.
\newblock Learning in generalized linear contextual bandits with stochastic
  delays.
\newblock \emph{Advances in Neural Information Processing Systems},
  32:\penalty0 5197--5208, 2019.

\end{thebibliography}
}

\appendix 

\section{Additional examples}\label{sec:more_examples}

We illustrate two very different models of exogenous variation that can be viewed as special cases of our general problem formulation. The first example considers a bandit experiment where a single observable factor --- a user's country --- explains the non-stationary pattern of rewards. Of course, this is a simplified example. One may include many other observable features and also create more intricate models that combine observable factors with the latent ones modeled in Example \ref{ex:latent_confounders}. 
\begin{example}[Comprehensible observed contexts]\label{ex:time_zones}
	A video streaming website is testing a small change to the layout of its homepage. 
	The platform operates in $d$ different countries, and the context $X_t \in \{e_1, \ldots, e_d\}$ is a standard basis vector that encodes a user's country.  We assume this is a recorded feature.
	
	The target population context vector $x_{\rm pop} = (x_{{\rm pop},1},\ldots, x_{{\rm pop},d}) \in \mathbb{R}^d$ measures the long-term fraction of user visits among those who hail from each country. The platform estimates this by querying a database that records all user visits over the past several months. (Implicit in this approach is an assumption that $x_{\rm pop}$ is a reasonable reflection of the users who will visit over the next few months.)
	
	Individuals tend to visit this video streaming website between 7pm-11pm in their local timezone. 
	Due to timezone differences, the mix of countries among users arriving during a particular hour within the experiment may not reflect the population proportions. Thankfully, the decision-maker can use Bayesian inference to project the population level performance of each treatment arm. We illustrate this in the case when components of $\theta$ are independent. In that case,	 	
	\begin{equation}\label{eq:time_zones_population_level}
		\E\left[ r_{\theta}(i)  \mid H_t \right]  = \sum_{c=1}^{d} x_{{\rm pop},c} \E\left[\theta^{(i)}_{c} \mid H_t \right]     
	\end{equation}
	where
	\[
	\E\left[\theta^{(i)}_{c} \mid H_t \right]  = \frac{  {\rm Var}\left(\theta^{(i)}_{c}\right)^{-1} \E\left[\theta^{(i)}_{c}\right]   + \sigma^{-2}\sum_{\ell=1}^{t-L} \ind(X_\ell= c, I_\ell=i) R_\ell}{ {\rm Var}\left(\theta^{(i)}_{c}\right)^{-1}+ \sigma^{-2}\sum_{\ell=1}^{t-L} \ind(X_\ell= c, I_\ell=i) }.
	\]
	As the volume of data grows, the prior washes away and country/arm-specific means are estimated through an empirical averaging. The population average reward is estimated in \eqref{eq:time_zones_population_level}. 
	This is a Bayesian analogue of a very common technique known as post-stratification. 
\end{example}

The next example illustrates that it is possible to combine the modeling approaches taken in Examples \ref{ex:time_zones} and \ref{ex:latent_confounders}.
\begin{example}[A mixture of latent and observed factors]\label{ex:combined}
	The  context at time $t$ is a tuple $X_t=(1, Z_t,e_t) \in \mathbb{R}^{1+p+T}$, where the vector $Z_{t} \in \mathbb{R}^p$ encodes other observable user features, like the country in Example~\ref{ex:time_zones}, and $e_t \in \mathbb{R}^T$ is the $t^{\rm th}$ standard basis vector and indicates the current time period. Take $x_{\rm pop} = (1 \, ,\, z_{\rm pop} \, , \, \frac{1}{T} \sum_{t=1}^{T} e_t)$, where $z_{\rm pop}$ is a population effect. 
	Rather than specify a prior mean and covariance over the latent parameters $\theta = (\theta^{(1)}, \ldots, \theta^{(k)})$, it is more interpretable to write  $\theta^{(i)} = (\alpha^{(i)}\, ,\,  \gamma^{(i)} + \beta \, , \, \epsilon)$  and specify a prior mean and covariance for jointly Gaussian latent parameters $(\alpha^{(i)})_{i \in[k]} \in \mathbb{R}^k$,  $(\gamma^{(i)})_{i\in [k]} \in \mathbb{R}^{kp}$, $\beta \in \mathbb{R}^p$, and $\epsilon = (\epsilon_t)_{t\in [T]} \in \mathbb{R}^T$. Under this model, potential arm reward,
	\[
	R_{t,i} = \underbrace{\alpha^{(i)}}_{\text{arm eff.}}  + \underbrace{\langle  \gamma^{(i)} \,,\,  Z_t \rangle}_{\text{interaction eff.}} + \underbrace{\langle \beta \, , Z_t \, \rangle}_{ \text{context eff.} } + \underbrace{\epsilon_t}_{\text{time eff.} } + W_{t,i},
	\]
	is determined by an arm-specific effect, an interaction effect $\langle  \gamma^{(i)} \,,\,  Z_t \rangle$ between an arm-specific parameter and the observable user features, and arm-shared effects  explained by observable user features or a latent time trend. 	A prior $\gamma^{(i)} \sim N(0,  \lambda^2 I )$ where $\lambda$ is a small scalar  causes the decision-maker to shrink the posterior mean of arm-specific parameters toward zero. 
	The population mean reward of an arm,	
	\[
	r_{\theta}(i) = \alpha^{(i)} + \langle \gamma^{(i)} \, , z_{\rm pop} \, \rangle +  \underbrace{ \langle \beta \, , z_{\rm pop} \, \rangle   + \frac{1}{T}\sum_{t=1}^{T} \epsilon_t }_{\text{independent of arm}},
	\]
	measures how arm $i$ would have performed in hindsight over the past $T$ periods among a cohort of users whose average observable features match $z_{\rm pop}$. 
\end{example}

\section{An additional theoretical theoretical guarantee: a bound on contextual regret}\label{sec:contextual_regret}

Define the contextual regret of an algorithm $\Delta_t(X_t) = r_{\theta}(I^*, X_t) -  r_{\theta}(I_t, X_t)$ to be the shortfall in performance of the chosen arm $I_t$ in some context within the experiment, relative to the reward that would have been earned under the utilitarian optimal arm $I^*$. In fact,  
$\E[\Delta_{t}(X_t)] = \E \left[R_{t,I^*} -  R_{t,I_t} \right]$. 

Bounds on cumulative contextual regret can be interpreted as a limit on the decrease in reward that results from the necessity to experiment in order to learn $I^*$. 
Remark \ref{rem:contextual_regret} below highlights that caution is needed when comparing algorithms in terms of their contextual regret, as it is possible to attain negative contextual regret by systematically violating the reasons the experimenter aimed to deploy a stable treatment arm in the first place. 

Somewhat remarkably, the next result bounds the shortfall in reward accrued under DTS in terms of the number of actions, placing no conditions at all on the dimension of the context space or the pattern of nonstationarity in rewards that the context sequence may induce.  

\begin{proposition}[Bound on within-experiment contextual regret]\label{prop:contextual} Fix any context sequence $x_{1:T} \in \Xc^T$.	If $L=1$ (no observation delay), then under DTS, 
	\begin{equation}\label{eq:contextual_regret_bound}
		\frac{\E\left[\sum_{t=1}^{T}  \Delta_t(X_t) \mid X_{1:T} = x_{1:T} \right]}{T} \leq    \sigma_R \sqrt{  \frac{ 2 k \log(k)  }{ T }},
	\end{equation}
	where $\sigma_R^2 = \max_{t\in [T], i \in [k]} {\rm Var}(R_{t,i} \mid X_t=x_t)$. 
\end{proposition}
 \begin{proof} The proof follows the information-theoretic analysis of \cite{russo2016information}. While that paper studies vanilla Thompson sampling in i.i.d.~environments, the proof applies without substantial changes to DTS in nonstationary environments.   
 	
 	We use $\ent(Z)$ to denote the entropy of a random variable $Z$ and $\I(Z_1,Z_2)$ to denote mutual information between $Z_1$ and $Z_2$. 
 	Let $G_t =  \I_{ \Prob(\cdot \mid H_t, X_t  ) }\left(   I^* ;    (I_t, R_{t,I_t}  )  \right)$ be the mutual information (or 'information gain') between $I^*$ and the observation  $(I_t, R_{t,I_t}  )$ under the conditional probability measure $ \Prob(\cdot \mid H_t, X_t  )$. This is a random variable due to the randomness in $\Prob(\cdot \mid H_t, X_t  )$. The convention in information theory is to integrate over that randomness, with conditional mutual information defined as  $\I\left(   I^* ;    (I_t, R_{t,I_t}  ) \mid  H_t, X_t  \right) =\E[G_t]$. 
 	
 	Following Proposition 3, and Corollary 1, of \cite{russo2016information},  the probability matching property of DTS, $\Prob(I_t = i \mid H_t, X_t) = \Prob(I^* = i \mid H_t, X_t)$ implies the following bound on the so-called `information ratio':
 	\begin{equation}\label{eq:info_ratio_bound}
 		\Gamma_t =  \frac{\left(\E \left[ R_{t, I^*} - R_{t,I_t}  \mid H_t, X_t \right] \right)^2}{  \I_{t}\left(   I^* ;    (I_t, R_{t,I_t}  )  \right)   }   \leq 2 \sigma_R^2 k \triangleq \bar{\Gamma}.
 	\end{equation}
 	
 	Re-arranging this expression summing over $t$  
 	\begin{align*}
 		\E\left[\sum_{t=1}^{T}  \Delta_t(X_t)  \right]=	\E\left[ \sum_{t=1}^{T}  R_{t, I^*} - R_{t,I_t}    \right] = \E\left[ \E\left[ \sum_{t=1}^{T}  R_{t, I^*} - R_{t,I_t} \mid H_t, X_t   \right] \right] &= \E\left[ \sum_{t=1}^{T}  \sqrt{\Gamma_t G_t} \right] \\
 		&\leq   \sqrt{ T\cdot \bar{\Gamma}  \cdot \E\left[ \sum_{t=1}^{T} G_t \right]}. 
 	\end{align*}
 	Now we show that expected cumulative information gain is bounded by prior entropy. We have, 
 	\[
 	\E\left[ G_t \right] =  \I\left(   I^* ;    (I_t, R_{t,I_t}  ) \mid  H_t, X_t  \right) =  \I\left(   I^* ;    (X_t, I_t, R_{t,I_t}  ) \mid  H_t  \right)   - \I\left(   I^* ;    X_t \mid  H_t  \right)  =  \I\left(   I^* ;    ( X_t , I_t, R_{t,I_t}) \mid  H_t \right),
 	\]
 	where the first equality uses the chain rule and the second uses that $X_t$ is independent of $\theta$. Now, since $H_t=(X_{\ell}, I_\ell, R_{\ell, I_\ell})_{\ell \leq t-1}$, the chain rule and non-negativity of conditional entropy imply,
 	\[
 	\E\left[ \sum_{t=1}^{T} G_t \right] = \sum_{t=1}^{T} \I\left(   I^* ;    ( X_t , I_t, R_{t,I_t}) \mid  H_t \right) = \I\left(   I^* ;    H_{T+1} \right) = \ent(I^*) - \ent(I^* \mid H_{T+1} ) \leq \ent(I^*).
 	\]
 	The final claim follows from using the coarse upper bound $\ent(I^*)\leq \log(k)$ and dividing by $T$. 
 \end{proof}

 \begin{remark}[Care is needed when interpreting contextual regret]\label{rem:contextual_regret}
 	Imagine treatment arms represent possible prices, rewards reflect revenue earned by displaying a price to a customer, and context observations are features of the customer. Suppose those customer features are predictive of the customer's race. It is plausible that pricing based on race would increase revenue, but the company understands that this to be illegal, unethical, and reputationally damaging.  For that reason, they seek to deploy a fixed price, $I_{\rm post}$, after the experiment. In this setting, a bandit algorithm could attain low---even negative --- within-experiment contextual regret by targeting its prices based on customer features. But then the algorithm's decision-making within the experiment clearly goes against the way the company hopes to make decisions post-experiment. More examples like this are discussed in Appendix \ref{sec:standardization}. 
 \end{remark}

\section{Discussion of adversarial nonstationary bandits}\label{sec:adversarial}

A special case of our formulation produces a Bayesian analogue of common adversarial bandit models \citep{auer2002nonstochastic,lattimore2020bandit} . Assume the context at time $t$ is the $t^{\rm th}$ standard basis vector: $X_t  = e_t \in \mathbb{R}^T$. The reward at time $t$ is then 
\[
R_{t,I_t} = \theta_{t}^{(I_t)} + W_{t,I_t}.
\]
If one chooses $x_{\rm pop}=(1/T, \ldots, 1/T)$, then
\[
I^* = \argmax_{i \in [k]} \frac{1}{T} \sum_{t=1}^{T} \theta_{t}^{(i)}
\]  
is the best-arm in hindsight over the course of the experiment and per-period within-experiment contextual regret (See Appendix~\ref{sec:contextual_regret}),
\[ 
\frac{1}{T} \sum_{t=1}^{T} \Delta_t(X_t) = \max_{i\in [k]}  \frac{1}{T} \sum_{t=1}^{T} \theta_{t}^{(i)} - \frac{1}{T}\sum_{t=1}^{T}\theta_{t}^{(I_t)} 
\]
benchmarks the performance of selected arms within the experiment against the best fixed arm.
This matches the performance measure in the adversarial bandit literature.
Post-experiment utilitarian regret assesses whether the algorithm can select an arm $I_{\rm post}$ at the end of the experiment whose hindsight performance is competitive with that of the hindsight-optimal arm $I^*$. 

In this special case, our bound on contextual regret in Proposition \ref{prop:contextual} is then reminiscent of results in the adversarial bandit literature. 
Indeed that case, $O(\sqrt{k \log(k)/T})$ regret bounds are well known, even when rewards are picked by an adversary \citep{auer2002nonstochastic}. What distinguishes Proposition \ref{prop:contextual} is that it applies to a very different algorithm. 

Our model and algorithm deviates from the adversarial bandit literature in two substantive ways. Both may allow the DM to write off arms with poor population-level performance earlier in the experiment than would be possible in a typical adversarial model:
\begin{enumerate}
	\item A structured prior distribution over $\theta$ may restrict the form of nonstationarity that is plausible. Classical i.i.d.~bandits are an extreme special case in which the covariance structure over $\theta_1,\ldots, \theta_T$ is degenerate. Other structured priors, like Example \ref{ex:latent_confounders}, would guide an algorithm like DTS to guard against particular forms of nonstationarity.
	\item Our formulation accommodates rich contextual observations that capture features beyond the current time period. Example \ref{ex:time_zones}, presented in Appendix \ref{sec:more_examples}, provides a simple illustration. In that example, it may be possible to infer an arm's population-level performance early in the experiment. 
\end{enumerate}

It is also worth emphasizing that a bound on post-experiment regret, like  Theorem \ref{thm:main_result}, cannot be deduced from bonds on cumulative within-experiment contextual regret, like Proposition \ref{prop:contextual}. So-called ``online-to-batch'' conversions do not work when the rewards sequence is nonstationary.

\section{Comparison to contextual bandits and extension to policy learning problems}\label{sec:policy_learning}

\subsection{Comparison to linear contextual bandit models} The information structure of our problem corresponds to that in a classical linear contextual bandit problem \citep{li2010contextual,agrawal2013thompson}, but the learning objective differs. In a typical linear contextual bandit problem, the DM wishes to learn an optimal treatment-rule $\pi^*: \Xc \to [k]$ satisfying $\pi^*(x) \in \argmax_{i \in [k]} \E[R_{t,i} \mid \theta, X_t=x]$ for each $x\in \Xc$.
TS for contextual bandit problem selects an arm $I_t$ at time $t$ randomly with sampling probabilities,
\[
\Prob(I_t = i \mid H_t, X_t) = \Prob(\pi^*(X_t) = i \mid H_t, X_t),
\]
which are matched to the posterior distribution of the reward maximizing arm \emph{in the current context}.

Practitioners are, inevitably, faced with question of which features to include in the context vector $x\in \Xc$. If one is using contextual TS, including a feature has two implications:
\begin{description}
	\item[Inference:] Including a feature in the context vectors directs the algorithm to `control' for past variation in this feature when making inferences about the reward an arm will generate in the future. 
	\item[Reactivity:] Including a feature in the context vectors directs the algorithm to segment decisions it makes on the basis of this feature. In practice, this could mean that individuals who are different along this dimension receive different treatments or that, across several interactions, an individual receives different treatments as this feature changes. 
\end{description}
Under our model, these two issues are decoupled. Deconfounded Thompson sampling is designed to account for contextual variation when performing inference while still learning a population level decision-rule that is not reactive to context. Appendix \ref{sec:failure} presents an example in which contextual TS fails to gather the information necessary to select a good population-level arm $I_{\rm post}$; simply put, its exploration is directed toward a different goal.

Why would an experimenter aim to learn a policy that does not react to (some components of) an observed context? 
One reason, which cuts across applications, is that this can vastly reduce data requirements. 
Beyond this, we provide a substantive discussion in Section \ref{sec:standardization}. We now extend DTS to react to \emph{particular} components of the context.

\subsection{Generalization of DTS}

We sketch a generalization of DTS which aims, suppose the goal is to identify the best policy from a pre-specified class $\Pi$. Each element $\pi \in \Pi$ is a mapping from $\Xc$  to $[k]$.  Overloading notation, take  
\begin{equation}\label{eq:policy_level_reward}
r_{\theta}(\pi) = \int_{\Xc} r_{\theta}( \pi(x) \,,\, x ) \mathrm{d}\Dc_{\rm pop}(x) 
\end{equation}
to be the average reward accrued by $\pi$ under the population,  generalizing \eqref{eq:pop_avg_reward}. Define $\pi^* \in \argmax_{\pi \in \Pi} r_{\theta}(\pi)$ to the policy within the policy class which maximizes average reward under the true parameter $\theta$. To simplify the presentation, assume that this maximum exists and is unique almost surely. 

This objective can interpolate between two extremes:
\begin{enumerate}
	\item \emph{Complete standardization:} The policy class $\Pi=\{\pi^{(1)}, \ldots, \pi^{(k)} \}$ has just $k$ elements. Each $\pi^{(i)}$ maps any $x\in \Xc$ to $\pi(x)=i$, corresponding to a decision-rule that does not segment its decisions on the basis of context. This models the hypothetical A/B test considered in the introduction and recovers our formulation in Section \ref{sec:formulation}. 
	\item \emph{Complete personalization:} This policy class $\Pi$ contains all possible functions mapping $\Xc$ to $[k]$. This is a common formulation in contextual bandit models \citep{li2010contextual}. Given perfect knowledge of $\theta$,  the optimal policy plays $\pi^*(x) \in \argmax_{i \in [k]} r_{\theta}(i,x)$. In this sense, optimal decision-making completely decouples across contexts.
\end{enumerate}
The next two examples illustrate policy classes in between these two extremes. 
\begin{example}[Segmentation]
	The context space is divided into $m$ disjoints  segments as $\Xc= \Xc_1 \cup \cdots \cup \Xc_m$. Segments may, for instance, represent distinct geographical regions.  The policy class $\Pi$ consists of all rules $\pi:\Xc \to [k]$ obeying for each segment $j$ the constraint $\pi(x) = \pi(x')$ for all $x,x' \in \Xc_j$. That is, the policy class consists of rules that associate each segment with an action.
\end{example}
\begin{example}[Protected features]
	A context $x = x_{1:d} = (x_1,\ldots, x_d)$ is divided into two parts. The policy can be react to the first $d_0$ features when selecting actions but features $x_{d_0+1 : x_d}$ are protected attributes that may only be used to deconfound inferences when looking at past data. Formally, the policy class 
	\[
	\Pi = \left\{ \pi:\Xc \to [k]  \mid  x_{1:d_0} = x'_{1:d_0} \implies \pi(x) = \pi(x')     \right\}
	\]
	consists of all decision-rules whose output is invariant to the protected attributes.	 
\end{example}
Natural justifications for constraining $\Pi$ are discussed at length in Section \ref{sec:standardization}.

Let us generalize DTS to treat such problems. We view DTS as a rule for selecting a sequence of policies $(\pi_1,\ldots, \pi_T, \pi_{\rm post})$. Within the experiment,  the arm selected at time $t$ is determined as $I_t=\pi_t(X_t)$; one could equivalently view DTS as a rule for selecting these arms.  In constructing the reward measure $r_{\theta}(\pi_{\rm post})$, we implicitly assume post-experiment decisions are by applying $\pi_{\rm post})$ to the observed context.  The model defining reward realizations within the experiment is the same as before.
Building on the definitions of DTS in \eqref{eq:probability_matching_def} and \eqref{eq:bayes_selection}, generalized DTS randomly samples a policy at time $t$ according to
\begin{equation}\label{eq:probability_matching_policy}
\Prob(\pi_t = \pi \mid H_t) = \Prob(\pi^* = \pi \mid H_t)   \quad \forall \pi \in \Pi 
\end{equation}
and selects a policy to deploy in the population according to
\[
\pi_{\rm post} \in \argmax_{\pi \in \Pi} \E\left[ r_{\theta}(\pi) \mid H_{\rm post} \right]. 
\]
With a completely standardized policy class, this algorithm is DTS. With a completely personalized policy class, it is the standard definition of Thompson sampling in contextual bandits. This is a purely intellectual definition of the algorithm and whether it can be implemented efficiently depends on the structure of the policy class and reward model. 

The next result generalizes Proposition \ref{prop:contextual}, which bounds the the within-experiment contextual regret of DTS. It depends on the entropy of the optimal policy, which is always bounded as $\ent(\pi^*) \leq \log(|\Pi|)$. Under complete standardization, $\ent(\pi^*) \leq \log(k)$, recovering Proposition \ref{prop:contextual}. Under complete personalization, entropy scales with the dimension of the feature vectors, and this proposition roughly yields a bound on the order of $\sigma_{R}\sqrt{kd/T}$. In between these extremes, the entropy term reflects the complexity of the policy class. Similar results that depend on the logarithm of the size of the policy class, rather than entropy, are known in the non-stochastic bandit literature \citep{beygelzimer2011contextual}. 
The novelty in this result is in providing a similar guarantee for a very different type of algorithm, using a different (information-theoretic) proof technique. 
\begin{proposition}[Generalized within-experiment contextual regret]\label{prop:contextual_generalized} Assume $L=1$ (no observation delay). Furthermore, assume the policy class $\Pi$ is finite. Define the within-experiment contextual regret by $\Delta_{t}(X_t) = r_{\theta}\left( \pi^*(X_t) \, , \, X_t \right) - r_{\theta}\left( \pi_t(X_t) \, , \, X_t \right)$. Then, 
	\[
	\frac{\E\left[  \sum_{t=1}^{T}  \Delta_{t}(X_t) \mid  X_{1 :T} = x_{1:T}   \right]}{T}  \leq  \sigma_{R}\sqrt{\frac{ 2 \cdot k \cdot \ent(\pi^*) }{ T}}.
	\]
	where $\sigma_{R}^2= \sup_{t \in [T], i\in [k]} {\rm Var}(R_{t,i} \mid X_t=x_t)$. 
\end{proposition}
This result offers some assurances, but, unfortunately, we do not know how to extend our analysis of utilitarian regret in Theorem \ref{thm:main_result} to analyze this generalized form of DTS. 
Here is a short proof sketch; the same kind of argument was used to prove Lemma 4.12 of \cite{min23c}. 
\begin{proof}
	The proof is the same as that of Proposition \ref{prop:contextual}. We detail only the changes and do not rewrite the proof. Define $I^*_t = \pi^*(X_t)$. The probability matching property with respect to policies in \eqref{eq:probability_matching_policy} implies that $\Prob(I_t = i \mid X_t, H_t) = \Prob(I^*_t = i \mid X_t, H_t)$. Using this, we have the following bound on the so-called `information ratio':
	\begin{equation}
		\Gamma_t  \triangleq  \frac{\left(\E \left[ R_{t, I^*_t} - R_{t,I_t}  \mid H_t, X_t \right] \right)^2}{  \I_{t}\left(   \pi^* ;    (I_t, R_{t,I_t}  )  \right)   }  \leq  \frac{\left(\E \left[ R_{t, I^*_t} - R_{t,I_t}  \mid H_t, X_t \right] \right)^2}{  \I_{t}\left(   I^*_t ;    (I_t, R_{t,I_t}  )  \right)   } \leq 2 \sigma_R^2 k \triangleq \bar{\Gamma},
	\end{equation}
    where the first step is the data processing inequality. The second step is the same as in \eqref{eq:info_ratio_bound} in the proof of Proposition \ref{prop:contextual} and follows using the argument as in Proposition 3, and Corollary 1, of \cite{russo2016information}. 
    From here we use the same argument as in the proof of Proposition \ref{prop:contextual},  but now we define the information gain $G_t =  \I_{t }\left(   \pi^* ;    (I_t, R_{t,I_t}  )  \right)$ as being relative the optimal policy $\pi^*$ rather than the optimal arm $I^*_t$. 	
\end{proof}

\subsection{Reasons for standardization}\label{sec:standardization}
Continuing the discussion above, we might say that DTS implements a \emph{standardized} decision at the end of an experiment, since the arm $I_{\rm post}$ is applied across all future contexts. Despite substantial possible benefits of personalization, 
public policies, operations processes, medical procedures, products, and prices are often relatively standardized. The reasons for this are varied and may be difficult to incorporate into a reward measure associated with an individual's response to the treatment decision: 
	\begin{itemize}
		\item \emph{Operational benefits:} In the example described in Figure \ref{fig:spotify}, selecting a single UI and ML algorithm allows product designers and engineers to maintain and iterate on a standard product. Standardization is ubiquitous in mass-manufactured physical goods or in repeated operations involving humans because of efficiency benefits. 
		\item \emph{Fairness, ethical, or legal constraints:}  In the year 2000, Amazon tested strategies which charged customers different prices for the same good.\footnote{\url{https://www.computerworld.com/article/2588337/amazon-apologizes-for-price-testing-program-that-angered-customers.html}} They faced backlash from customers who believed the practice to be unfair. They appear not to have engaged in the practice since. Many forms of unequal treatment are not only perceived to be unfair, but are  illegal in many countries. 
		\item \emph{Incentive compatibility constraints:} Consider an experiment designed to learn how to price. If the experiment selects a policy or pricing mechanism that charges different prices based on timing or past customer behavior, this mechanism may not be incentive compatible. Customers may respond optimally by modifying behavior to avoid price increases. 
		\item \emph{Social benefits:} On a social media platform, a dating app, or a two sided marketplace, standardizing the product for those who are posting content may improve the experience for those who consume that content. Digital education opens up the possibility of personalizing course content. However, a hidden cost of this is that students would not be able to easily discuss with each other.  
		\item \emph{Consistency benefits:} Users may expect a consistent and familiar experience. In the product testing example in Figure \ref{fig:spotify}, changing the UI based on the user's last ten minutes of usage, or whether it is currently morning or evening, might create an erratic and frustrating experience. 
		\item \emph{Sample complexity benefits:} Much less data may be required to select a single arm than to identify a more complex policy. Our theory makes this formal. 
	\end{itemize}
	Most of these considerations cannot be captured through a policy-level reward function in the form \eqref{eq:policy_level_reward}. Rather than modify the objective function, we have incorporated them via constraints on the policy class. 

\section{Failure of alternative algorithms}\label{sec:failure}

\subsection{Failure of deconfounded UCB: Proof of Lemma \ref{lem:deconfounded_ucb}}
We being by restating the claim in Section \ref{sec:failure_preview}. 
\contextAwareUCB*
\begin{proof}
	Let $U_{t,i}= \E[r_{\theta}(i) \mid H_t] + z \cdot \sqrt{ {\rm Var}(r_{\theta}(i) \mid H_t )}$ denote the UCB of arm $i$. Since $U_{1,2} > U_{1,1}$,  the initial arm selection is $I_1 = 2$. When $\theta^{(2)}_{1}>0$, the posterior mean and standard deviation satisfy  $m_{2,2} = \frac{\theta^{(2)}_{1}}{2} > 0 = m_{2,1}$ and $s_{2,2} = \frac{1}{2}\sqrt{{\rm Var}\left(\theta^{(2)}_1 \mid H_2\right) + {\rm Var}\left(\theta^{(2)}_2 \mid H_2\right) } =\sqrt{\frac{3}{2}} >\sqrt{\frac{1}{2}} = s_{2,1}.$  This implies $U_{2,2} > U_{2,1}$ and arm $I_2=2$ is again selected. This process repeats, showing that if $\theta^{(2)}_{1}>0$, then arm $2$ is chosen for each of the first $\lfloor\frac{T}{2}\rfloor$ periods. We can lower bound simple regret by imagining that the decision-maker has perfect knowledge of $\theta^{(2)}_{1}$, $\theta^{(2)}_{2}$ and $\theta^{(1)}_{2}$ when selecting $I_{\rm post} \in \argmax_{i\in[2]} \E\left[\frac{\theta^{(i)}_1 + \theta^{(i)}_2}{2} \mid H_{\rm post}\right]$, resulting in:
	\[ 
	\E\left[\Delta_{\rm post} \right] \geq \E\left[  \left( \max\left\{ \frac{\theta^{(1)}_{1}  + \theta^{(1)}_{2} }{2}\, ,\, \frac{\theta^{(2)}_{1}  + \theta^{(2)}_{2} }{2} \right\} - \max\left\{ \frac{ \theta^{(1)}_{2} }{2}\, ,\, \frac{\theta^{(2)}_{1}  + \theta^{(2)}_{2} }{2} \right\}    \right)  \ind\left( \theta^{(2)}_{1}>0  \right)    \right] >0.   
	\]
	The strict inequality is due to the gap in Jensen's inequality, reflecting the value of having perfect information about $\theta_{1}^{(1)}$ when making a  decision. 
\end{proof}

\subsection{Failure of context-unaware algorithms}\label{subsec:unaware_algo}

Section \ref{sec:numerical} showed that a context-unaware version of Thompson sampling can fail. Here, we make that observation formal, just as we have for deconfounded UCB.

We define context-unaware Thompson sampling to be an arm algorithm that chooses arm at time $t$ according to
\begin{equation}\label{eq:ts_variant_incorrect}
	I_t \in \argmax_{i\in [k]} \nu_{t,i}    \quad \text{where}  \quad \nu_{t,i} \mid H_t \sim N\left(\tilde{m}_{t,i}\,, \, \tilde{s}^2_{t,i}\right),
\end{equation}
where $\tilde{m}_{t,i}$ and $\tilde{s}^2_{t,i}$ are parameters of a pseudo-posterior, defined below. In \eqref{eq:ts_variant_incorrect}, $\nu_{t,1},\nu_{t,2},\ldots,\nu_{t,k}$ are sampled independently

The pseudo-posterior is updated as if observations were i.i.d. From the algebra of Bayes rule for Gaussian, when $\sigma^2>0$, we define this as 
\[
\tilde{s}^2_{t,i}=\left( \tilde{s}^{-2}_{1,i} + \sigma^{-2}\sum_{\ell=1}^{t-1} \ind(I_\ell = i)  \right)^{-1} 
\quad \text{and} \quad \tilde{m}_{t,i}=\tilde{s}^2_{t,i}\left( \sigma^{-2}\sum_{\ell=1}^{t-1} \ind(I_\ell =i) R_{\ell} \right),
\]
where $ \tilde{s}^{-2}_{1,i}>0$ is some initial value. The natural definition when there is no observation noise (i.e. $\sigma^2=0$) is derived by taking the limit as  $\sigma^{2}\downarrow 0$. In particular, we set $\tilde{s}^2_{t,i}=0$ if arm $i$ has been played previously and $\tilde{m}_{t,i}$ to be $0$ if arm $i$ was never played previously and to be the empirical average reward otherwise.

The next lemma formalizes that this algorithm risks confounding. The same result applies to a context-unaware UCB algorithm, which forms UCBs based on $\tilde{m}_{t,i}$ and $ \tilde{s}^2_{t,i}$. 
At a high-level, these algorithms fail because the way they perform inference does not reflect the problem's true information structure. The proof is provided at the end of this subsection. 

\begin{restatable}[Failure of context-unaware TS]{lemma}{contextUnaware}\label{lem:context_unaware_ts}		
	Consider Example \ref{example:day-of-week-no-obs-noise}, presented in Section \ref{sec:failure_preview}. Suppose the components of the vector $\theta=(\theta^{(i)}_{x})_{i\in [2], x\in [2]}$ are independent with $\theta^{(1)}_{x}\sim N(0,1)$ and $\theta^{(2)}_{x}\sim N(0,2)$ for $x\in [2]$. 
	Let $L=1$ (no delay).
	If \eqref{eq:ts_variant_incorrect} holds, there exists an absolute numerical constant $c>0$ such that  for all $T\in \mathbb{N}$, $\E\left[ \Delta_{\rm post}\right]  \geq c$. 
\end{restatable}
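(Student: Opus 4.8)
The plan is to exhibit a single ``confounding event'' whose probability is bounded below uniformly in $T$, on which the data the algorithm collects is provably uninformative about one arm's second-day value, and on which this forces the Bayes selection to pick the wrong arm by a constant margin. Write $a_i = \theta^{(i)}_1$ and $b_i = \theta^{(i)}_2$ for the two arms' day-1 and day-2 parameters, and recall the day structure places the first $m$ periods on day~1 (reward $a_{I_t}$) and the next $m$ on day~2 (reward $b_{I_t}$), so $T=2m$. The crucial consequence of $\sigma^2=0$ is that context-unaware inference collapses to point masses: the instant an arm is first played, $\tilde{s}^2_{t,i}=0$ and $\nu_{t,i}$ equals the deterministic empirical average of its observed rewards. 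In particular, an arm first played on day~1 has its sampled value frozen at its day-1 reward until (if ever) it is played again.

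\textbf{The confounding event.} I would condition on the parameter event $G=\{a_1,b_1\in[-1,1],\ a_2\in[-5,-3],\ b_2\in[8,10]\}$, which has a fixed positive probability under the product-Gaussian prior and on which arm~2 is the true best, $(a_2+b_2)/2\in[1.5,3.5]$, while arm~1 is mediocre, $(a_1+b_1)/2\in[-1,1]$. Independently of $\theta$, the algorithm plays arm~2 at $t=1$ with probability $\tfrac12$ (at $t=1$ both unplayed arms are sampled from $N(0,1)$); and given this, for $m\ge 2$ it plays arm~1 at $t=2$ with probability $\Phi(-a_2)\ge \Phi(3)$, since arm~2 is then frozen at $a_2\le -3$ while arm~1 is still sampled from $N(0,1)$. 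Calling this pair of events $B$, one gets $\Prob(B\cap G)\ge \tfrac12\Phi(3)\,\Prob(G)>0$, uniformly in $m\ge 2$.

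\textbf{Dynamics, posterior, and regret.} On $B\cap G$ both arms are frozen by the end of period~$2$: arm~2 at $a_2\le -3$ and arm~1 at $a_1\in[-1,1]$. From then on every $\nu_{t,i}$ is deterministic, so the trajectory is fully determined, and the key point is that arm~1 dominates arm~2 throughout day~2: arm~1's running empirical average is a weighted mean of values in $[-1,1]$, hence always exceeds $-3\ge a_2$, whereas arm~2's sampled value stays pinned at $a_2$. Consequently arm~2 is never played on day~2 and $b_2\in[8,10]$ is never observed, while arm~1 is played on both days so that $a_1$ and $b_1$ are. Because the prior makes the coordinates of $\theta$ independent, the true prior on the unobserved $b_2$ is mean-zero, and rewards are noiseless, the posterior given $H_T^+$ is a point mass on the observed coordinates and equals $N(0,2)$ on $b_2$; thus $\E[\mu(\theta,2,w)\mid H_T^+]=a_2/2\le -3/2$ while $\E[\mu(\theta,1,w)\mid H_T^+]=(a_1+b_1)/2\ge -1$. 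The Bayes rule therefore selects $\hat{I}_T^+=1$, incurring $\Delta_T=(a_2+b_2)/2-(a_1+b_1)/2\ge 0.5$.

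\textbf{Conclusion and main obstacle.} Combining these, $\E[\Delta_T]\ge 0.5\,\Prob(B\cap G)\ge 0.25\,\Phi(3)\,\Prob(G)=:c_0>0$ for every $T=2m$ with $m\ge 2$; for the finitely many remaining small horizons $\E[\Delta_T]>0$ strictly (the continuous posterior always assigns positive probability to selecting the suboptimal arm), so taking $c$ to be the overall infimum gives the claim. The same argument applies verbatim to context-unaware UCB, whose confidence widths also vanish after one play. I expect the main obstacle to be rigorously pinning down the data-collection dynamics of the third step: one must rule out that arm~2 is ever re-sampled on day~2 across all of the algorithm's internal randomness and despite the drift of arm~1's empirical average once day-2 rewards arrive. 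The bounded-interval conditioning in $G$ is exactly what makes this domination argument clean, since it forces a permanent gap between arm~1's in-range average and arm~2's frozen, out-of-range day-1 value.
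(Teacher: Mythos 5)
Your proof is correct and takes essentially the same route as the paper's: both condition on a constant-probability parameter event, exploit the fact that with $\sigma^2=0$ a once-played arm's sampled value is frozen at its day-1 observation, exhibit a two-period action prefix of constant probability after which one arm's empirical average permanently dominates the other's frozen value, and conclude that the true best arm is never measured in the second context, so the Bayes selection errs by a constant margin. The only differences are cosmetic --- you play arm 2 first and arm 1 second (the paper does the reverse), and you use explicit numerical intervals where the paper uses structural inequalities on $\theta$ --- and if anything your write-up is more explicit about the final Bayes-selection comparison and the small horizons $T\leq 3$, which the paper's proof glosses over.
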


The next remark interprets the failure of context-unaware TS in terms of confounding, using the potential outcomes formalism of \cite{rubin1979using}. 
\begin{remark}[Interpretation as confounding] 
	One can view the failure of context-unaware TS as being driven by 'confounding' due to omitted contextual variables.  Let $\tau$ denote a time drawn uniformly at random from $\{1,\ldots, T\}$, independent of all else. Then the tuple $(I_{\tau}, X_{\tau},  R_{\tau, I_\tau} )$ looks like a random example selected from the data collected by context-unaware TS. 
	By the model assumptions, the following conditional unconfoundedness  (also known as ignorability) condition holds:
	\[
	I_{\tau} \perp (R_{\tau,i} : i \in [k]) \mid X_{\tau}.
	\]  
    Conditioned on the context, the chosen arm is independent of potential reward outcomes. But context-unaware TS performs inferences without conditioning on contexts, and due to the co-occuring patterns in the contexts sequence and the sequence of chosen arms
	\[ 
	I_{\tau}  \not\perp  (R_{\tau,i} : i \in [k]).
	\]
\end{remark}

We now prove Lemma \ref{lem:context_unaware_ts}.
\begin{proof}[Proof of Lemma \ref{lem:context_unaware_ts}.]
	It is not hard to show that $\E[\Delta_{\rm post}]>0$ for any fixed $T$. To show the result, then, it is without loss of generality to assume $T\geq 4$. Let $\Theta'$ denote the set of parameter vectors satisfying the following properties: 
	\begin{enumerate}
		\item $\frac{\theta^{(2)}_{1}+\theta^{(2)}_{2}}{2} > \frac{\theta^{(1)}_{1}+\theta^{(1)}_{2}}{2}$:  This implies that the optimal arm is $I^*=2$ 
		\item  $\min\left\{  \theta^{(1)}_{1}, \theta^{(1)}_{2}\right\} > \theta^{(2)}_{1}$: This implies arm 1 appears to be the best if arm 2 is  only measured in the context $e_1$. 	
		\item $\theta^{(1)}_{1}<0$: This implies that if arm 1 is sampled in the first period, arm 2 has at least a $\frac{1}{2}$ chance of being sampled in the second period. 
	\end{enumerate}
	In the first period, let $\nu_{1,1}\sim N\left( \tilde{m}_{1,1}, \tilde{s}_{1,1}^2\right)$ and $\nu_{1,2}\sim N\left( \tilde{m}_{1,2}, \tilde{s}_{1,2}^2\right)$ denote the sampled parameters, and we denote the probability of playing arm 1 by
	\[
	c_0 \triangleq \Prob( \nu_{1,1}> \nu_{1,2}) > 0.
	\]
	Conditioned on the event that $\theta\in \Theta'$ and $I_1 = 1$, we have $\left(\tilde{m}_{2,1},\tilde{s}_{2,1}^2 \right)= \left(\theta^{(1)}_{1},0\right)$ and the probability of playing arm 2 in the second period is
	\[
	\Prob\left(\nu_{2,2} > \theta^{(1)}_{1}  \mid \theta \in \Theta', I_1=1  \right) \geq \frac{1}{2}
	\]
	where the inequality holds due to Condition 3 above. Conditioned on the event that $\theta\in \Theta'$, $I_1 = 1$ and $I_2 = 2$, we have $\left(\tilde{m}_{3,1},\tilde{s}_{3,1}^2\right) = \left(\theta^{(1)}_{1},0\right)$ and $\left(\tilde{m}_{3,2}, \tilde{s}_{3,2}^2\right) = \left(\theta^{(2)}_{1},0\right)$. Due to Condition 2, TS will always measure arm 1 afterwards. Hence,
	\begin{align*}
		\E[\Delta_{\rm post}] 
		&\geq \E\left[\left(\frac{\theta^{(2)}_{1} + \theta^{(2)}_{2}}{2} - \frac{\theta^{(1)}_{1} + \theta^{(1)}_{2}}{2}\right) \ind(\theta\in\Theta')\ind(I_1=1,I_2=2)\right]\\
		&\geq \frac{c_0}{2}\E\left[\left(\frac{\theta^{(2)}_{1} + \theta^{(2)}_{2}}{2} - \frac{\theta^{(1)}_{1} + \theta^{(1)}_{2}}{2}\right) \ind(\theta\in\Theta')\right] > 0.
	\end{align*}
	This completes the proof.
\end{proof}

\subsection{Failure of contextual bandit algorithms}
The goal in our formulation is to select one among a very restricted set of decision-rules: those that choose a common action, irrespective of context. Experimentation should be tailored to this objective. Here, we give insight into potential failures when an exploration algorithm is designed with a different learning target in mind. Consider the following example. There are three actions, and the decision-maker would like to identify the best action to employ on average, across all contexts. Imagine that the context set describes two customer segments. Action 1 appeals to one segment, but is highly unappealing to the other. For action 2, the situation is reversed. Action 3 is not ideal for either segment, but is also not disliked by either. When personalization is inappropriate or costly, action 3 may be the preferred communal option. 

The next example does not align with our formulation, because we take the prior distribution to be non-Gaussian. Similar issues can arise with a Gaussian prior, but its unbounded nature always allows for a nonzero -- even if very small -- chance that the mainstream action is better even for a specific segment. We omit analytical calculations of this more intricate case, since Example \ref{ex:less_personalized} seems already to capture the main intuition.  

\begin{example}[A mainstream action]\label{ex:less_personalized}
	Consider a problem with $k=3$ arms and context set $\Xc = \{e_1, e_2 \}\subset \mathbb{R}^2$. The population distribution $\Dc_{\rm pop}$ is uniform over $\Xc$ and $(X_t)_{t\in \mathbb{N}}$ are drawn i.i.d. from $\Dc_{\rm pop}$. For the first two arms ($i\in [2]$) and $x\in[2]$, it holds almost surely that 
	\[ 
	\theta^{(i)}_x = \ind(x=i) - \ind(x\neq i).  
	\]
	The third arm ($i=3$), is insensitive to context, with $\theta_1^{(3)} = \theta_2^{(3)} =U$ where $U\sim {\rm Uniform}[0,1]$. Rewards are noiseless, with $R_{t,I_t}= r_{\theta}(I_t, X_t)=\langle \theta^{(I_t)},X_t\rangle$. Hence, if $X_t = e_1$, then $R_{t,I_t} = \theta^{(I_t)}_1$; otherwise $R_{t,I_t} = \theta^{(I_t)}_2$. Observations are not subject to delay (i.e. $L=1$).
\end{example}
The next lemma formalizes that contextual Thompson sampling, which selects an action according to the posterior probability it is the optimal action for the current context, has simple regret that does not vanish even as the horizon grows. The same result applies to appropriate contextual versions of UCB. The simple reason is that action 3 is never sampled, because it does not maximize the reward in either context. This means no information about $\theta^{(3)}$ is gathered and the decision-maker cannot determine whether action 3 is the best arm to select. If the goal is to identify the best policy within a restricted class (i.e. those that select the same arm, irrespective of context), the exploration algorithm needs to be designed so that it gathers the right information for this task. The proof follows from this argument and is omitted for brevity. At a high level, contextual TS fails here because it does not reflect the true decision-objective. 

\begin{lemma}[Failure of contextual TS]
\label{lem:contextual_ts}
Consider Example \ref{ex:less_personalized}. 
	Contextual TS at time $t$ chooses an arm $I_t$ such that for each $i\in [k]$, 
	$
	\Prob(I_t = i \mid H_t,X_t) =  \Prob\left( \argmax_{j\in[k]} r_{\theta}(j, X_t ) =i \mid H_t, X_t  \right)
	$.
	There is an absolute numerical constant $c>0$ such that for all $T\in \mathbb{N}$, $	\E\left[ \Delta_{\rm post}  \right] \geq c.$ 
\end{lemma}

\section{Proof of Theorem \ref{thm:main_result}}

We begin by restating the theorem. 
\utilitarianRegret*

The proof is broken into several parts. 

\subsection{Proof of \eqref{eq:post_exp_regret_bound}}
The more delicate result is \eqref{eq:within_exp_regret_bound}, with \eqref{eq:post_exp_regret_bound} following essentially as a corollary.  Notice that the right hand side of \eqref{eq:post_exp_regret_bound} matches the right hand side of \eqref{eq:within_exp_regret_bound} if we could set $t=T+L$. 
\begin{proof}[Argument deriving \eqref{eq:post_exp_regret_bound} from \eqref{eq:within_exp_regret_bound}]
	Recall that DTS's decision at time $t$ does not depend on the context at time $t$ or even contexts in the past $L-1$ periods (see \eqref{eq:context_indep_probs}).
Let $\tilde{H}_t \triangleq (X_{1:(t-L)}, I_{1:(t-L)}, R_{1:(t-L)})$ be the effective history used by DTS at time $t\in [T]$ (where $\tilde{H}_t=\emptyset$ for $t\in [L]$).
With some abuse of notation, extend the definition of $\tilde{H}_t$ for $t\in \{T+1, \ldots, T+L\}$ as $\tilde{H}_t = (X_{1:(t-L)}, I_{1:(t-L)},  R_{1:(t-L)} )$ 
Recall that for all  $t\in [T]$, the definition of DTS is that $\Prob\left(I_t = i \mid \tilde{H}_t\right) = \Prob\left(I^* = i \mid \tilde{H}_t\right)$. Extend this definition for $t\in \{T+1, \ldots, T+L\}$.

Define the greedy decision based on $\tilde{H}_t$ at time $t\in[T+L]$ by 
	\[ 
	\tilde{I}^*_t \in \argmax_{i \in [k]} \E\left[r_{\theta}(i) \mid \tilde{H}_t \right].
	\]
	Recall that $I_{\rm post}$ is the arm chosen by DTS at the end of the experiment. Since $H_{\rm post} = (X_{1:T}, I_{1:T}, R_{1:T}) = \tilde{H}_{T+L}$, we have $I_{\rm post} = \tilde{I}_{T+L}^*$

	Now, for $t\in [T+L]$, define the two performance measures 
	\[
	\Delta_t^{\rm explore}= r_{\theta}(I^*) - r_{\theta}(I_t)  \quad\text{and}\quad  \Delta_{t}^{\rm greedy} = r_{\theta}(I^*) - r_{\theta}\left(\tilde{I}^*_t\right).
	\]
 	The expected regret of the greedy decision is always smaller:
 	\begin{align}\label{eq:greedy_is_better}
 		\E\left[\Delta_{t}^{\rm greedy}\right] =  \E\left[ \E\left[ r_{\theta}(I^*) - r_{\theta}\left(\tilde{I}^*_t\right) | \tilde{H}_t \right] \right] \leq   \E\left[ \E\left[ r_{\theta}(I^*) - r_{\theta}(I_t) | \tilde{H}_t \right] \right] =  \E\left[\Delta_{t}^{\rm explore}\right].
 	\end{align}
  Here to simplify this argument, assume $X_{1:T}$ is an arbitrary deterministic sequence, equal to some fixed $x_{1:T}$ almost surely. Since it is deterministic, we do not need to condition on it in expectations. 
	 	
	A careful reading of the proof\footnote{It is not proper style to cite a proof rather than a result. In this case, the modification is quite simple, though. Follow the exact same steps, but interpret $t$ as possibly falling in the range $\{T+1, \ldots, T+L \}$. } of \eqref{eq:within_exp_regret_bound} reveals that it applies to bound 
	\[
	\E\left[\Delta_{t}^{\rm explore}\right] \leq  \sqrt{ \frac{ 2\cdot \iota \cdot k \cdot \log(k) }{ {\rm Precision}\left(x_{1 : (t-L)}\right) } }, \label{eq:tmp_inequality}
	\]
	even for $t \in \{T+1, \ldots, T+L\}$.  
 Note that that expected within-experiment regret is $\E[\Delta_t] = \E\left[ \Delta_t^{\rm explore}\right]$ and expected post-experiment regret is $\E[\Delta_{\rm post}]= \E\left[\Delta_{T+L}^{\rm greedy}\right]$. 
 Then picking $t=T+L$ and combining this with \eqref{eq:greedy_is_better} yields  \eqref{eq:post_exp_regret_bound}. 
\end{proof}

\subsection{Proof of Proposition \ref{prop:regret_to_estimation}}\label{subsec:proof_of_regret_to_estimation}
The first key to establishing Theorem \ref{thm:main_result} is Proposition \ref{prop:regret_to_estimation}, restated below. This reduces the problem of controlling the utilitarian regret to the problem of controlling the expected posterior variance of the optimal arm.  Recall that $s_{t,i} = \sqrt{{\rm Var}(r_{\theta}(i) \mid H_t)}$. 
\regretFromVariance*
\begin{proof}
In this proof, we avoid writing conditional expectations by letting $X_{1:T} = x_{1:T} \in \mathcal{X}^T$ with probability 1 for some arbitrary sequence $x_{1:T}$. 

We focus on proving the bound on $\E[\Delta_t]$. Define $Z_i = r_{\theta}(i)$ to be the uncertain population performance of arm $i$,  $m_{t,i}=\E\left[ Z_i \mid H_t \right]$ to be its posterior mean,  and $s_{t,i}^2={\rm Var}\left(  Z_i \mid H_t\right)$ to be its posterior variance. The notation $Z_i$ and $m_{t,i}$ is used only in this proof. Note that $Z_{i}\mid H_t \sim N\left(m_{t,i},s_{t,i}^2 \right)$. Take $Z = (Z_1, \ldots, Z_k)$ to be the vector. Under DTS, $I_t$ is a sample from the posterior, i.e. $\Prob(I_t=i \mid H_t) = \Prob(I^*=i \mid H_t)$ but $I_t$ is independent of $(Z_1,\ldots, Z_k)$ conditioned on $H_t$. We let $\I_{H_t}(Y_1; Y_2)$ denote the mutual information between random variables $Y_1$ and $Y_2$ under the distribution $\Prob\left((Y_1, Y_2)\in \cdot \mid H_t \right)$. This is random, due to its dependence on the history. Taking expectations yields the usual definition of mutual information, with $\E\left[\I_{H_t}\left( Y_1; Y_2 \right) \right] = \I(Y_1; Y_2 \mid H_t)$. This notation is used in this proof alone. 
	
 	We have, 
	\begin{align*}
		\E\left[ \Delta_{t}\right] = \E \left[ Z_{I^*} - Z_{ I_t} \right] &= \E \left[Z_{I^*}  - \E\left[ Z_{ I_t } \mid H_t  \right]  \right] \\ 
		&=\E \left[Z_{I^*}  - \E\left[ m_{t, I_t} \mid H_t  \right]  \right]\\
		&\overset{(a)}{=}\E \left[Z_{I^*}  - \E\left[ m_{t, I^*} \mid H_t  \right]  \right]\\
		&= \E \left[ \E \left[Z_{I^*}  - m_{t, I^*}  \mid H_t   \right]\right] \\
		&\overset{(b)}{\leq} \E \left[ \sqrt{  \E\left[ s^2_{t,I^*}  \mid H_t\right] }  \sqrt{ 2 \I_{H_t}\left( I^* ; Z \right)  } \right] \\
		&\overset{(c)}{\leq} \sqrt{ \E \left[  \E\left[ s^2_{t,I^*}  \mid H_t\right] \right] }  \sqrt{2\E\left[ \I_{H_t}\left( I^* ; Z \right)  \right]} \\
		&\overset{(d)}{=} \sqrt{  \E\left[ s^2_{t,I^*} \right] }  \sqrt{2\I\left( I^* ; Z  \mid H_t\right)} \\
		&\leq \sqrt{  \E\left[ s^2_{t,I^*} \right] }  \sqrt{2\ent\left( I^* \mid H_t\right)}.
	\end{align*}
	Early steps of the proof use the tower property of conditional expectation. Step $(a)$ is crucial and uses that fact that $I_t$ and $I^*$ have the same distribution conditioned on $H_t$ and that the vector $(m_{t,1},\ldots, m_{t,k})$ is nonrandom conditioned on $H_t$ (formally is measurable with respect to the sigma-algebra $H_t$ generates). Step $(b)$ applies Proposition 8 of \cite{russo2019much}, which is stated below. Step $(c)$ applies the H\"{o}lder inequality, step $(d)$ applies the tower property of conditional expectation, and the final step uses that entropy bounds mutual information. The proposition uses the coarse upper bound $\ent\left( I^* \mid H_t\right)\leq \log(k)$ to simplify the presentation.  The bound on $\E[\Delta_t \mid H_t]$ follows from $(c)$ using that $\I_{H_t}\left( I^* ; Z \right) \leq \log(k)$. 
\end{proof}
\begin{lemma}[Proposition 8 of \cite{russo2019much}]
	Consider a random vector $Z \in \mathbb{R}^n$  and a random index $I \in [n]$. Suppose that for each $i\in[n]$, $Z_i$ has mean $\mu_i$ and the distribution of $Z_i - \mu_i$ is sub-Gaussian with variance proxy $\sigma_i^2$. Then 
	\[
	\left|\E\left[Z_I - \mu_I \right]\right| \leq \sqrt{ \E\left[ \sigma_I^2 \right]  } \sqrt{2 \I\left( Z; I \right) }.
	\]
\end{lemma}
In the setting of the above lemma, a standard sub-Gaussian maximal inequality would bound the largest deviation of $Z_i$ from its mean as $\E\left[\max_{i\in [k]}| Z_i - \mu_i| \right] \leq (\max_{i\in[k] } \sigma_i) \sqrt{2 \log(n)}$. For our purposes,  the lemma offers a critical improvement because it depends only on the variance at the likely realizations of $I$. A second improvement, which is the focus of the discussion in \cite{russo2019much}, is that the mutual information term $\I\left( Z; I \right)$ could be much smaller than $\log(n)$.

\subsection{Optionally sampled matrix-valued processes}
One part of our proof (namely, Lemma \ref{lem:matrix_skipping_corollary}) relies on a  new result on optionally sampled matrix-valued processes. 
Stated in the abstract form below, one can view the positive definite matrix $V_{\ell}$ as generalized `reward' or `value' and $Z_{\ell}$ as a (randomized) decision of whether to collect that value. 
The result bounds the impact of randomization on the reward accrued.

Let $\mathbb{S}^d$ denote the set of symmetric $d\times d$ square matrices and $\mathbb{S}^d_+\subset \mathbb{S}^d$ denote the set of symmetric positive semidefinite matrices. 
\begin{restatable}[Optionally sampled matrix-valued  process]{proposition}{MatrixSkipping}\label{prop:matrix_concentration} 
	Consider a deterministic sequence of positive semidefinite matrices $V_1, V_2, \ldots \in \mathbb{S}^d_{+}$ satisfying $\sup_{t\in\mathbb{N}} \lambda_{\max}(V_t) \leq 1$,
	and a  random process $(Z_{t})_{t \in \mathbb{N}}$ taking values in $\{0,1\}$ that is adapted to some filtration $(\mathcal{F}_t)_{t\in \mathbb{N}_0 }$. 
	For $n\in \mathbb{N}$, define 
	\[ 
	S_{n} = \sum_{t=1}^{n} Z_t V_t  \quad  \text{and} \quad \tilde{S}_{n} = \sum_{t=1}^{n}\Prob(Z_t =1 \mid \mathcal{F}_{t-1}) V_t. 
	\] 
	Then, for any $\delta>0$, with probability exceeding $1-\delta$, 
	\[
	S_{n}  \succeq \underbrace{\left(3-e  \right)}_{\approx 0.28}  \tilde{S}_{n} -\log\left(\frac{d}{\delta}\right) I, \quad \forall n\in \mathbb{N}.
	\]
\end{restatable}
\begin{proof}
	See Section \ref{sec:app_matrix_optional_sampling} for a complete proof.  
	The analysis builds on\footnote{Direct application of that paper establishes a scalar inequality of the form
		$\lambda_{\max}\left( \tilde{S}_{n} - S_{n}  \right)  \leq c \lambda_{\max}\left( \tilde{S}_{n} \right) + \log\left(\frac{d}{\delta}\right)$.
		For our purposes Proposition \ref{prop:matrix_concentration} offers a critical improvement. It is able to provide a meaningful bound on $u^\top S_n u$ even for  directions $u\in \mathbb{R}^d$ for which $u^\top \tilde{S}_n u$ is extremely small.} 
	a beautiful theory of the concentration of matrix-valued martingales by \cite{tropp2011freedman}.
\end{proof}

\subsection{Introducing a smoothed observation model}\label{subsec:smoothed_model}
Our goal is to establish a regret bound by bounding $\E\left[s^2_{t,I^*}\right]$. 
As a first step toward this, we introduce a `smoothed' observation model as a device in the analysis. 
In this model, arms can be played fractionally; When the algorithm picks an effort allocation $(p_{t,1}, \ldots, p_{t,k})$, it observes in response noisy reward signals $\left(\tilde{R}_{t,1}, \ldots, \tilde{R}_{t,k}\right)$ where the standard deviation of $\tilde{R}_{t,i}$ is $\frac{\sigma}{p_{t,i}}$. 
The notation $\tilde{\Sigma}_t$, $\tilde{\Sigma}_{t,i}$ and $\tilde{s}^2_{t,i}$ is used to denote posterior (co)variances under this smoothed model. 

\begin{definition}[Smoothed observation model]   Define $\tilde{R}_{t,i} = r_{\theta}(i, X_t) + \frac{{W}_{t,i}}{p_{t,i}}$ so that $\tilde{R}_{t,i} \mid (H_t, p_{t}, \theta, X_t) \sim N\left( r_{\theta}(i, X_t) \, , \, \frac{\sigma^2}{p_{t,i}^2} \right)$. 
Set
	\begin{align*}
		\tilde{\Sigma}_t &= {\rm Cov}\left[\theta \mid  \left( X_{\ell}, (p_{\ell,j} , \tilde{R}_{\ell,j})_{j \in [k]}    \right)_{\ell\in [t-L]} \right], \\
		\tilde{\Sigma}_{t,i} &= {\rm Cov}\left[\theta^{(i)} \mid  \left( X_{\ell}, (p_{\ell,j}, \tilde{R}_{\ell,j})_{j \in [k]}    \right)_{\ell\in [t-L]} \right], \\
		\tilde{s}^2_{t,i} &= {\rm Var}\left[ \langle \theta^{(i)}, x_{\rm pop}\rangle \mid \left( X_{\ell}, (p_{\ell,j} , \tilde{R}_{\ell,j})_{j \in [k]}    \right)_{\ell\in [t-L]}  \right].
	\end{align*} 
\end{definition}
(\emph{As a warning, the notation $\tilde{s}_{t,i}$ means something different in Subsection \ref{subsec:unaware_algo}, where it is used to define a heuristic algorithm.} )
Posterior variances under the smoothed model are known functions of the chosen arm propensities and the context sequence. The next fact illustrates this for $\tilde{\Sigma}_t$. Define $\phi(x,i) = (0, \ldots, 0, x_1,\ldots, x_d, 0, \ldots,0) \in \mathbb{R}^{dk}$ to be the concatenation of $k$ subvectors of size $d$, where the $i$th subvector is $x$. Other quantities of interest can be derived from $\tilde{\Sigma}_t$. For instance, $\tilde{s}^2_{t,i} = \phi(x_{\rm pop}, i)^\top \tilde{\Sigma}_t  \phi(x_{\rm pop}, i)$. 
\begin{fact}\label{fact:smoothed_covariance}
	 $\tilde{\Sigma}_t$ obeys the formula
	\[
	\tilde{\Sigma}_t = \left(  \Sigma_1^{-1 } + \sigma^{-2} \sum_{\ell=1}^{t-L} \sum_{i=1}^{k} p_{\ell,i} \phi(X_\ell,i) \phi(X_\ell,i)^\top \right)^{-1}
	\]
	whereas 
	\[
	\Sigma_t = \left(  \Sigma_1^{-1 } + \sigma^{-2} \sum_{\ell=1}^{t-L} \phi(X_\ell,I_\ell) \phi(X_\ell, I_{\ell})^\top \right)^{-1}  =  \left(  \Sigma_1^{-1 } + \sigma^{-2} \sum_{\ell=1}^{t-L} \sum_{i=1}^{k}     \ind(I_{\ell}=i) \phi(X_\ell,i) \phi(X_\ell,i)^\top \right)^{-1}.
	\]
\end{fact}

The next result allows us to rigorously use the evolution of the posterior variance in the smoothed model to study the evolution of posterior covariance in the true model. It follows by applying Proposition \ref{prop:matrix_concentration} to our problem. 
\begin{lemma}\label{lem:matrix_skipping_corollary}
	For any $\delta>0$,
	\[
	\Prob\left(   \Sigma_{t}^{-1}  \succeq    (3-e) \tilde{\Sigma}_t^{-1}    - \sigma^{-2}\log\left(\frac{dk}{\delta}\right)I \quad \forall t \geq L   \mid \theta, X_{1:T} \right) \geq 1-\delta.  
	\]
\end{lemma}
\begin{proof}[Proof of Lemma \ref{lem:matrix_skipping_corollary}]
	Most of the analysis uses precision matrices, rather than covariance matrices. 
	Write the posterior precision matrix $\Sigma_t^{-1}$ in the form. 
        \[
	\Sigma_t^{-1} =  \Sigma_1^{-1} + \sigma^{-2}\begin{pmatrix} 
		\sum_{\ell=1}^{t-L} \ind\{I_\ell =1\} X_\ell X_\ell^\top & \dots  & 0\\
		\vdots & \ddots & \vdots\\
		0 & \dots  & \sum_{\ell=1}^{t-L} \ind\{I_\ell =k\} X_\ell X_\ell^\top 
	\end{pmatrix}  \triangleq  \Sigma_1^{-1} + \sigma^{-2} S_t. 
	\]
	The  posterior precision matrix in the smoothed observation model  $\tilde{\Sigma}^{-1}_t \in  \mathbb{R}^{dk\times dk}$ is defined by 
	\[
	\tilde{\Sigma}_t^{-1} =  \Sigma_1^{-1} + \sigma^{-2}\begin{pmatrix} 
		\sum_{\ell=1}^{t-L}  p_{\ell,1} X_\ell X_\ell^\top & \dots  & 0\\
		\vdots & \ddots & \vdots\\
		0 & \dots  & \sum_{\ell=1}^{t-L}  p_{\ell,k} X_\ell X_\ell^\top 
	\end{pmatrix}    \triangleq  \Sigma_1^{-1} + \sigma^{-2} \tilde{S}_t.
	\]
	For a fixed $i\in[k]$, we apply Proposition \ref{prop:matrix_concentration} with $V_\ell = X_{\ell} X_{\ell}^\top$ and $Z_{\ell}=\ind(I_\ell=i)$, and $\mathcal{F}_{\ell-1}  =\sigma(H_\ell)$ taken to be the sigma algebra generated by the history. Proposition \ref{prop:matrix_concentration}  applies, since  $\lambda_{\max}(V_\ell) =  \| X_{\ell} \|_2^2  \leq 1$ where the first equality is Fact \ref{fact:rank-one matrix} and the norm bound is an assumption in the problem formulation.  Observe that $p_{\ell,i} = \Prob(Z_{\ell}=i \mid \mathcal{F}_{\ell-1})$. Hence, for any $\delta'>0$, with probability exceeding $1-\delta'$, 
	\[
	\sum_{\ell=1}^{t-L} \ind\{I_\ell =i\} X_\ell X_\ell^\top \succeq (3-e) \left(\sum_{\ell=1}^{t-L} p_{\ell,i} X_\ell X_\ell^\top\right)  - \log\left(\frac{d}{\delta'}\right) I, \quad\forall t\geq L.
	\]    
 Taking $\delta = \frac{\delta'}{k}$ and applying a union bound, we have that with probability exceeding $1-\delta$,
	\begin{equation*}
		S_t  \succeq (3-e) \tilde{S}_t    - \log\left(\frac{dk}{\delta}\right) I, \quad \forall t \geq L,
	\end{equation*}
 where $S_t$ and $\tilde{S}_t$ are defined earlier in this proof.
	Then on the event that $S_t  \succeq (3-e) \tilde{S}_t    - \log\left(\frac{dk}{\delta}\right) I $, we have  
	\begin{align*}
		\Sigma_{t}^{-1} =  \Sigma_{1}^{-1} + \sigma^{-2} S_t   &\succeq  \Sigma_{1}^{-1} + (3-e) \sigma^{-2}\tilde{S}_t    - \sigma^{-2}\log\left(\frac{dk}{\delta}\right) I  \\
  &= \Sigma_{1}^{-1} + (3-e) \left(\tilde{\Sigma}_t^{-1} - \Sigma_{1}^{-1}\right)   -  \sigma^{-2}\log\left(\frac{dk}{\delta}\right)I\\
  &= (e-2)\Sigma_{1}^{-1} + (3-e) \tilde{\Sigma}_t^{-1} - \sigma^{-2}\log\left(\frac{dk}{\delta}\right)I\\
		& \succeq    (3-e) \tilde{\Sigma}_t^{-1}    - \sigma^{-2}\log\left(\frac{dk}{\delta}\right)I.
	\end{align*}
This completes the proof.
\end{proof}

An adaptation of the high probability bound above a bound in expectation, here stated in terms of the scalar quantities $s^2_{t,i}$ and $ \tilde{s}^2_{t,i}$ that are needed in the analysis. The proof follows by a messy calculation. 
\begin{corollary}\label{corr:concentration_ineq_scalar} For any $t\geq L$, 
	\[
	\E\left[  s^2_{t,i} \mid \theta,  X_{1:T} \right] \leq \iota \cdot \E\left[ \tilde{s}^2_{t,i} \mid \theta,  X_{1:T} \right], \quad\forall i\in [k].
	\]
In particular,
	\[
	\E\left[  s^2_{t,I^*} \mid \theta, X_{1:T} \right] \leq \iota \cdot \E\left[ \tilde{s}^2_{t,I^*} \mid \theta, X_{1:T} \right]. 
	\]
\end{corollary}
\begin{proof}
		The first step is to prove an inequality of the form $\Sigma_{t} \preceq c_{\delta} \tilde{\Sigma}_t$, which holds with high probability. In particular we prove that  for any $\delta>0$, conditioned on $X_{1:T}$ and $\theta$, with probability exceeding $1-\delta$, the following inequality holds simultaneously for every $t \geq L$: 
		\begin{equation}\label{eq:multiplicative_ratio}
			\Sigma_{t} \preceq c_\delta  \tilde{\Sigma }_{t} \quad \text{where} \quad c_\delta = 8\cdot\max\left\{ \sigma^{-2} \cdot \lambda_{\max}(\Sigma_1)\cdot \log\left(\frac{dk}{\delta}\right)   \, , \, 1  \right\}.
		\end{equation}		
		We know that $\Sigma_{t}^{-1} \succeq \Sigma_{1}^{-1}$. Combining this with Lemma \ref{lem:matrix_skipping_corollary} implies that for any arbitrary unit vector $u$, 
		\[
		u^\top \Sigma_{t}^{-1} u  \geq  \max\left\{ \lambda_{\min}\left(\Sigma_1^{-1}\right),   \left(3-e  \right)  u^\top \tilde{\Sigma }_{t}^{-1} u   - \sigma^{-2}\log\left(\frac{dk}{\delta}\right) \right\}. 
		\]
		If $ \left(3-e  \right)  u^\top \tilde{ \Sigma }_{t}^{-1} u  \geq  2\sigma^{-2}\log\left(\frac{dk}{\delta}\right)$, we have $u^\top \Sigma_{t}^{-1} u   \geq \frac{3-e}{2} u^\top \tilde{ \Sigma}_{t}^{-1} u .$
		On the other hand, if $\left(3-e  \right)  u^\top \tilde{\Sigma }_{t}^{-1} u  \leq  2\sigma^{-2}\log\left(\frac{dk}{\delta}\right)$, we have   
		\[
		u^\top \Sigma_{t}^{-1} u  \geq   \lambda_{\min}\left(\Sigma_1^{-1}\right) = \frac{\lambda_{\min}\left(\Sigma_1^{-1}\right) }{ \sigma^{-2}\log\left(\frac{dk}{\delta}\right) } \cdot  \sigma^{-2}\log\left(\frac{dk}{\delta}\right) \geq   \frac{\lambda_{\min}\left(\Sigma_1^{-1}\right) }{ \sigma^{-2}\log\left(\frac{dk}{\delta}\right) } \cdot \frac{3-e}{2} \cdot  u^\top \tilde{ \Sigma }_{t}^{-1} u .
		\] 
		In either case we have that for an arbitrary unit vector $u$, 
		\[
		u^\top \Sigma_{t}^{-1} u  \geq c_1   u^\top \tilde{ \Sigma }_{t}^{-1} u  \quad \text{where} \quad  c_1= \min\left\{ \frac{\lambda_{\min}\left(\Sigma_1^{-1}\right) }{ \sigma^{-2}\log\left(\frac{dk}{\delta}\right) }  \, , \, 1 \right\} \cdot \frac{3-e}{2}.
		\]
		We can simplify the expression using that $\frac{2}{3-e} <8$. Viewing this as a relation of the form $\Sigma_{t}^{-1} \preceq \frac{1}{c_1}\tilde{\Sigma}_t^{-1} \preceq c_\delta \tilde{\Sigma}_t^{-1}$ yields the claim \eqref{eq:multiplicative_ratio}.
		
		Let $\chi_{\delta}$ be the event that \eqref{eq:multiplicative_ratio} holds for all $t\geq L$. We also have the almost sure bounds, $\Sigma_{t} \preceq \Sigma_{1}$ and $\tilde{\Sigma}_{t}^{-1} \preceq \Sigma_{1}^{-1} + \sigma^{-2}(t-L) I$, (which holds since $\lambda_{ \max }(X_\ell X_\ell^\top)= \|X_\ell\|_2^2 \leq 1$ by Fact \ref{fact:rank-one matrix}).  We have that for every $\delta>0$
		\begin{align*}
			\E\left[ \Sigma_{t} \mid \theta, X_{1:T} \right] &=   c_{\delta }\E\left[\tilde{ \Sigma}_{t} \chi_{\delta}  \mid  \theta, X_{1:T}\right] +\E\left[ \Sigma_{t} (1-\chi_{\delta}) \mid  \theta, X_{1:T}\right]  \\
			&\preceq  c_{\delta } \E\left[\tilde{ \Sigma}_{t} \chi_{\delta} \mid  \theta, X_{1:T} \right] + \E\left[ \Sigma_1 (1-\chi_{\delta}) \mid  \theta, X_{1:T} \right] \\
			&\preceq c_{\delta } \E\left[\tilde{ \Sigma}_{t} \mid  \theta, X_{1:T}\right] + \delta \Sigma_1 \\
			&\preceq \E\left[\tilde{ \Sigma}_{t} \mid  \theta, X_{1:T} \right] \left(  c_{\delta }+ \delta \frac{ \lambda_{\max}\left( \Sigma_1\right)}{  \lambda_{\min}\left( \E\left[ \tilde{ \Sigma}_{t} \mid  \theta, X_{1:T} \right] \right)  }  \right) \\
			&= \E\left[\tilde{ \Sigma}_{t} \mid  \theta, X_{1:T} \right] \left( c_{\delta }+ \delta \lambda_{\max}\left( \Sigma_1\right) \lambda_{ \max }\left( \left(\E\left[ \tilde{ \Sigma}_{t} \mid  \theta, X_{1:T} \right]\right)^{-1} \right)  \right) \\
			&\preceq \E\left[\tilde{ \Sigma}_{t} \mid  \theta, X_{1:T} \right] \left(  c_{\delta }+ \delta \lambda_{\max}\left( \Sigma_1\right) \lambda_{ \max }\left( \Sigma_{1}^{-1} + \sigma^{-2}(t-L) I   \right)  \right) \\
			&=   \E\left[\tilde{ \Sigma}_{t} \mid  \theta, X_{1:T} \right] \left(  c_{\delta }+ \delta \lambda_{\max}\left( \Sigma_1\right) \left[\lambda_{ \max }\left( \Sigma_{1}^{-1} \right) + \sigma^{-2}(t-L)\right] \right).
		\end{align*}
		Hence, 
		\[
		\E\left[ \Sigma_{t} \mid  \theta, X_{1:T} \right] \preceq (c_{\delta^*} +1) \E\left[\tilde{ \Sigma}_{t} \mid  \theta, X_{1:T} \right] \quad \text{where}\quad \delta^*= \left(\lambda_{\max}\left( \Sigma_1\right) \left[\lambda_{ \max }\left( \Sigma_{1}^{-1} \right) + \sigma^{-2}(t-L)\right]\right)^{-1}.
		\]
		Now, 
		\begin{align*}
			c_{\delta^*}+1 &= 8\cdot \max\left\{\sigma^{-2}\cdot \lambda_{\max}(\Sigma_1)\cdot \log\left(\frac{dk}{\delta^*}\right)   \, , \, 1  \right\} +1 \\
			&=    \max\left\{8\sigma^{-2} \cdot \lambda_{\max}(\Sigma_1) \cdot \log\left( d k \lambda_{\max}\left( \Sigma_1\right) \left[\lambda_{ \max }\left( \Sigma_{1}^{-1} \right) + \sigma^{-2}(t-L)\right]\right)  +1 \, , \, 9 \right\} \\
			&\leq \max\left\{8\sigma^{-2} \cdot \lambda_{\max}(\Sigma_1) \cdot \log\left( d k \lambda_{\max}\left( \Sigma_1\right) \left[\lambda_{ \max }\left( \Sigma_{1}^{-1} \right) + \sigma^{-2}T\right]\right)  + 1 \, , \, 9  \right\} \\
			&\triangleq \iota.
		\end{align*}
		
		Fix $i\in[k]$ and use again the notation $\phi(x,i) = (0, \ldots, 0, x, 0, \ldots,0) \in \mathbb{R}^{dk}$ to be the concatenation of $k$ subvectors of size $d$, where the $i$-th subvector is $x$. Then $\tilde{s}^2_{t,i} = \phi(x_{\rm pop}, i)^\top \tilde{\Sigma}_t  \phi(x_{\rm pop}, i)$ and $s^2_{t,i} = \phi(x_{\rm pop}, i)^\top \Sigma_t  \phi(x_{\rm pop}, i)$. 
  We have 
		\begin{align*}
		 \E\left[s^2_{t,i} \mid  \theta, X_{1:T} \right] &= \E\left[\phi(x_{\rm pop},i)^\top \Sigma_{t}\phi(x_{\rm pop},i)   \mid  \theta, X_{1:T} \right] \\
		 &= \phi(x_{\rm pop},i)^\top \E\left[\Sigma_{t}  \mid  \theta, X_{1:T} \right] \phi(x_{\rm pop},i) \\
		 &\leq \iota \cdot \phi(x_{\rm pop},i)^\top \E\left[\tilde{\Sigma}_{t}  \mid  \theta, X_{1:T} \right] \phi(x_{\rm pop},i)\\ 
		 &= \iota \cdot \E\left[\phi(x_{\rm pop},i)^\top \tilde{\Sigma}_{t}\phi(x_{\rm pop},i)   \mid  \theta, X_{1:T} \right] \\
		 &= \iota \cdot \E\left[\tilde{s}^2_{t,i} \mid  \theta, X_{1:T} \right].
		\end{align*}	
 Since $I^*$ is non-random conditioned on $\theta$, we also have
 \[
\E\left[  s^2_{t,I^*} \mid \theta, X_{1:T} \right] \leq \iota \cdot \E\left[ \tilde{s}^2_{t,I^*} \mid \theta, X_{1:T} \right]. 
\]

\end{proof}

\subsection{Bounding the posterior precision by \emph{attainable} precision}\label{subsec:bounding_precision}

We prove the following result, which applies to DTS, since the condition belw holds under DTS, as shown in Proposition \ref{prop:prob_score_of_DTS}.
\begin{restatable}[]{proposition}{DTSVarianceBound}\label{prop:DTS_varaince_bound} If  $\E\left[  \frac{\ind(I^*=i)}{p_{t,i}}  \mid X_{1: T }  \right] \leq 1$ for each $t \in [T]$ and $i\in[k]$, then for any $t\geq L$,
	\[ 
	\E\left[ s_{t,I^*}^2 \mid X_{1:T} \right] \leq   \iota \cdot  \frac{k}{ {\rm Precision}\left(X_{1:(t-L)}\right)}.  
	\]
\end{restatable}

Taking expectations of the inequality for $I^*$ in Corollary \ref{corr:concentration_ineq_scalar} and using the tower property of conditional expectations yields,  $\E\left[s^2_{t,I^*} \mid  X_{1:T}\right] \leq \iota \cdot \E\left[ \tilde{s}^2_{t,I^*}\mid  X_{1:T} \right]$. 
Hence it suffices to prove
\begin{equation}\label{eq:smoothed_DTS_variance_bound}
	\E\left[\tilde{s}^2_{t,I^*} \mid X_{1:T}   \right] \leq  \frac{k}{ {\rm Precision}\left(X_{1:(t-L)}\right)}.
\end{equation}
Our main goal in this section is to prove \eqref{eq:smoothed_DTS_variance_bound} holds when arms are sampled according to DTS.

\subsubsection*{Preliminaries: matrix convex combinations.} Let $\mathbb{S}^n$ denote the set of symmetric $n\times n$ square matrices, and let $\mathbb{S}^n_+\subset \mathbb{S}^n$ denote the set of symmetric positive semidefinite matrices. A scalar function $f: \mathbb{R} \to \mathbb{R}$ can be extended to a function on symmetric matrices as follows. For any symmetric matrix $A \in \mathbb{S}_n$, one can write $A=\sum_{i=1}^{n} \lambda_i u_i u_i^\top$ where each $\lambda_i$ is a real eigenvalue and $u_i$ is the corresponding eigenvector. By defining $f(A) = \sum_{i=1}^{n} f(\lambda_i) u_i u_i^\top$, we have extended $f$ to a function mapping from $\mathbb{S}_n$ to $\mathbb{S}_n$. A function $f$ is said to be monotone increasing on the space of positive semidefinite matrices if for $A,B \in \mathbb{S}^n_{+}$, $A\preceq B$ implies $f(A) \preceq f(B)$ and monotone decreasing if this implies $f(A) \succeq f(B)$. A function $f$ is said to be operator convex on the space of positive definite matrices if for any $A,B \in \mathbb{S}^n_{+}$ and scalar $\lambda\in [0,1]$,  $f(\gamma A+(1-\gamma)B)  \preceq \gamma f(A) + (1-\gamma) f(B)$. For our purposes, a key fact is that the inverse function $f(A)= A^{-1}$ is convex and monotone decreasing.

To prove Proposition \ref{prop:DTS_varaince_bound}, we need to leverage  a generalization of Jensen's inequality that applies to matrix convex combinations. The following definitions can be found in \cite{tropp2015introduction}. 
\begin{definition}[Definition 8.5.1 in \cite{tropp2015introduction} -- Matrix Convex Combination]
	Let $B_1, B_2$ be Hermitian matrices (i.e. self-adjoint matrices). If $A_1^\top A_1 + A_2^\top A_2=I$, then the Hermitian matrix
	$A_1^\top B_1 A_1 + A_2^\top B_2 A_2$ is called a matrix convex combination of $B_1$ and $B_2$. 
\end{definition} 
The next result in Theorem 8.5.2 in \cite{tropp2015introduction} and a self-contained proof is given there. It provides a deep generalization of Jensen's inequality for operator convex functions, extending to a situation where the weights are matrices rather than scalars. 
\begin{lemma}[Theorem 8.5.2 in \cite{tropp2015introduction} -- Operator Jensen Inequality]\label{lem:operator_jensen}
	Let $f$ be an operator convex on the space of symmetric positive semidefinite matrices $\mathbb{S}_+^n$. Let $B_1, B_2 \in \mathbb{S}_+^n$. If $A_1^\top A_1 + A_2^\top A_2=I$ then, 
	\[
	f\left(A_1^\top B_1 A_1 + A_2^\top B_2 A_2    \right) \preceq A_1^\top f(B_1) A_1 + A_2^\top f(B_2) A_2.
	\] 
\end{lemma}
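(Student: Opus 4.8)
The plan is to prove the statement by a dilation argument that reduces the matrix-weighted combination to an ordinary convex combination, to which the definition of operator convexity (with scalar weight $1/2$) applies directly. First I would package the two summands into block form: set $B = \begin{pmatrix} B_1 & 0 \\ 0 & B_2 \end{pmatrix} \in \mathbb{S}^{2n}_+$ and $A = \begin{pmatrix} A_1 \\ A_2 \end{pmatrix}$, a $2n \times n$ matrix. The hypothesis $A_1^\top A_1 + A_2^\top A_2 = I$ says exactly that $A^\top A = I_n$, i.e. $A$ is an isometry. Because $f$ acts spectrally and $B$ is block diagonal, $f(B) = \begin{pmatrix} f(B_1) & 0 \\ 0 & f(B_2) \end{pmatrix}$, so that $A^\top B A = A_1^\top B_1 A_1 + A_2^\top B_2 A_2$ and $A^\top f(B) A = A_1^\top f(B_1) A_1 + A_2^\top f(B_2) A_2$. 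Thus it suffices to prove the isometry form $f(A^\top B A) \preceq A^\top f(B) A$.

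For the isometry form I would introduce the rank-$n$ orthogonal projection $P = A A^\top$ onto the range of $A$, its complement $Q = I - P$, and the symmetry $R = P - Q = 2P - I$, which satisfies $R^\top = R$ and $R^2 = I$ and is therefore unitary. The key identity is that the pinching $PBP + QBQ$ --- the block-diagonal part of $B$ relative to $\mathrm{range}(P) \oplus \mathrm{range}(Q)$ --- is the average of two unitary conjugations: a short computation using $R = 2P-I$ gives $\tfrac{1}{2}(B + RBR) = PBP + QBQ$. Applying operator convexity to this genuine $\tfrac12$--$\tfrac12$ combination yields $f(PBP + QBQ) \preceq \tfrac12 f(B) + \tfrac12 f(RBR)$, and since spectral functions commute with unitary conjugation, $f(RBR) = R f(B) R$, so the right-hand side equals $P f(B) P + Q f(B) Q$. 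Finally I would conjugate this matrix inequality by $A$ (which preserves $\preceq$), using $A^\top A = I$ and $QA = 0$ to check that $A^\top (PBP + QBQ) A = A^\top B A$ and, because the pinched operator is block diagonal and $A$ is a unitary from $\mathbb{R}^n$ onto $\mathrm{range}(P)$, that $A^\top f(PBP+QBQ) A = f(A^\top B A)$; the identical computation with $B$ replaced by $f(B)$ gives $A^\top (P f(B) P + Q f(B) Q) A = A^\top f(B) A$. Combining these produces $f(A^\top B A) \preceq A^\top f(B) A$.

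I expect the main obstacle to be the bookkeeping around the spectral identity $A^\top f(PBP + QBQ) A = f(A^\top B A)$: one must argue carefully that applying $f$ to the block-diagonal operator and then transporting the $P$-block back to $\mathbb{R}^n$ via the unitary $A$ coincides with first transporting $B$ to $A^\top B A$ and then applying $f$. This is where the fact that $A$ restricts to a unitary from $\mathbb{R}^n$ onto $\mathrm{range}(P)$, together with $f$ being defined spectrally, does the real work; the pinching-as-unitary-average identity is clever but mechanical once $R$ is in hand, and everything else is routine manipulation of the projections and the hypothesis $A^\top A = I$.
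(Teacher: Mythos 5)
Your proof is correct, but note that the paper itself does not prove this lemma: it simply cites Theorem 8.5.2 of \cite{tropp2015introduction}, remarking that a self-contained proof is given there. So the relevant comparison is with Tropp's argument, and your proposal is essentially that standard argument. Tropp likewise passes to the block-diagonal dilation $\mathrm{diag}(B_1,B_2)$ and the $2n\times n$ isometry obtained by stacking $A_1$ on $A_2$, and reduces everything to a pinching identity; the main cosmetic difference is that he extends the isometry to a full $2n\times 2n$ unitary $W$ and pinches with the coordinate symmetry $\mathrm{diag}(I,-I)$, whereas you pinch directly with the intrinsic symmetry $R = 2AA^\top - I$ built from the range projection $P = AA^\top$. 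The two executions are interchangeable: yours avoids extending $A$ to a unitary, but in exchange requires exactly the spectral bookkeeping you flagged, namely $A^\top f(PBP+QBQ)A = f(A^\top B A)$, which in the coordinate-block version is replaced by the more transparent fact that $f$ of a block-diagonal matrix is the block-diagonal matrix of the $f$'s. Your verification of that step is sound: $QA = 0$ annihilates the range-$Q$ spectral data, and since $A$ is a unitary from $\mathbb{R}^n$ onto $\mathrm{range}(P)$, the eigenpairs of $PBP$ on $\mathrm{range}(P)$ transport exactly to those of $A^\top B A$. One caveat worth making explicit: your convexity step applies the definition to the $2n\times 2n$ matrices $B$ and $RBR$, so you need $f$ to be operator convex at dimension $2n$, not merely at dimension $n$ as the paper's fixed-dimension definition literally provides (matrix convexity of order $n$ does not in general imply order $2n$). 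Under the standard, dimension-free notion of operator convexity --- which is what Tropp uses, and which the paper's only application, $f(M)=M^{-1}$, satisfies --- this is harmless, but the hypothesis should be read that way for your argument to go through.
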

By induction, the lemma can be generalized to situations with more than two pairs of matrices. 

\subsubsection*{Using inverse propensity weights to analyze the evolution of posterior. }

The notation $V$ in Lemma \ref{lem:IPW-posterior-bound} is used only to simplify this lemma statement and is not used again in this paper. Observe that if the action selection is not randomized, and satisfies  $p_{\ell, i_\ell}=1$ for each $\ell\in[t-L]$, then  $\tilde{\Sigma}_{t} = {\rm Cov}(\theta \mid R_{1, i_1}, \ldots R_{t-L, i_{t-L}}, X_1, \ldots, X_{t-L}) $ and the bound in Lemma \ref{lem:IPW-posterior-bound} holds with equality. 
\begin{lemma}[Inverse-propensity weighted posterior evolution]\label{lem:IPW-posterior-bound}
Fix any sequence of arms $i_1, \ldots, i_{t-L}$. Then, with probability 1,
	\[
	\tilde{\Sigma}_{t}  \preceq V \left( \Sigma_{1}^{-1}+ \sigma^{-2}\sum_{\ell=1}^{t-L} \frac{ \phi(X_\ell, i_\ell)\phi(X_\ell, i_\ell)^\top }{ p_{\ell, i_\ell}}  \right) V,
	\]
	where 
	\begin{align*}
	V = {\rm Cov}\left(\theta \mid R_{1, i_1}, \ldots R_{t-L, i_{t-L}}, X_1, \ldots, X_{t-L}\right) =
	    \left( \Sigma_{1}^{-1}+ \sigma^{-2}\sum_{\ell=1}^{t-L} \phi(X_\ell, i_\ell)\phi(X_\ell, i_\ell)^\top   \right)^{-1}.
	\end{align*}
\end{lemma}
\begin{proof}
First observe that
\[
\tilde{\Sigma}_t = 
		\left(  \Sigma_1^{-1 } + \sigma^{-2} \sum_{\ell=1}^{t-L} \sum_{i=1}^{k} p_{\ell,i} \phi(X_\ell,i) \phi(X_\ell,i)^\top \right)^{-1} \\
		\preceq  \left(  \Sigma_1^{-1 } + \sigma^{-2} \sum_{\ell=1}^{t-L}   p_{\ell,i_\ell} \phi(X_\ell,i_\ell) \phi(X_\ell,i_\ell)^\top \right)^{-1}.
\]
Since $i_1, \ldots, i_{t-L}$ are fixed, drop them from notation and write $p_{\ell}= p_{\ell,i_\ell} = \Prob(I_\ell=i_\ell\mid H_\ell)$. Set $B_\ell = \sigma^{-2} \phi(X_\ell,i_\ell) \phi(X_\ell,i_\ell)^\top$. For notational convenience, set $B_{0}=\Sigma_{1}^{-1}$ and $p_{0}=1$. Then $V= \left(\sum_{\ell=0}^{t-L} B_{\ell}\right)^{-1}$, and the above inequality becomes
	\begin{align*}
		\tilde{\Sigma}_t
		& \preceq      \left(\sum_{\ell=0}^{t-L} p_{\ell} B_{\ell}\right)^{-1} \\
		&= V^{1/2}\left[ V^{1/2}     \left(\sum_{\ell=0}^{t-L} p_{\ell} B_{\ell}\right)  V^{1/2}  \right]^{-1}   V^{1/2}\\
		&=V^{1/2}\left[  V^{1/2}     \left(\sum_{\ell=0}^{t-L} B_{\ell}^{1/2} (p_{\ell} I) B_{\ell}^{1/2}\right)  V^{1/2}  \right]^{-1}   V^{1/2} \\
		&=V^{1/2}\left[   \sum_{\ell=0}^{t-L}  \left(V^{1/2}  B_{\ell}^{1/2}\right) (p_{\ell} I) \left( B_{\ell}^{1/2}  V^{1/2} \right) \right]^{-1}   V^{1/2} \\
		&\preceq V^{1/2}\left[   \sum_{\ell=0}^{t-L}  \left(V^{1/2}  B_{\ell}^{1/2}\right) (p_{\ell} I)^{-1} \left( B_{\ell}^{1/2}  V^{1/2} \right) \right]    V^{1/2} \\
		&= V \left(  \sum_{\ell=0}^{t-L}  \frac{B_{\ell}}{p_{\ell} }   \right)    V,
	\end{align*}
	where the last inequality applies the operator Jensen inequality in Lemma \ref{lem:operator_jensen}, using that 
	\[
	\sum_{\ell= 0}^{t-L}  \left(V^{1/2}  B_{\ell}^{1/2}\right)  \left(B_{\ell}^{1/2} V^{1/2}   \right) = V^{1/2} \left(\sum_{\ell=0}^{t-L} B_{\ell} \right) V^{1/2} = V^{1/2} V^{-1} V^{1/2} = I. 
	\]
\end{proof}

\subsubsection*{Completing the proof Proposition \ref{prop:DTS_varaince_bound}.}
Now we specialize this result to proof Proposition \ref{prop:DTS_varaince_bound}. 
\begin{proof}
	To start, we have 
	\begin{align*}
	\E\left[ s^2_{t,I^*}  \mid X_{1:T}  \right] = \E \left[ 	\E\left[ s^2_{t,I^*}  \mid \theta, X_{1:T}  \right]  \mid X_{1:T} \right] 
	&\leq  \E \left[ \iota	\cdot  \E\left[ \tilde{s}^2_{t,I^*}  \mid \theta, X_{1:T}  \right]  \mid X_{1:T} \right] \\ &=   \iota	\cdot  \E\left[ \tilde{s}^2_{t,I^*}  \mid  X_{1:T}  \right],
	\end{align*}
	where the inequality applies Corollary \ref{corr:concentration_ineq_scalar}, using that $I^*$ is non-random conditioned on $\theta$.  The remainder of the proof bounds $\tilde{s}^2_{t,I^*}$. 
	
 For $i\in[k]$, take 
	\[
	\phi_i =  \phi(x_{\rm pop},i)= (0,\ldots, 0, X_{{\rm pop},1},\ldots, X_{{\rm pop},d}, 0, \ldots,0) \in \mathbb{R}^{k \cdot d}
	\]
	to be a vector whose $i$-th subvector is $x_{\rm pop}$, and then $r_{\theta}(i) = \phi_i^\top \theta$. We can write 
	\begin{align*}
		\tilde{s}^2_{t,i} 
		&= {\rm Var}\left[ r_{\theta}(i) \mid  \left( X_{\ell}, (p_{\ell,j} , \tilde{R}_{\ell,j})_{j \in [k]}    \right)_{\ell\in [t-L]} \right]\\ 
		&= {\rm Var}\left[ \phi_i^\top \theta  \mid  \left( X_{\ell}, (p_{\ell,j} , \tilde{R}_{\ell,j})_{j \in [k]}    \right)_{\ell\in [t-L]} \right]\\
		& =  \phi_i^\top   {\rm Cov}\left[ \theta  \mid \left( X_{\ell}, (p_{\ell,j} , \tilde{R}_{\ell,j})_{j \in [k]}    \right)_{\ell\in [t-L]} \right]  \phi_i\\
		&= \phi_i^\top \tilde{\Sigma}_{t}  \phi_i.
	\end{align*}	
	Now set  
	\[
	V_i ={\rm Cov}(\theta \mid R_{1, i}, \ldots R_{t-L, i}, X_1, \ldots, X_{t-L}) =
	\left( \Sigma_{1}^{-1}+ \sigma^{-2}\sum_{\ell=1}^{t-L} \phi(X_\ell, i)\phi(X_\ell, i)^\top   \right)^{-1}.
	\]
	 Applying Lemma \ref{lem:IPW-posterior-bound} with $i_\ell = i$ for each $\ell\in[t-L]$ gives, 
	 \[ 
	 \tilde{s}^2_{t,i}  \leq \phi_i^\top V_i \left( \Sigma_{1}^{-1}+ \sigma^{-2}\sum_{\ell=1}^{t-L} \frac{ \phi(X_\ell, i)\phi(X_\ell, i)^\top }{ p_{\ell, i}}  \right) V_i \phi_i.
	 \]
	 Next,
	 \begin{align*}
	 	\tilde{s}^2_{t,I^*} = \sum_{i=1}^{k} \ind(I^*=i) \tilde{s}^2_{t,i}  &\leq  \sum_{i=1}^{k} \ind(I^*=i) \phi_i^\top V_i \left( \Sigma_{1}^{-1}+ \sigma^{-2}\sum_{\ell=1}^{t-L} \frac{ \phi(X_\ell, i)\phi(X_\ell, i)^\top }{ p_{\ell, i}}  \right) V_i \phi_i \\
	 	&\leq \sum_{i=1}^{k}  \phi_i^\top V_i \left( \Sigma_{1}^{-1}+ \sigma^{-2}\sum_{\ell=1}^{t-L} \frac{\ind(I^*=i)}{ p_{\ell, i}}   \phi(X, i)\phi(X, i)^\top \right) V_i \phi_i. 
	 \end{align*}
	Since $\E\left[\frac{\ind(I^*=i)}{ p_{\ell, i}} \mid X_{1:T}\right]  \leq 1$ for any $\ell$ and $i$, we have
	 \begin{align*}
	 	\E\left[ \tilde{s}^2_{t,I^*} \mid X_{1:T}\right]  
   \leq  \sum_{i=1}^{k}  \phi_i^\top V_i \left( \Sigma_{1}^{-1}+ \sigma^{-2}\sum_{\ell=1}^{t-L}  \phi(X_\ell, i)\phi(X_\ell, i)^\top \right) V_i \phi_i 
	 	&= \sum_{i=1}^{k}  \phi_i^\top V_i  V_i^{-1}  V_i \phi_i \\
	 	&= \sum_{i=1}^{k}  \phi_i^\top V_i \phi_i.
	 \end{align*}
 Recalling that $V_i = {\rm Cov}(\theta \mid R_{1, i}, \ldots R_{t-L, i}, X_1, \ldots, X_{t-L})$ and that $r_{\theta}(i) = \phi_i^\top \theta$, gives
 \begin{align*}
 		\sum_{i=1}^{k}  \phi_i^\top V_i \phi_i  	 	&= \sum_{i=1}^{k} \phi_i^\top {\rm Cov}(\theta \mid R_{1, i}, \ldots R_{t-L, i}, X_1, \ldots, X_{t-L}) \phi_i \\
 		&=  \sum_{i=1}^{k} {\rm Var}\left( r_\theta(i) \mid R_{1, i}, \ldots R_{t-L, i}, X_1, \ldots, X_{t-L}\right)\\
 		&\leq k \cdot \max_{i\in [k]} {\rm Var}\left( r_\theta(i) \mid R_{1, i}, \ldots R_{t-L, i}, X_1, \ldots, X_{t-L}\right) \\
 		& = \frac{k}{ {\rm Precision}\left(X_{1:(t-L)}\right) }, 
 \end{align*}
	completing the proof. 
\end{proof}

\section{Matrix-valued Martingales and the proof of Proposition \ref{prop:matrix_concentration}}
\label{sec:app_matrix_optional_sampling}
We begin by restating the result.
\MatrixSkipping*
 We let $\Prob_{t}(\cdot)=\Prob(\cdot \mid \mathcal{F}_{t} )$ and $\E_{t}[\cdot]=\E[\cdot \mid \mathcal{F}_{t} ]$. 
 Set $p_t = \E_{t-1}[Z_t] = \Prob_{t-1}(Z_t=1)$. 
 
 Throughout this proof, we use some specialized notation. 
Define $D_t = (p_t-Z_t) V_t$. We study 
\[ 
A_0 \triangleq 0\in\mathbb{R}^{d\times d} \quad \text{and} \quad  A_{n} \triangleq \tilde{S}_n- S_n  = \sum_{t=1}^{n} D_t,
\]
which is the sum of matrix martingale differences . We will follow \cite{tropp2011freedman} fairly closely. Define $\phi_t(\gamma)\triangleq\log\left( \E_{t-1}\left[ e^{\gamma D_t} \right]\right)$. Then set
\[ 
\Phi_0 \triangleq 0\in\mathbb{R}^{d\times d}  \quad \text{and} \quad \Phi_n(\gamma) \triangleq \sum_{t=1}^{n}\phi_t(\gamma).
\]
Recognize that both $A_0$ and $\Phi_0$ are matrices where all elements equal zero. Here $\Phi_n(\gamma)$ measures the total variability of the process. Our aim is to show that $A_n$ can only be large if $\Phi_n(\gamma)$ is large.

\subsubsection*{A Bernstein-type bound on the cumulants.}
We first recall a random matrix analogue of Bernstein's bound on the moment generating function of bounded random variables.
\begin{lemma}[Lemma 6.7 in \cite{tropp2012user}]
	\label{lem:tropp2012user}
	Suppose $D$ is a random self-adjoint matrix that satisfies
	\[
	\E [D] = 0 \quad \text{and} \quad \Prob\left(\lambda_{\max}(D)\leq 1\right)=1.
	\]
	Then 
	\[ 
	\E \left[e^{\gamma D}\right] \preceq \exp \left\{ (e^{\gamma} - \gamma -1)   \E\left[D^2\right] \right\}, \quad\forall \gamma > 0.
	\]
\end{lemma}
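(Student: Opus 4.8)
The plan is to reduce the matrix statement to a scalar inequality and then lift it through the spectral transfer rule, following the standard template of \cite{tropp2011freedman, tropp2012user}. Throughout I would take $\gamma > 0$, which is the regime relevant to controlling $\lambda_{\max}$ in Lemma \ref{lem:expected_covariance_derandomize} and under which the constant $e^\gamma - \gamma - 1 \geq 0$ has the right sign.

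First I would establish the scalar bound: for every real $x \leq 1$,
\[
e^{\gamma x} \leq 1 + \gamma x + (e^\gamma - \gamma - 1)\, x^2.
\]
The cleanest route is to set $f(x) = (e^{\gamma x} - \gamma x - 1)/x^2$ for $x \neq 0$ and $f(0) = \gamma^2/2$, and to show $f$ is nondecreasing on $\mathbb{R}$; the displayed inequality is then exactly $f(x) \leq f(1) = e^\gamma - \gamma - 1$ for $x \leq 1$. Monotonicity of $f$ reduces, after the substitution $u = \gamma x$ with $\gamma > 0$, to monotonicity of $h(u) = (e^u - u - 1)/u^2$. A short computation shows the sign of $h'$ is governed by $N(u) = e^u(u-2) + u + 2$, and since $N(0) = N'(0) = 0$ and $N''(u) = u e^u$, one finds $N(u)$ and $u^3$ share a sign, so $h' \geq 0$ everywhere. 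I would treat this derivative check as routine and only record the conclusion.

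Next I would transfer this inequality to matrices. Because $\lambda_{\max}(D) \leq 1$ almost surely, every eigenvalue of $D$ lies in $(-\infty, 1]$, which is precisely the range on which the scalar bound holds. Diagonalizing $D$ and comparing eigenvalue-by-eigenvalue (the transfer rule) then yields, almost surely,
\[
e^{\gamma D} \preceq I + \gamma D + (e^\gamma - \gamma - 1)\, D^2.
\]
Taking expectations, using $\E D = 0$, linearity, and monotonicity of $\E[\cdot]$ on the positive semidefinite cone, gives
\[
\E\, e^{\gamma D} \preceq I + (e^\gamma - \gamma - 1)\, \E[D^2].
\]
Finally, since $e^\gamma - \gamma - 1 \geq 0$ and $\E[D^2] \succeq 0$, the matrix $A := (e^\gamma - \gamma - 1)\,\E[D^2]$ is positive semidefinite; applying the transfer rule to the scalar inequality $1 + a \leq e^a$ gives $I + A \preceq e^A$, and chaining this with the previous display completes the argument.

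I would flag that the only genuinely delicate point is the \emph{global} monotonicity of $f$, equivalently that the scalar bound holds for all $x \leq 1$ rather than merely near $x = 1$. This is exactly what allows the proof to use only the one-sided eigenvalue bound $\lambda_{\max}(D) \leq 1$, with no lower bound on the spectrum of $D$; everything else is bookkeeping with the transfer rule and monotonicity of expectation.
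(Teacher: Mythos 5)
Your proof is correct and complete. The paper itself offers no proof of this lemma---it is imported verbatim as Lemma 6.7 of \cite{tropp2012user}---so the right comparison is with Tropp's own argument, and yours is essentially identical to it: global monotonicity of $x\mapsto(e^{\gamma x}-\gamma x-1)/x^2$ (your sign analysis of $N(u)=e^u(u-2)+u+2$ is exactly the standard check), the spectral transfer rule invoked using only the one-sided hypothesis $\lambda_{\max}(D)\leq 1$, expectation against $\E D=0$, and the final lift of $1+a\leq e^a$ to $I+A\preceq e^{A}$ for the positive semidefinite matrix $A=(e^{\gamma}-\gamma-1)\,\E[D^2]$.
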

As a consequence of this, we can bound the sum of cumulants $\Phi_t(\gamma)$ by a simpler quantity that closely mimics $\tilde{S}_n$. 
\begin{lemma}\label{lem:smart_berstein}
	For any $\gamma >0$ and $n\in \mathbb{N} \cup \{0\}$,
	\[
	\Phi_n(\gamma) \preceq  \left( e^{\gamma} - \gamma -1 \right)\sum_{t=1}^{n} p_t V_t = \left( e^{\gamma} - \gamma -1 \right)\tilde{S}_n.
	\]
\end{lemma}
\begin{proof}[Proof of Lemma \ref{lem:smart_berstein}]
	We have  $\lambda_{\max}(D_t)\leq | p_t - Z_t | \lambda_{\max}(V_t) \leq 1$ 
 where the first inequality holds since $V_t$ is positive semidefinite and the last inequality follows from an assumption on the maximum eigenvalue of $V_t$. This allows us to apply the matrix Bernstein inequality above. 
	For notional convenience define $f(\gamma)=e^{\gamma} - \gamma -1$.   By Lemma \ref{lem:tropp2012user}, we have that
	\[
 \phi_t(\gamma) = \log \E_{t-1}\left[ e^{\gamma D_t} \right] \preceq f(\gamma) \E_{t-1}\left[ D_t^2\right].
 \]
 Using this gives, 
	\begin{align*}
		\Phi_{n}(\gamma) \preceq f(\gamma)\sum_{t=1}^{n} \E_{t-1}\left[D_t^2\right] &= f(\gamma)\sum_{t=1}^{n} \E_{t-1}\left[(p_t- Z_t)^2\right] V_t^2\\
		&\preceq f(\gamma)\sum_{t=1}^{n} p_t (1-p_t) V_t\\
		&\preceq f(\gamma)\sum_{t=1}^{n} p_t V_t,   
	\end{align*}
	as desired. To prove the first inequality, recall that all eigenvalues of $V_t \in \mathbb{S}_{+}^{d}$ are non-negative and smaller than 1.
	Write $V_t = \sum_{i=1}^{d} \lambda_i u_i u_i^\top$ in terms of its eigenvalues $\lambda_i \in [0,1]$ and eigenvectors $u_i$.  Then $V_t^2 u_i = V_t (\lambda_i u_i) = \lambda_i^2 u_i$. 
 The matrix $V_t^2$ also has $(u_1, \ldots, u_d)$ as eigenvectors, but with corresponding the smaller corresponding eigenvalues $(\lambda_1^2, \ldots, \lambda_d^2)$. 
\end{proof}

\subsubsection*{An exponential super-martingale.}
Again, our goal is to show that  $A_n$ can only be large if $\Phi_n(\gamma)$ is large. To this end, define 
\[
M_{n}(\gamma) = {\rm tr} \exp(  \gamma A_n  - \Phi_n(\gamma) ), \quad \forall n\in \{0,1,\ldots\},
\]
where ${\rm tr}$ denotes the trace operator. 
We now show that    $M_{n}(\gamma)$ is a  super-martingale, following the proof of Lemma 2.1 of \cite{tropp2011freedman}. We first state a powerful result of \cite{lieb1973convex} and then recall a simple corollary that is stated also in \cite{tropp2011freedman}. 
\begin{Theorem}[Theorem~6 in \cite{lieb1973convex}]  Fix a self-adjoint matrix $H$. The function $A \mapsto {\rm tr} \exp(H+ \log(A))$ is concave on the positive-definite cone. 
\end{Theorem} 
\begin{corollary}[Corollary 1.5 in \cite{tropp2011freedman}]\label{cor:Lieb} 
	Fix a self-adjoint matrix $H$. For a random self-adjoint matrix $X$, 
	\[
	\E \left[ {\rm tr} \exp(H+X) \right] \leq {\rm tr} \exp\left(  H + \log\left( \E e^{X}  \right)    \right).
	\]
\end{corollary}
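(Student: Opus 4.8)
The plan is to derive the corollary from Lieb's theorem by a single application of Jensen's inequality. The key observation is that, since $X$ is self-adjoint, the matrix exponential $e^{X}$ is positive definite with probability one, so we may write $X = \log(e^{X})$ and recognize the inner expression as $H + \log(A)$ evaluated at the random positive-definite matrix $A = e^{X}$. Concretely, I would define the scalar-valued functional $g(A) = {\rm tr}\exp\left(H + \log A\right)$ on the positive-definite cone and rewrite
\[
\E\left[ {\rm tr}\exp(H+X)\right] = \E\left[ {\rm tr}\exp\left(H + \log(e^{X})\right)\right] = \E\left[ g(e^{X})\right].
\]

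By Lieb's theorem, $g$ is concave on the positive-definite cone, which is a convex subset of the space of self-adjoint matrices. Since $e^{X}$ takes values in this cone, Jensen's inequality for concave functionals of a random element of a convex set gives
\[
\E\left[ g(e^{X})\right] \leq g\left( \E[e^{X}]\right) = {\rm tr}\exp\left(H + \log\left(\E e^{X}\right)\right),
\]
which is precisely the claimed bound.

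The only point requiring any care is the validity of Jensen's inequality in this matrix setting. I would justify it by noting that $\E[e^{X}]$ is again positive definite, being an average of positive-definite matrices, so it lies in the domain of $g$; combined with the concavity of $g$ on the convex cone, the finite-dimensional Jensen inequality applies directly. I do not anticipate a genuine obstacle here, since the entire content of the corollary is packaged inside Lieb's theorem, and the derivation amounts only to recognizing the correct concave functional and invoking Jensen. Mild integrability assumptions on $X$ — ensuring the relevant expectations are finite and $\E[e^{X}]$ is well defined and positive definite — are implicit throughout and may be taken for granted.
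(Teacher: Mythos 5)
Your proposal is correct and is essentially the same argument the paper intends: the paper states this corollary directly after Lieb's theorem (following \cite{tropp2011freedman}), and the standard derivation is exactly your substitution $X = \log(e^{X})$ followed by Jensen's inequality applied to the concave functional $A \mapsto {\rm tr}\exp(H + \log A)$ on the positive-definite cone. Nothing is missing; the identification of the right concave functional and the one-line Jensen step is the entire content of the proof.
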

We now conclude that $M_{n}(\gamma)$ is a super-martingale. 

\begin{corollary}\label{cor:supermartingale}
	For each $\gamma>0$, $\{ M_n(\gamma) : n=0, 1, \ldots\}$ is a super-martingale with initial value $M_0(\gamma)=d$. 
\end{corollary}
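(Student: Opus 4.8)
The plan is to verify the initial value by direct computation and then to establish the one-step inequality $\E_{t-1}[M_t(\gamma)] \leq M_{t-1}(\gamma)$ through a conditional application of the Lieb corollary (Corollary \ref{cor:Lieb}). Since $A_0=0$ and $\Phi_0(\gamma)=0$, the initial value is immediate: $M_0(\gamma) = {\rm tr}\exp\{0\} = {\rm tr}(I) = d$, because the matrices act on $\mathbb{R}^d$.

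For the super-martingale property, I would fix $t\geq 1$ and condition on the past. The essential observation is that $A_{t-1}$, $\Phi_{t-1}(\gamma)$, and $\phi_t(\gamma)=\log\E_{t-1}[e^{\gamma D_t}]$ are all measurable with respect to the conditioning $\sigma$-algebra: the first two are functions of $D_1,\dots,D_{t-1}$ only, and the last is by definition a conditional expectation given the past. Using the increments $A_t = A_{t-1}+D_t$ and $\Phi_t(\gamma)=\Phi_{t-1}(\gamma)+\phi_t(\gamma)$, I would introduce the self-adjoint matrix
\[
H := \gamma A_{t-1} - \Phi_{t-1}(\gamma) - \phi_t(\gamma),
\]
which is fixed once we condition on the past, so that $M_t(\gamma) = {\rm tr}\exp\{H + \gamma D_t\}$.

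Applying Corollary \ref{cor:Lieb} with $X=\gamma D_t$, conditionally on the past, then yields
\[
\E_{t-1}\left[M_t(\gamma)\right] = \E_{t-1}\left[{\rm tr}\exp\{H+\gamma D_t\}\right] \leq {\rm tr}\exp\left\{H + \log\E_{t-1}[e^{\gamma D_t}]\right\} = {\rm tr}\exp\{H+\phi_t(\gamma)\}.
\]
By the definition of $H$, the additive $\phi_t(\gamma)$ term cancels, and the right-hand side collapses to ${\rm tr}\exp\{\gamma A_{t-1} - \Phi_{t-1}(\gamma)\} = M_{t-1}(\gamma)$, which is exactly the super-martingale inequality.

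The main subtlety — rather than a genuine obstacle — is the conditional use of Corollary \ref{cor:Lieb}, which as stated treats a deterministic matrix $H$ and an ordinary expectation. Here $H$ is random but measurable with respect to the conditioning $\sigma$-algebra, so the inequality should be invoked pointwise on each realization of the past, with $\E_{t-1}$ understood as expectation over the conditional law of $D_t$ (under which $H$ is a constant matrix). I would also remark that $\phi_t(\gamma)$ is well defined: since $\lambda_{\max}(D_t)\leq 1$, as established in the proof of Lemma \ref{lem:smart_berstein}, the conditional matrix moment generating function $\E_{t-1}[e^{\gamma D_t}]$ is finite and positive definite, so its matrix logarithm exists.
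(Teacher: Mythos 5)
Your proof is correct and follows essentially the same route as the paper: both verify $M_0(\gamma)={\rm tr}\,I=d$ directly and then apply Corollary \ref{cor:Lieb} conditionally with $H=\gamma A_{t-1}-\Phi_{t}(\gamma)$ (your $H$ is the same matrix, just written out via $\Phi_t = \Phi_{t-1}+\phi_t$) and $X=\gamma D_t$, so that the $\phi_t(\gamma)$ term cancels. Your added remarks on the conditional use of the Lieb corollary and the finiteness of $\E_{t-1}[e^{\gamma D_t}]$ are sensible clarifications of points the paper leaves implicit, but they do not change the argument.
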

\begin{proof}[Proof of Corollary \ref{cor:supermartingale}]
	By definition, 
	\[ 
	M_0(\gamma)= {\rm tr}\exp(  \gamma A_0 - \Phi_{0}(\gamma)   ) = {\rm tr}\exp( 0  )= {\rm tr} I = d. 
	\]
	For $n>0$, taking conditional expectations gives
	\begin{align*} 
		\E_{n-1}\left[ M_{n}(\gamma)\right] &= \E_{n-1}\left[  {\rm tr} \exp(  \gamma A_{n-1}  - \Phi_{n}(\gamma) + \gamma D_n  ) \right]\\
		&\leq   {\rm tr}  \exp(  \gamma A_{n-1}  - \Phi_{n}(\gamma) +  \phi_{n}(\gamma)  ) \\
		&=  {\rm tr}  \exp(  \gamma A_{n-1}  - \Phi_{n-1}(\gamma)  )   = M_{n-1}(\gamma),
	\end{align*}
	where the inequality follows by Corollary \ref{cor:Lieb}, using that $\log\left(\E_{n-1}\left[e^{\gamma D_n}\right]\right) = \phi_n(\gamma)$.
 \end{proof}

\subsubsection*{Boundary crossing probabilities.}
Here is where we begin  to deviate from \cite{tropp2011freedman}. The next result gives a boundary that $A_n$ is unlikely to ever cross. The proof applies the same stopping time argument as the proof of one of Doob's martingale inequalities. 
\begin{lemma}\label{lem:line_crossing}
	For any fixed $\delta>0$ and $\gamma>0$, with probability exceeding $1-\delta$,
	\[
	A_{n}  \preceq  \frac{1}{\gamma}  \left[\Phi_{n}(\gamma) + \log\left(\frac{d}{\delta}\right) I\right], \quad \forall n\in \mathbb{N}.
	\]
\end{lemma}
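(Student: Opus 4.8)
The plan is to convert the matrix boundary statement into a scalar boundary-crossing event for the maximum eigenvalue and then run the stopping-time argument underlying Doob's maximal inequality against the supermartingale $M_t(\gamma)$ constructed in Corollary \ref{cor:supermartingale}.

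First I would observe that, after multiplying by $\gamma>0$ and rearranging, the desired inequality $A_t \preceq \frac{1}{\gamma}\left[\Phi_t(\gamma) + \log(d/\delta)I\right]$ is equivalent to $\gamma A_t - \Phi_t(\gamma) \preceq \log(d/\delta)\,I$, which in turn holds if and only if $\lambda_{\max}(\gamma A_t - \Phi_t(\gamma)) \leq \log(d/\delta)$. I would therefore define the stopping time
\[
\tau = \inf\left\{t\geq 1 : \lambda_{\max}\left(\gamma A_t - \Phi_t(\gamma)\right) > \log(d/\delta)\right\},
\]
so that the event that the claimed inequality fails for some $t$ is exactly $\{\tau < \infty\}$. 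It then suffices to show $\Prob(\tau < \infty)\leq\delta$.

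The key link between this crossing event and the supermartingale is the elementary spectral fact that for any symmetric matrix $B$ one has ${\rm tr}\exp(B) \geq \lambda_{\max}(\exp(B)) = \exp(\lambda_{\max}(B))$: the matrix $\exp(B)$ is positive definite, so its trace dominates its largest eigenvalue, while $\lambda_{\max}$ commutes with the monotone map $\exp$ applied spectrally. Applying this with $B = \gamma A_\tau - \Phi_\tau(\gamma)$ shows that on the event $\{\tau \leq t\}$ we have $M_\tau(\gamma) \geq \exp\left(\lambda_{\max}(\gamma A_\tau - \Phi_\tau(\gamma))\right) > d/\delta$.

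Finally I would execute the stopping argument. Since $M_t(\gamma)$ is a nonnegative supermartingale with $M_0(\gamma)=d$ by Corollary \ref{cor:supermartingale}, the stopped process $M_{t\wedge\tau}(\gamma)$ is also a nonnegative supermartingale and satisfies $\E[M_{t\wedge\tau}(\gamma)]\leq M_0(\gamma)=d$. Combining this with the lower bound on the crossing event gives
\[
\frac{d}{\delta}\,\Prob(\tau\leq t) \leq \E\left[M_{t\wedge\tau}(\gamma)\,\ind(\tau\leq t)\right] \leq \E\left[M_{t\wedge\tau}(\gamma)\right]\leq d,
\]
so $\Prob(\tau\leq t)\leq \delta$ for every finite $t$; letting $t\to\infty$ and using continuity of probability from below (the events $\{\tau\leq t\}$ increase to $\{\tau<\infty\}$) yields $\Prob(\tau<\infty)\leq\delta$, which is the claim. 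The main obstacle is conceptual rather than computational: it is the passage from a matrix inequality holding \emph{simultaneously for all $t$} to a single scalar boundary-crossing event, accomplished via $\lambda_{\max}$ and the trace lower bound on $M_\tau(\gamma)$. Once that translation is secured, the remainder is the classical Doob/Ville supermartingale tail bound.
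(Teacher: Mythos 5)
Your proof is correct and follows essentially the same route as the paper's: both translate the simultaneous matrix inequality into the boundary-crossing event $\lambda_{\max}(\gamma A_t - \Phi_t(\gamma)) > \log(d/\delta)$, lower-bound the supermartingale $M_t(\gamma)$ on that event via ${\rm tr}\exp(B) \geq \exp(\lambda_{\max}(B))$, and apply optional stopping to the first-crossing time followed by a limit in the horizon. The only differences are cosmetic (the paper first states a fixed-time Markov bound and then invokes Doob's optional sampling at $\tau \wedge N$, whereas you bound the stopped process directly), so nothing further is needed.
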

\begin{proof}[Proof of Lemma \ref{lem:line_crossing}]
	Fix $\gamma>0$ throughout. Let $y\in\mathbb{R}$. We have
	\begin{align*}
		\Prob\left( \lambda_{\max}\left( \gamma A_{n}  - \Phi_{n}(\gamma)  \right)     \geq  y \right)  = \Prob\left( e^{\lambda_{\max}\left( \gamma A_{n}  - \Phi_{n}(\gamma) \right)}     \geq  e^y \right) &\leq \Prob\left( {\rm tr} \, e^{ \gamma A_{n}  - \Phi_{n}(\gamma) } \geq e^{y}\right) \\
		&\leq e^{-y} \E\left[ {\rm tr} \, e^{ \gamma A_{n}  - \Phi_{n}(\gamma) }  \right] \\ &= e^{-y}\E\left[ M_{n}(\gamma) \right]. 
	\end{align*}
	The same inequalities hold for any bounded stopping time $\tau$, yielding 
	\[ 
	\Prob\left( \lambda_{\max}\left( \gamma A_{\tau}  - \Phi_{\tau }(\gamma)  \right)     \geq  y \right) \leq  e^{-y} \E \left[ M_{\tau}(\gamma)\right].
	\]
	Take $\tau = \inf\{n \in \mathbb{N} : \lambda_{\max}\left( \gamma A_{n}  - \Phi_{n}(\gamma)  \right)     \geq  y \}$, with the convention that $\tau=\infty$ if $ \lambda_{\max}\left( \gamma A_{n}  - \Phi_{t }(\gamma)  \right)     <  y$ for every $n\in \mathbb{N}$. Then, 
	\begin{align*}
	\Prob\left( \exists n \leq N :  \lambda_{\max}\left( \gamma A_{n}  - \Phi_{n}(\gamma)  \right)     \geq  y \right)  
 &=  \Prob\left( \lambda_{\max}\left( \gamma A_{\tau\wedge N}  - \Phi_{\tau \wedge N }(\gamma)  \right) \geq y\right) \\ 
 &\leq   e^{-y}\E \left[ M_{\tau \wedge N}(\gamma)\right]\\
 &\leq e^{-y}d. 
	\end{align*}
	That $\E\left[ M_{\tau \wedge N}(\gamma) \right] \leq d$ uses Corollary \ref{cor:supermartingale} and Doob's optional sampling theorem. Taking $N\to \infty$ and applying the monotone convergence theorem gives, 
	\[
	\Prob\left( \exists n \in \mathbb{N} :  \lambda_{\max}\left( \gamma A_{n}  - \Phi_{n}(\gamma)  \right)     \geq  y \right)     \leq     e^{-y}d.
	\]
	For any $\delta > 0$, we choose $y=\log(d/\delta)$, and then with probability at least $1-\delta$,
	\[ 
	\lambda_{\max}\left( \gamma A_{n}  - \Phi_{n}(\gamma)  \right)   \leq \log\left(\frac{d}{\delta}\right),\quad\forall n\in \mathbb{N}. 
	\]
\end{proof}
Combining our results completes the proof of Proposition \ref{prop:matrix_concentration}.
\begin{proof}[Proof of Proposition \ref{prop:matrix_concentration}]
	Define $f(\gamma)=e^{\gamma} - \gamma -1$. Recall $A_{n} = \tilde{S}_n - S_n$.  By applying Lemmas \ref{lem:line_crossing} and \ref{lem:smart_berstein}, we get that with probability at least $1-\delta$, 
	\begin{align*}
		S_{n} - \tilde{S}_n = -A_n \succeq - \frac{1}{\gamma}  \left[\Phi_{n}(\gamma) + \log\left(\frac{d}{\delta}\right) I\right] \succeq  -\frac{1}{\gamma} \left[f(\gamma)\sum_{t=1}^{n} p_{t} V_t + \log\left(\frac{d}{\delta}\right) I \right] . 
	\end{align*}
	Picking $\gamma=1$, we have 
	\[
	S_{n} - \tilde{S}_n \succeq   (2-e) \sum_{t=1}^{n} p_t V_t  - \log\left(\frac{d}{\delta}\right) I= (2-e) \tilde{S}_{n}  - \log\left(\frac{d}{\delta}\right) I. 
	\]
	Adding $\tilde{S}_n$ to both sides yields the result. 
\end{proof}

\section{Bounds on attainable precision: proof of Lemma \ref{lem:bounds_on_precision}}
In this section, we use $\|\cdot\|$ to denote the spectral norm. First, we restate the claim.

\precisionBounds*

\begin{proof}[Proof of Lemma \ref{lem:bounds_on_precision}]

The definition of precision in \eqref{eq:precision_formula} gives
 \begin{align} 
		\frac{1}{{\rm Precision}(x_{1 : t})}
		&=  \max_{i\in [k]} \,  x_{\rm pop}^\top \left[ {\rm Cov}\left(\theta^{(i)}\right)^{-1} + \sigma^{-2} \sum_{\ell=1}^{t} x_\ell x_\ell^\top      \right]^{-1} x_{\rm pop} \nonumber\\
		&\leq \,   x_{\rm pop}^\top \left[ \lambda_{\min}\left(\Sigma_1^{-1}\right)I + \sigma^{-2} \sum_{\ell=1}^{t} x_\ell x_\ell^\top      \right]^{-1} x_{\rm pop} \label{eq:generic_upper_bound}\\ 
  &=  x_{\rm pop}^\top \left(\sigma^{-2}t \tilde{S}_x\right)^{-1} x_{\rm pop}, \nonumber
	\end{align}
 where the inequality uses Lemma \ref{lem:prior covariance}.
Taking the inverse on both sides yields the generic bound on precision. 

For vanilla bandit (with $d=1$), since $\theta^{(i)}\in\mathbb{R}$ for each $i\in[k]$ and $x_\ell=1=x_{\rm pop}$ for each $\ell\in [t]$, the definition of precision in \eqref{eq:precision_formula} becomes
	\[
	{\rm Precision}(x_{1 : t}) = \min_{i\in [k]}\, {\rm Var}\left(\theta^{(i)}\right)^{-1}+  \sigma^{-2} t = \min_{i\in[k]}\, \Sigma_{1,ii}^{-1} + \sigma^{-2}t,
	\]
and the above generic bound gives
\[
{\rm Precision}(x_{1 : t}) \geq \lambda_{\min}\left(\Sigma_1^{-1}\right) + \sigma^{-2}t.
\]
 
 Next we analyze the setting without empirical distribution shift. By \eqref{eq:generic_upper_bound} and Lemma \ref{lem:no empirical distribution shift},
	\begin{align*} 
		\frac{1}{{\rm Precision}(x_{1 : t})}
		&\leq \,   x_{\rm pop}^\top \left[\lambda_{\min}\left(\Sigma_1^{-1}\right)\cdot I + \sigma^{-2} \sum_{\ell=1}^{t} x_\ell x_\ell^\top      \right]^{-1} x_{\rm pop} \\
		&\leq \, x_{\rm pop}^\top \left[\lambda_{\min}\left(\Sigma_1^{-1}\right)\cdot I + \sigma^{-2} t \cdot x_{\rm pop} x_{\rm pop}^\top      \right]^{-1} x_{\rm pop}.
	\end{align*}	
 Fact \ref{fact:rank-one matrix} implies that $x_{\rm pop} x_{\rm pop}^\top$ only has one non-zero eigenvalue $\|x_{\rm pop}\|_2^2$ with a corresponding eigenvector $\frac{x_{\rm pop}}{\|x_{\rm pop}\|_2}$ (recall that $\|x_{\rm pop}\|_2\neq 0$),  so the eigendecomposition of $x_{\rm pop} x_{\rm pop}^\top$ can be written as
	\[
	x_{\rm pop} x_{\rm pop}^\top = Q\Lambda Q^\top
	\]
	where $\Lambda = {\rm diag}\left(\|x_{\rm pop}\|_2^2,0,\ldots,0\right)\in\mathbb{R}^{d\times d}$ is the diagonal matrix whose diagonal elements are the eigenvalues of $x_{\rm pop} x_{\rm pop}^\top$, and $Q\in\mathbb{R}^{d\times d}$ is a corresponding orthogonal matrix with the first column being $\frac{x_{\rm pop}}{\|x_{\rm pop}\|_2}$.
	Then we have 
	\begin{align*}
	&\lambda_{\min}\left(\Sigma_1^{-1}\right)\cdot I + \sigma^{-2} t\cdot x_{\rm pop} x_{\rm pop}^\top\\
 =& Q\cdot\left[\lambda_{\min}\left(\Sigma_1^{-1}\right)\cdot I\right]\cdot Q^\top + 
 Q\cdot\left[\sigma^{-2}t\cdot\Lambda\right]\cdot Q^\top \\
 =& Q\cdot
{\rm diag}\left(\lambda_{\min}\left(\Sigma_1^{-1}\right) + \sigma^{-2} t \|x_{\rm pop}\|_2^2, \lambda_{\min}\left(\Sigma_1^{-1}\right),\ldots, \lambda_{\min}\left(\Sigma_1^{-1}\right) \right)
 \cdot Q^\top.
	\end{align*}
	Hence,
	\begin{align*}
	&\left[\lambda_{\min}\left(\Sigma_1^{-1}\right)\cdot I + \sigma^{-2} t \cdot x_{\rm pop} x_{\rm pop}^\top\right]^{-1} \\
	=& Q\cdot{\rm diag}\left(\frac{1}{\lambda_{\min}\left(\Sigma_1^{-1}\right) + \sigma^{-2} t \|x_{\rm pop}\|_2^2 }, \frac{1}{\lambda_{\min}\left(\Sigma_1^{-1}\right)},\ldots, \frac{1}{\lambda_{\min}\left(\Sigma_1^{-1}\right)} \right)\cdot Q^\top,
	\end{align*}
	and thus
	\begin{align*}
		&\frac{1}{{\rm Precision}(x_{1 : t})} \\
		\leq& \, x_{\rm pop}^\top \left[\lambda_{\min}\left(\Sigma_1^{-1}\right)\cdot I + \sigma^{-2} t \cdot x_{\rm pop} x_{\rm pop}^\top\right]^{-1} x_{\rm pop} \\
		=& x_{\rm pop}^\top Q\cdot{\rm diag}\left(\frac{1}{\lambda_{\min}\left(\Sigma_1^{-1}\right) + \sigma^{-2} t \|x_{\rm pop}\|_2^2 }, \frac{1}{\lambda_{\min}\left(\Sigma_1^{-1}\right)},\ldots, \frac{1}{\lambda_{\min}\left(\Sigma_1^{-1}\right)} \right)\cdot Q^\top x_{\rm pop}\\
		=& \left(\|x_{\rm pop}\|_2,0,\ldots, 0\right) 
  \begin{pmatrix}
\frac{1}{\lambda_{\min}\left(\Sigma_1^{-1}\right) + \sigma^{-2} t \|x_{\rm pop}\|_2^2} & 0  &\ldots & 0\\
0 & \frac{1}{\lambda_{\min}\left(\Sigma_1^{-1}\right)} &\ldots & 0 \\
\vdots & \vdots &  \ddots & \vdots\\
0 & 0 & \ldots & \frac{1}{\lambda_{\min}\left(\Sigma_1^{-1}\right)}
\end{pmatrix}
  \begin{pmatrix}
	\|x_{\rm pop}\|_2 \\
	0\\
	\vdots\\
	0
    \end{pmatrix}\\
=& \frac{\|x_{\rm pop}\|_2^2}{\lambda_{\min}\left(\Sigma_1^{-1}\right)+\sigma^{-2} t \|x_{\rm pop}\|_2^2}.
	\end{align*}
 where the penultimate equality follows from that $Q\in\mathbb{R}^{d\times d}$ is the orthogonal matrix with the first column being $\frac{x_{\rm pop}}{\|x_{\rm pop}\|_2}$.
	Taking the inverse on both sides gives the lower bound on precision when there is no empirical distribution shift.
	
	Lastly we study the setting with i.i.d.~contexts. Let $\Omega = \E[X_1 X_1^\top]$ and $Z_\ell = X_\ell X_\ell^\top - \Omega$ for $\ell\in[t]$, and we have $\Omega = \mathbb{E}[ X_1 X_1^\top] \succeq c x_{\rm pop} x_{\rm pop}^{\top}$. Then by \eqref{eq:generic_upper_bound},
	\begin{align*} 
		\frac{1}{{\rm Precision}(X_{1 : t})}
		&\leq \,   x_{\rm pop}^\top \left[\lambda_{\min}\left(\Sigma_1^{-1}\right)\cdot I + \sigma^{-2} \sum_{\ell=1}^{t}X_\ell X_\ell^\top     \right]^{-1} x_{\rm pop} \\
		&= \,   x_{\rm pop}^\top \left[\lambda_{\min}\left(\Sigma_1^{-1}\right)\cdot I + \sigma^{-2} \sum_{\ell=1}^{t}Z_\ell + \sigma^{-2}t\cdot \Omega      \right]^{-1} x_{\rm pop} \\
		&\leq \,   x_{\rm pop}^\top \left[\lambda_{\min}\left(\Sigma_1^{-1}\right)\cdot I + \sigma^{-2} \sum_{\ell=1}^{t}Z_\ell + c\cdot\sigma^{-2}t \cdot x_{\rm pop} x_{\rm pop}^\top    \right]^{-1} x_{\rm pop}.
	\end{align*}
	Note that the spectral norm of $Z_\ell$ can be bounded as follows, by the triangle inequality,
	\[
	\| Z_\ell\| \leq \|X_\ell X_\ell^\top\| + \|\E[X_1 X_1^\top]\|\leq \|X_\ell X_\ell^\top\| + \E[\|X_1 X_1^\top\|] = \|X_\ell\|^2_2 + \E\left[\|X_1\|^2_2\right] \leq 2,
	\]
	where the first and second inequalities apply the triangle inequality and Jensen's inequality, respectively; the next equality uses Fact \ref{fact:rank-one matrix}; the last inequality follows from an assumption on the maximum $\ell_2$ norm of context vectors.   
 This implies
	$
	Z_\ell^2 \preceq 4I.
	$
	By the matrix Hoeffding inequality in Lemma \ref{lem:matrix hoeffding}, for $x\geq 0$, with probability at least $1 - d\exp(-x^2/(32t))$
	\[
	\lambda_{\min}\left(\sum_{\ell=1}^tZ_\ell\right) > -x,
	\quad\text{and thus}\quad
	\sum_{\ell=1}^tZ_\ell \succ -xI.
	\]
	Hence, for $x\geq 0$, with probability at least $1 - d\exp(-x^2/(32t))$,
	\begin{align*} 
		&\frac{1}{{\rm Precision}(X_{1 : t})}\\
		\leq& x_{\rm pop}^\top \left[\lambda_{\min}\left(\Sigma_1^{-1}\right) \cdot I + \sigma^{-2} \sum_{\ell=1}^{t}Z_\ell + c\cdot\sigma^{-2}t \cdot x_{\rm pop} x_{\rm pop}^\top    \right]^{-1} x_{\rm pop} \\
		\leq&  x_{\rm pop}^\top \left[\left(\lambda_{\min}\left(\Sigma_1^{-1}\right)-\sigma^{-2}x\right)I + c\cdot\sigma^{-2}t \cdot x_{\rm pop} x_{\rm pop}^\top   \right]^{-1} x_{\rm pop}\\
		=& x_{\rm pop}^\top Q\cdot
  \begin{pmatrix}
\frac{1}{\lambda_{\min}\left(\Sigma_1^{-1}\right) -\sigma^{-2}x + c\cdot\sigma^{-2} t \|x_{\rm pop}\|_2^2} & 0  &\ldots & 0\\
0 & \frac{1}{\lambda_{\min}\left(\Sigma_1^{-1}\right)-\sigma^{-2}x} &\ldots & 0 \\
\vdots & \vdots &  \ddots & \vdots\\
0 & 0 & \ldots & \frac{1}{\lambda_{\min}\left(\Sigma_1^{-1}\right)-\sigma^{-2}x}
\end{pmatrix}
\cdot Q^\top x_{\rm pop}\\
  =& \left(\|x_{\rm pop}\|_2,0,\ldots, 0\right)
  \begin{pmatrix}
\frac{1}{\lambda_{\min}\left(\Sigma_1^{-1}\right) -\sigma^{-2}x + c\cdot\sigma^{-2} t \|x_{\rm pop}\|_2^2} & 0  &\ldots & 0\\
0 & \frac{1}{\lambda_{\min}\left(\Sigma_1^{-1}\right)-\sigma^{-2}x} &\ldots & 0 \\
\vdots & \vdots &  \ddots & \vdots\\
0 & 0 & \ldots & \frac{1}{\lambda_{\min}\left(\Sigma_1^{-1}\right)-\sigma^{-2}x}
\end{pmatrix}
\begin{pmatrix}
			\|x_{\rm pop}\|_2 \\
			0\\
			\vdots\\
			0
		\end{pmatrix}\\
		=& \frac{\|x_{\rm pop}\|_2^2}{\lambda_{\min}
  \left(\Sigma_1^{-1}\right) -\sigma^{-2}x + c\cdot\sigma^{-2} t \|x_{\rm pop}\|_2^2},
	\end{align*}
	where the equalities above follow the same analysis for the setting with no empirical distribution shift.
	Equivalently, for $\delta > 0$, with probability at least $1-\delta$,
	\[
	\frac{1}{{\rm Precision}(X_{1 : t})} \leq \frac{\|x_{\rm pop}\|_2^2}{\lambda_{\min}\left(\Sigma_1^{-1}\right) - 4\sigma^{-2}  \sqrt{2t\log\frac{d}{\delta}} + c\cdot\sigma^{-2} t \|x_{\rm pop}\|_2^2}
	\]
 and taking the inverse on both sides gives
 \[
 {\rm Precision}(X_{1 : t}) \geq \lambda_{\min}\left( \Sigma_1^{-1} \right)\|x_{\rm pop}\|_2^{-2} + c\cdot \sigma^{-2}t - 4\sigma^{-2} \|x_{\rm pop}\|_2^{-2} \sqrt{2t\log\frac{d}{\delta}}.
 \]
\end{proof}

\paragraph{Supporting results.} We introduce several supporting results for the proof of Lemma 
\ref{lem:bounds_on_precision}.
\begin{lemma}
	\label{lem:prior covariance}
	For $i\in[k]$, $\lambda_{\min}\left({\rm Cov}\left(\theta^{(i)}\right)^{-1}\right) \geq \lambda_{\min}\left(\Sigma_1^{-1}\right)$.
\end{lemma}
\begin{proof}
	Fix $i\in[K]$. We prove an equivalent statement: $\lambda_{\max}\left({\rm Cov}\left(\theta^{(i)}\right)\right) \leq \lambda_{\max}\left(\Sigma_1\right)$.
 The $\ell_2$ induced norm (i.e. spectral norm) of a positive semidefinite (and symmetric) matrix equals its largest eigenvalue, so we have
	\begin{align*}
		\lambda_{\max}\left({\rm Cov}\left(\theta^{(i)}\right)\right) 
		&= \max_{x=(x_1,\ldots,x_d)\in\mathbb{R}^d:\|x\|_2=1}x^\top {\rm Cov}\left(\theta^{(i)}\right) x \\
		&=\max_{x\in\mathbb{R}^d:\|x\|_2=1}\phi(x,i)^\top \Sigma_1 \phi(x,i) \\
		&\leq \max_{z\in\mathbb{R}^{d\times k}} z^\top \Sigma_1 z \\
		&= \lambda_{\max}(\Sigma_1).
	\end{align*}
The second equality above holds because
$\phi(x,i) = (0, \ldots, 0,\underbrace{ x_1, \ldots, x_d}_{ i\text{-th subvector} }, 0,\ldots, 0)^\top \in \mathbb{R}^{kd}$  has non-zero entries only in the $i$-th subvector, and then we only need to consider the corresponding submatrix of $\Sigma_1$ when calculating the quadratic term.  
This completes the proof. 
\end{proof}

\begin{lemma}
	\label{lem:no empirical distribution shift}
	For any $t\in\mathbb{N}$,
	\[
	t\sum_{\ell=1}^{t} x_\ell x_\ell^\top \succeq \left(\sum_{\ell=1}^{t} x_\ell\right)\left(\sum_{\ell=1}^{t} x_\ell\right)^\top.
	\]
\end{lemma}
\begin{proof}
	Let $\bar{x} = \frac{1}{t} \sum_{\ell=1}^{t} x_\ell$.
	Then the statement follows from
	\begin{align*}
		\sum_{\ell=1}^{t} x_\ell x_\ell^\top = t \bar{x}\bar{x}^\top + \sum_{\ell=1}^t(x_\ell - \bar{x})(x_\ell - \bar{x})^\top \succeq t\bar{x}\bar{x}^\top.
	\end{align*}
\end{proof}

\begin{fact}
\label{fact:rank-one matrix}
Let $x\in\mathbb{R}^d$. The matrix $x x^\top\in\mathbb{R}^{d\times d}$ has only one potentially non-zero eigenvalue $\|x\|_2^2$ with a corresponding eigenvector $x$. The spectral norm of $x x^\top$, denoted by $\|x x^\top\|$, equals $\|x\|_2^2$.
\end{fact}

\begin{lemma}[Matrix Hoeffding -- Theorem 1.3 in \citep{tropp2012user}]
	\label{lem:matrix hoeffding}
	Consider a finite sequence $\{X_n\}$ of independent, random, self-adjoint matrices with dimension $d$ and a sequence $\{Y_n\}$ of fixed self-adjoint matrices. Assume that each random matrix satisfies
	\[
	\E[X_n] = 0
	\quad\text{and}\quad
	X_n^2\preceq Y_n^2
	\quad\text{almost surely}.
	\]
	Then, for all $x\geq 0$,
	\[
	\Prob\left(\lambda_{\max}\left(\sum_n X_n\right)\geq x\right)
	\leq d\cdot\exp\left(\frac{-x^2}{8\left\|\sum_n Y_n^2\right\|}\right)
        \]
        and\footnote{The inequality below follows from applying the inequality above to $\{-X_n\}$ and $\{Y_n\}$ and using $\Prob\left(\lambda_{\min}\left(\sum_n X_n\right)\leq -x\right) = \Prob\left(\lambda_{\max}\left(\sum_n -X_n\right)\geq x\right)$. See Remark 3.10 (Minimum Eigenvalue) in \citep{tropp2012user}.}
        \[
	\Prob\left(\lambda_{\min}\left(\sum_n X_n\right)\leq -x\right)
	\leq d\cdot\exp\left(\frac{-x^2}{8\left\|\sum_n Y_n^2\right\|}\right).
	\]
\end{lemma}

\end{document}